\def\eqref#1{equation~\ref{#1}}
\def\1{\bm{1}}
\DeclareMathAlphabet{\mathsfit}{\encodingdefault}{\sfdefault}{m}{sl}
\SetMathAlphabet{\mathsfit}{bold}{\encodingdefault}{\sfdefault}{bx}{n}
\DeclareMathOperator*{\argmax}{arg\,max}
\DeclareMathOperator*{\argmin}{arg\,min}
\newcommand{\methodName}{\texttt{GDA-AM}}
\newcommand{\isep}{\mathrel{{.}\,{.}}\nobreak}
\newcommand{\x}{\mathbf{x}}
\newcommand{\bv}{\mathbf{v}}
\newcommand{\y}{\mathbf{y}}
\newcommand{\A}{\mathbf{A}}
\newcommand{\G}{\mathbf{G}}
\newcommand{\w}{\mathbf{w}}
\newcommand{\br}{\mathbf{r}}
\newcommand{\bb}{\mathbf{b}}
\newtheorem{theorem}{Theorem}[section]
\newtheorem{remark}{Remark}[theorem]
\newtheorem{proposition}{Proposition}
\newtheorem{definition}{Definition}
\newtheorem{assumption}{Assumption}
\title{GDA-AM: On the effectiveness of solving minimax optimization via Anderson Acceleration}
\author{Huan He, Shifan Zhao, Yuanzhe Xi, Joyce C Ho \thanks{hhe37, szhao89, yxi26, jho31@emory.edu} \\
Department of Computer Science\\
Emory University\\
Atlanta, GA 30329, USA \\
\And
Yousef Saad \thanks{saad@umn.edu} \\
Department of Computer Science and Engineering \\
University of Minnesota\\
Minneapolis, MN 55455, USA
}
\begin{document}

\maketitle

\begin{abstract}
Many modern machine learning algorithms such as generative adversarial networks (GANs) and adversarial training can be formulated as minimax optimization. Gradient descent ascent (GDA) is the most commonly used algorithm due to its simplicity. However, GDA can converge to non-optimal minimax points. We propose a new minimax optimization framework, \methodName, that views the GDA dynamics as a fixed-point iteration and solves it using Anderson Mixing to converge to the local minimax. It addresses the diverging issue of simultaneous GDA and accelerates the convergence of alternating GDA. We show theoretically that the algorithm can achieve global convergence for bilinear problems under mild conditions. We also empirically show that \methodName~solves a variety of minimax problems and improves adversarial training on several datasets. Codes are available on Github \footnote{\url{https://github.com/hehuannb/GDA-AM}}.
\end{abstract}
\section{Introduction}
\label{sec:intro}
Minimax optimization has received a surge of interest due to its wide range of applications in modern machine learning, such as generative adversarial networks (GAN), adversarial training  and multi-agent reinforcement learning \citep{Goodfellow2014GenerativeAN,DBLP:conf/iclr/MadryMSTV18,Li2019RobustMR}. Formally, given a bivariate function $f(\boldsymbol{x},\boldsymbol{y})$, the objective is to find a stable solution where the players cannot improve their objective, i.e., to find the Nash equilibrium of the underlying game \citep{vonNeumann1944-VONTOG-4}:
\begin{equation}
   \argmin_{x \in \mathcal{X}} \argmax_{y \in \mathcal{Y}} f(\boldsymbol{x},\boldsymbol{y}).
\end{equation}
It is commonplace to use simple algorithms such as gradient descent ascent (GDA) to solve such problems, where both players take a gradient update simultaneously or alternatively. Despite its simplicity, GDA is known to suffer from a generic issue for minimax optimization: it may cycle around a stable point, exhibit divergent behavior, or converge very slowly since it requires very small learning rates \citep{DBLP:conf/iclr/GidelBVVL19, DBLP:conf/iclr/MertikopoulosLZ19}.
Given the widespread usage of gradient-based methods for solving machine learning problems, first-order optimization algorithms to solve minimax problems have gained considerable popularity in the last few years.
Algorithms such as optimistic Gradient Descent Ascent  (OG) \citep{Daskalakis2018TrainingGW, DBLP:conf/iclr/MertikopoulosLZ19} and extra-gradient (EG) \citep{DBLP:conf/iclr/GidelBVVL19} can alleviate the issue of GDA for some problems. Yet, it has been shown that these methods can still diverge or cycle around a stable point \citep{pmlr-v89-adolphs19a, Mazumdar2019OnFL, fr}. {For example, these algorithms even fail to find a local minimax (the set of local minimax is a superset of local Nash \citep{localminmax,DBLP:conf/iclr/WangZB20}) as shown in Figure \ref{fig:1dexample}}. This leads to the following question: \emph{Can we design better algorithms for minimax problems?} We answer this in the affirmative, by introducing \methodName. We cast the GDA dynamics as a fixed-point iteration problem and compute the iterates effectively using an advanced nonlinear extrapolation method. We show that indeed our algorithm has theoretical and empirical guarantees across a broad range of minimax problems, including GANs. 

\begin{figure*}[h]
\begin{subfigure}[b]{0.245\textwidth}
\includegraphics[width=0.99\linewidth]{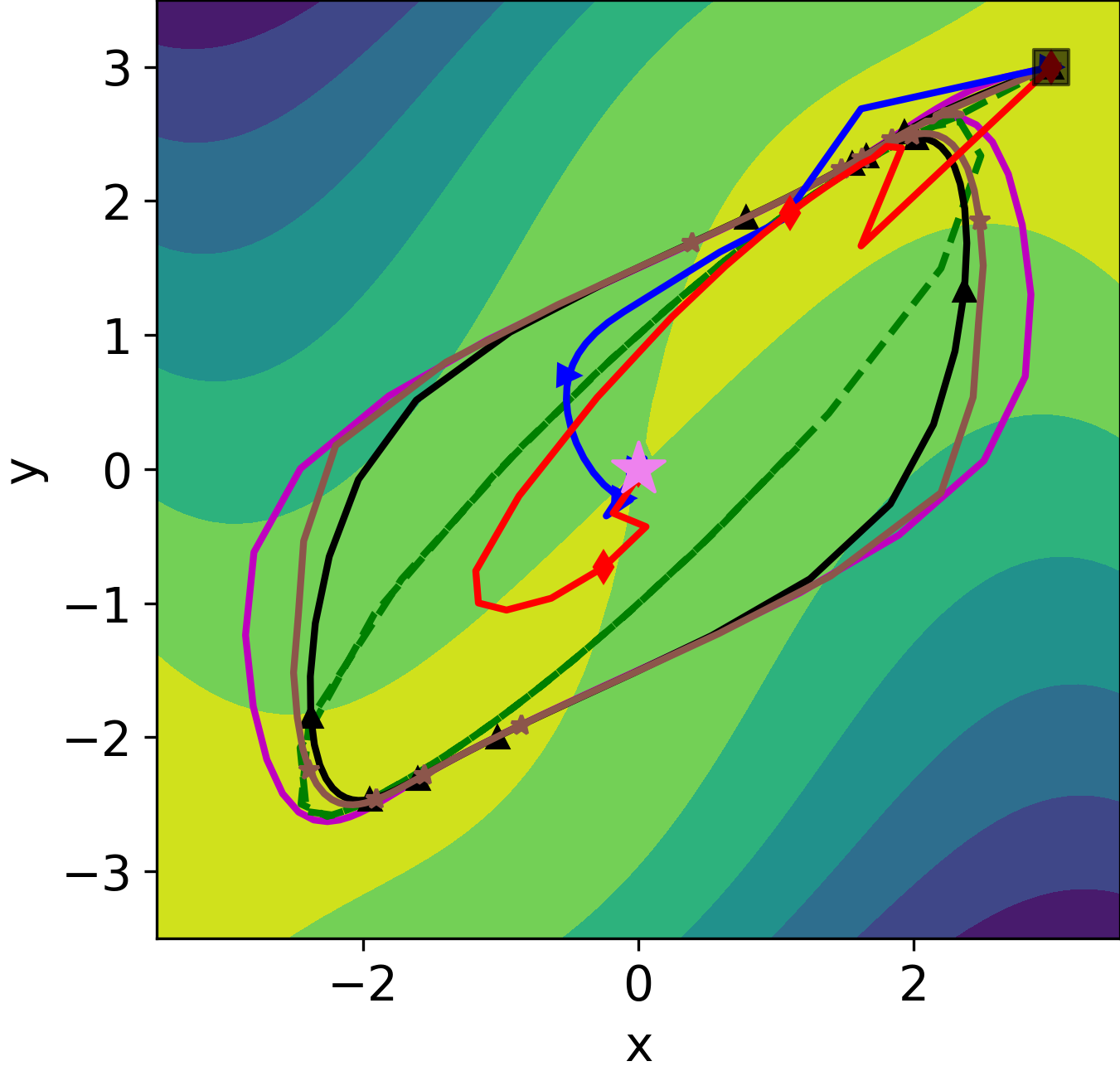}
\subcaption{Cycling Behavoir} 
\label{fig:1db}
\end{subfigure}
\begin{subfigure}[b]{0.26\textwidth}
\includegraphics[width=0.99\linewidth]{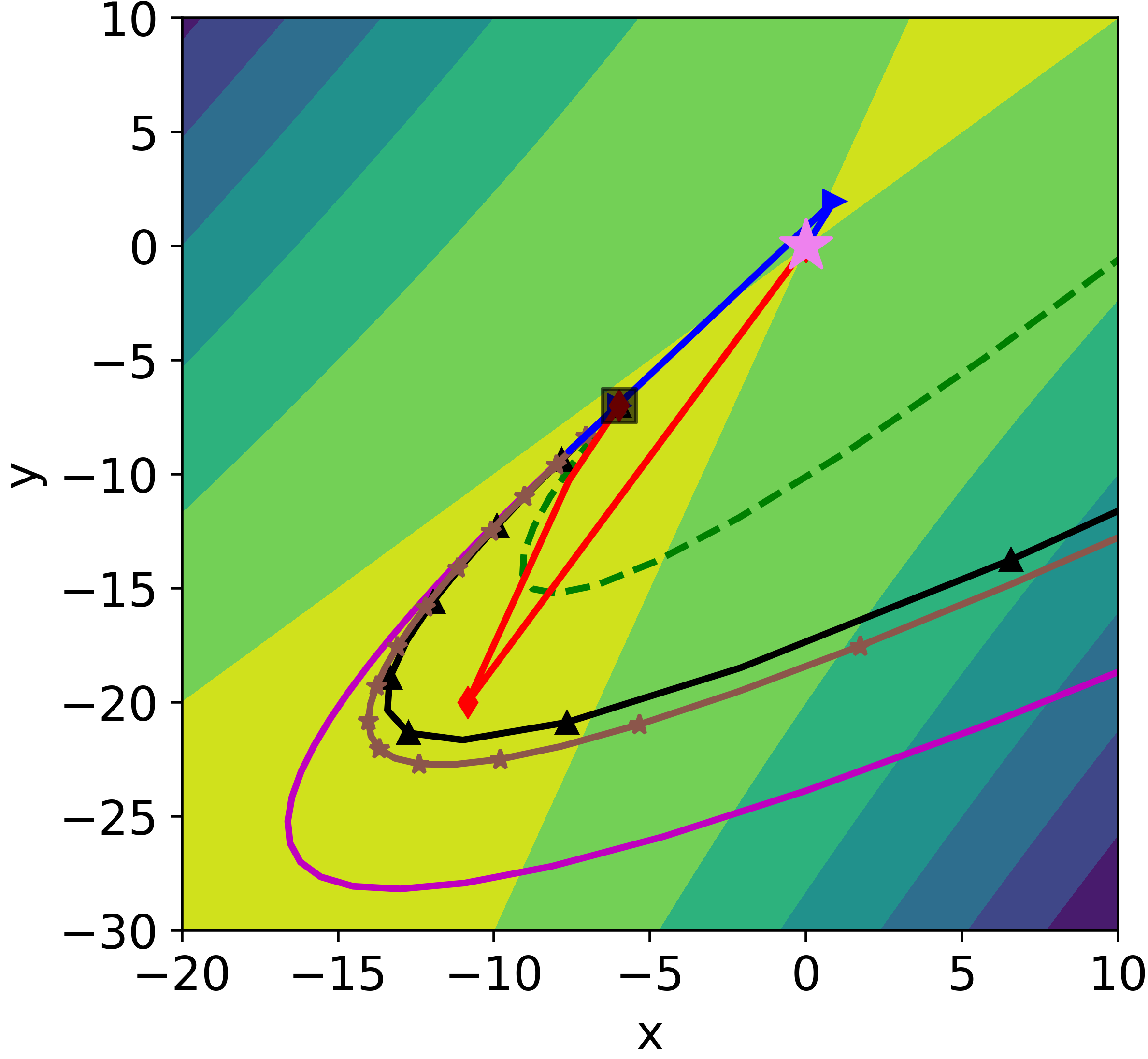}
\subcaption{Diverging Behavoir} 
\label{fig:1da}
\end{subfigure}
\begin{subfigure}[b]{0.41\textwidth}
\includegraphics[width=0.99\linewidth]{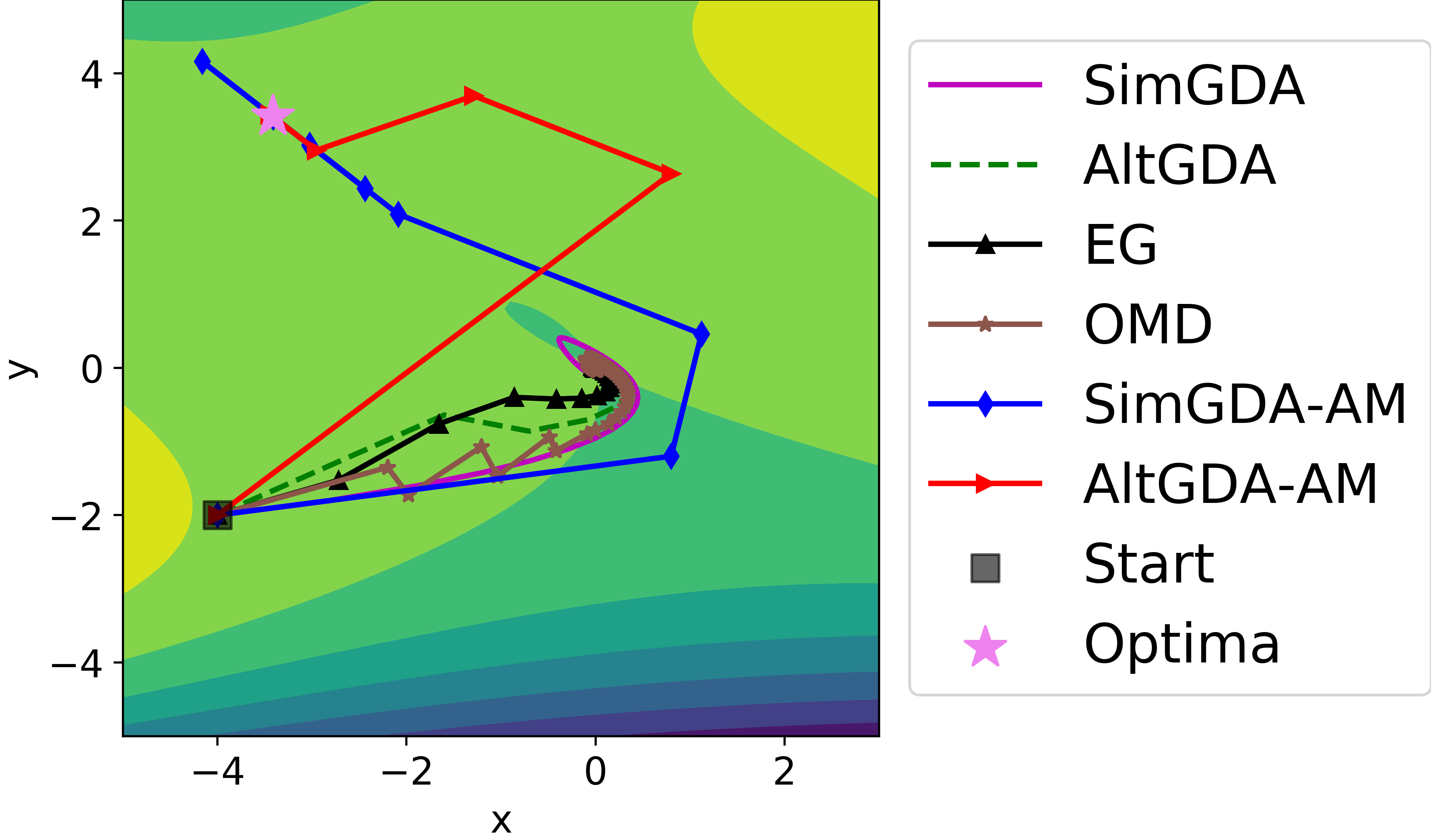}
\subcaption{Converging to a non-optima} 
\label{fig:1dc}
\end{subfigure}

\caption{\textbf{Left}:$f(x,y)=(4x^2-(y-3x+0.05x^3)^2-0.1y^4)e^{-0.01(x^2+y^2)}$.
\textbf{Middle}:  $-3x^2-y^2+4xy$. \textbf{Right}: $f(x,y)=2x^2+y^2+4xy+\frac{4}{3}y^3-\frac{1}{4}y^4$. We can observe that baseline methods fail to converge to a local minimax, whereas \methodName~ with table size $p=3$ always exhibits desirable behaviors. 
}
\label{fig:1dexample} 
\end{figure*}

\paragraph{Our contributions:} In this paper, we propose a different approach to solve minimax optimization. Our starting point is to cast the GDA dynamics as a fixed-point iteration. We then highlight that the fixed-point iteration can be solved effectively by using advanced non-linear extrapolation methods such as Anderson Mixing \citep{AA}, which we name as \methodName.  {red}{Although first mentioned in \cite{DBLP:conf/aistats/AzizianSMLG20}, to our best knowledge, this is still the first work to investigate and improve the GDA dynamics by tapping into advanced fixed-point algorithms.} 

We demonstrate that GDA dynamics can benefit from Anderson Mixing. In particular, we study bilinear games and give a systematic analysis of \methodName~for both simultaneous and alternating versions of GDA. We theoretically show that \methodName~can achieve global convergence guarantees under mild conditions. 

We complement our theoretical results with numerical simulations across a variety of minimax problems. We show that for some convex-concave and non-convex-concave functions, \methodName~can converge to the optimal point with little hyper-parameter tuning whereas existing first-order methods are prone to divergence and cycling behaviors.

We also provide empirical results for GAN training across two different datasets, CIFAR10 and CelebA.
Given the limited computational overhead of our method, the results suggest that an extrapolation add-on to GDA can lead to significant performance gains.
Moreover, the convergence behavior across a variety of problems and the ease-of-use demonstrate the potential of \methodName~ to become the minimax optimization workhorse.


\section{Preliminaries and background }

\subsection{Minimax optimization}
\begin{definition}
Point $\left(\mathbf{x}^{\ast}, \mathbf{y}^{\ast}\right)$ is a \textbf{local Nash equilibrium} of $f$ if there exists $\delta>0$ such that for any $(\mathbf{x}, \mathbf{y})$ satisfying $\left\|\mathbf{x}-\mathbf{x}^{\ast}\right\| \leq \delta$ and $\left\|\mathbf{y}-\mathbf{y}^{\ast}\right\| \leq \delta$ we have:
$
f\left(\mathbf{x}^{\ast}, \mathbf{y}\right) \leq f\left(\mathbf{x}^{\ast}, \mathbf{y}^{\ast}\right) \leq f\left(\mathbf{x}, \mathbf{y}^{\ast}\right).
$
\end{definition}

 To find the Nash equilibria, common algorithms including GDA, EG and OG, can be formulated as follows.
 For the two variants of GDA, simultaneous GDA (SimGDA) and alternating GDA (AltGDA), the updates have the following forms:
\begin{equation}
\begin{aligned}
\text{Simultaneous}:& \quad\x_{t+1}=\x_{t}-\eta\nabla_{\x} f(\x_{t},\y_{t}), \quad \y_{t+1}=\y_{t}+\eta\nabla_{\y} f(\x_{t},\y_{t})    \\
\text{Alternating}:&\quad \x_{t+1}=\x_{t}-\eta\nabla_{\x} f(\x_{t},\y_{t}), \quad \y_{t+1}=\y_{t}+\eta\nabla_{\y} f(\x_{t+1},\y_{t}).
\label{eqn:gda}
\end{aligned}    
\end{equation}

The EG update has the following form:
\begin{equation}
\begin{aligned}
&\x_{t+\frac{1}{2}}=\x_{t}-\eta\nabla_{\x} f(\x_{t},\y_{t}), \quad \quad \quad  \y_{t+\frac{1}{2}}=\y_{t}+\eta\nabla_{\y} f(\x_{t},\y_{t}) \\
&\x_{t+1}=\x_{t}-\eta\nabla_{\x} f(\x_{t+\frac{1}{2}},\y_{t+\frac{1}{2}}), \quad \y_{t+1}=\y_{t}+\eta\nabla_{\y} f(\x_{t+\frac{1}{2}},\y_{t+\frac{1}{2}}).
\end{aligned}    
\label{eqn:EG}
\end{equation}

The OG update has the following form:
\begin{equation}
\x_{t+1}=\x_{t}-\eta\nabla_{\x} f(\x_{t},\y_{t}) + \frac{\eta}{2}\nabla_{\x} f(\x_{t-1},\y_{t-1}), \y_{t+1}=\y_{t}+\eta\nabla_{\y} f(\x_{t},\y_{t}) -\frac{\eta}{2}\nabla_{\y} f(\x_{t-1},\y_{t-1}).
\label{eqn:omd}
\end{equation}

\subsection{Fixed-Point Iteration and Anderson Mixing (AM)}
\begin{definition}
$\mathbf{w}^{\star} \text { is a fixed point of the mapping } g \text { if } \mathbf{w}^{\star}=g\left(\mathbf{w}^{\star}\right) \text {. }$
\end{definition}
Consider the simple fixed-point iteration $w_{t+1} = g(w_t)$
which produces a sequence of iterates $\{w_0, w_1, \cdots, w_{N}\}$. In most cases, this converges to the fixed-point, $w^{\ast} = g(w^{\ast})$. Take gradient descent as an example, it 
can be viewed as iteratively applying the operation: 
$w_{t+1}=g\left(w_{t}\right) \triangleq w_{t}-\alpha_{t} \nabla f\left(w_{t}\right),$ where the limit is the fixed-point $ \mathbf{w}^{\star}=g\left(\mathbf{w}^{\star}\right)$ (i.e.$\nabla f\left(w_{t}=0\right).$ 
SimGDA updates can be defined as the repeated application of a nonlinear operator:
\begin{equation*}
    \mathbf{w}_{t+1}=G_{\eta}^{\operatorname{(sim)}}\left(\mathbf{w}_{t}\right) \triangleq \mathbf{w}_{t}-\eta V\left(\mathbf{w}_{t}\right) \text { with } \w = \begin{bmatrix} \x \\ \y\end{bmatrix}, V(\mathbf{w})=
    \begin{bmatrix} \nabla_{\mathbf{x}} f(\mathbf{x}, \mathbf{y}) \\ -\nabla_{\mathbf{y}} f(\mathbf{x}, \mathbf{y})\end{bmatrix}
\end{equation*}
Similarly, we can write AltGDA updates as  $\mathbf{w}_{t+1}=G_{\eta}^{\operatorname{(alt)}}\left(\mathbf{w}_{t}\right) $. An issue with fixed-point iteration is that it does not always converge, and even in the cases where it does converge, it might do so very slowly. GDA is one example that it could result in the possibility of the operator converging to a limit cycle instead of a single point for the GDA dynamic. A way of dealing with these problems is to use acceleration methods, which can potentially speed up the convergence process and in some cases even decrease the likelihood for divergence. 

There are many different acceleration methods, but we will put our focus on an algorithm which we refer to as Anderson Mixing (or Anderson Acceleration).
In short, Anderson Mixing (AM) shares the same idea as Nesterov’s acceleration. Given a fixed-point iteration $w_{t}=g\left(w_{t-1}\right)$, Anderson Mixing argues that a good approximation to the final solution $w^{\ast}$ can be obtained as a linear combination of the previous $p$ iterates $w_{t+1}=\sum_{i=0}^{p} \beta_{i}g\left(w_{t-p_{t}+i}\right)$. Since obtaining the proper coefficients $\beta_i$ is a nonlinear procedure, Anderson Mixing is also known as a nonlinear extrapolation method. The general form of Anderson Mixing is shown in Algorithm \ref{alg:AA}. For efficiency, we prefer a `restarted' version with a small table size $p$ that cleans up the table $F$ every $p$ iterations because it avoids solving a linear system of increasing size. 

    \begin{algorithm}[H]
        \SetAlgoLined
        \KwIn{Initial point $w_0$, Anderson restart dimension $p$, fixed-point mapping $g: \mathbf{R}^{n} \rightarrow \mathbf{R}^{n}.$}
        \KwOut{${w}_{t+1}$ }
        \For{t = 0, 1, \dots}{
        Set $p_t = \min\{t,p\}$.\\
        Set $F_t=[f_{t-p_t}, \dots, f_t$], where $f_i=g(w_i)-w_i$ for each $i \in [t-p_t\isep t]$.\\
        Determine weights $\mathbf{\beta}=(\beta_0,\dots, \beta_{p_t})^{T}$ that solves
        $
        \min _{\mathbf{\beta}}\left\|F_{t} \mathbf{\beta}\right\|_{2}, \text { s. t. } \sum_{i=0}^{p_{t}} \beta_{i}=1$.
        \\
        Set $w_{t+1}=\sum_{i=0}^{p_{t}} \beta_{i} g\left(w_{t-p_{t}+i}\right)$.
        }
         \caption{Anderson Mixing Prototype (truncated version)}
         \label{alg:AA}
    \end{algorithm}

\subsection{AM and Generalized Minimal Residual (GMRES)}

Developed by \citet{gmres}, Generalized Minimal Residual method (GMRES) is a Krylov subspace method for solving linear system equations. The method approximates the solution by the vector in a Krylov subspace with minimal residual, which is described below.
\begin{definition}
Assume we have the linear system of equations $\A \x=\mathbf{b}$ with $\A \in \mathbb{R}^{n\times n}, \mathbf{b} \in \mathbb{R}^n$ and an initial guess $\mathbf{x}_0$. Then we denote the initial residual by $\br_0 = \bb - \A\x_0$ and define the $t$th Krylov subspace as $\mathcal{K}_t = span\{\br_0, \A\br_0,\cdots, \A^{t-1}\br_0\}$.
\end{definition}
The $t$th iterate $\mathbf{x}_t$ of GMRES minimizes the norm of the residual $\br_t = \bb-\A\mathbf{x}_t$ in $\mathcal{K}_t$, that is, $\mathbf{x}_t$ solves
\begin{equation*}
\min _{\mathbf{x}_{t} \in \mathbf{x}_{0}+\mathcal{K}_{t}}\left\|\bb-\A \mathbf{x}_{t}\right\|_{2}.
\end{equation*}
The following formulation is equivalent to GMRES minimization problem and more convenient for implementation. It computes $\widehat{\mathbf{x}}_t$ such that
\begin{equation*}
\widehat{\mathbf{x}}_t=\underset{\widehat{\mathbf{x}}_t \in \mathcal{K}_{t}}{\arg \min }\left\|\bb-\A\left(\mathbf{x}_{0}+\widehat{\mathbf{x}}_t\right)\right\|_{2}=\underset{\widehat{\mathbf{x}}_t \in \mathcal{K}_{t}}{\arg \min }\left\|\br_{0}-\A \widehat{\mathbf{x}}_t\right\|_{2}.
\end{equation*}
 Using a larger Krylov dimension will improve the convergence of the method, but will require more memory. For this reason, a smaller Krylov subspace dimension $t$ and `restarted' versions of the method are used in practice \cite{saad2003iterative}.
 
The convergence of GMRES can be studied through the magnitude of the residual polynomial.
\begin{theorem}[Lemma 6.31 of \cite{saad2003iterative}]
\label{thm:residual}
Let $\widehat{\mathbf{x}}_t$ be the approximate solution obtained at the t-th  iteration of  GMRES being applied to solve $\A \x =\bb$, and denote the residual as $\br_t = \bb - \A\widehat{\mathbf{x}}_t$. Then, $\br_t$ is of the form
\begin{equation}
    \br_t=f_{t}(\A)\br_0,
\end{equation}
where
\begin{equation}
    \|\br_t\|_2 = \| f_{t}(\A)\br_0\|_2 = \min_{f_{t}\in \mathcal{P}_{t}}\| f_{t}(\A)\br_0\|_2,
\end{equation}
where $\mathcal{P}_{p}$ is the family of polynomials with degree p such that $f_p(0) = 1, \forall f_p \in\mathcal{P}_{p} $, which are usually called residual polynomials.
\label{resdiue-poly}
\end{theorem}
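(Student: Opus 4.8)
The plan is to translate the GMRES minimization over the Krylov subspace into an equivalent minimization over a class of polynomials, exploiting the fact that every element of $\mathcal{K}_t$ is a polynomial in $\A$ applied to $\br_0$. I would begin from the equivalent formulation already recorded in the excerpt, namely that $\widehat{\mathbf{x}}_t$ solves $\min_{\widehat{\mathbf{x}}_t \in \mathcal{K}_t}\|\br_0 - \A\widehat{\mathbf{x}}_t\|_2$, where the correction is sought directly inside $\mathcal{K}_t$.

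First I would observe that, since $\mathcal{K}_t = \mathrm{span}\{\br_0, \A\br_0, \ldots, \A^{t-1}\br_0\}$, any $\widehat{\mathbf{x}}_t \in \mathcal{K}_t$ can be written as $\widehat{\mathbf{x}}_t = \sum_{i=0}^{t-1} c_i \A^i \br_0 = q(\A)\br_0$ for some polynomial $q$ of degree at most $t-1$, and conversely every such $q$ produces an element of $\mathcal{K}_t$. Substituting this into the residual gives $\br_t = \br_0 - \A\widehat{\mathbf{x}}_t = (\mathbf{I} - \A q(\A))\br_0 = f_t(\A)\br_0$, where I set $f_t(z) := 1 - z\,q(z)$. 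By construction $f_t$ has degree at most $t$ and satisfies $f_t(0) = 1$, so $f_t \in \mathcal{P}_t$. This already establishes the form $\br_t = f_t(\A)\br_0$ claimed in the statement.

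The crux is to show that the map $q \mapsto f_t$ is a bijection onto $\mathcal{P}_t$, so that ranging over $\widehat{\mathbf{x}}_t \in \mathcal{K}_t$ is exactly the same as ranging over $f_t \in \mathcal{P}_t$. Injectivity is immediate, since distinct $q$ yield distinct $f_t$. For surjectivity I would take an arbitrary $f_t \in \mathcal{P}_t$; because $f_t(0) = 1$, the polynomial $1 - f_t(z)$ vanishes at $z = 0$ and is therefore divisible by $z$, so $1 - f_t(z) = z\,q(z)$ for a unique polynomial $q$ of degree at most $t-1$. This $q$ is precisely the preimage, which completes the correspondence.

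Combining these observations, the objective $\min_{\widehat{\mathbf{x}}_t \in \mathcal{K}_t}\|\br_0 - \A\widehat{\mathbf{x}}_t\|_2$ transforms exactly into $\min_{f_t \in \mathcal{P}_t}\|f_t(\A)\br_0\|_2$, and since the optimal $\widehat{\mathbf{x}}_t$ attains this minimum, the corresponding residual satisfies $\|\br_t\|_2 = \|f_t(\A)\br_0\|_2 = \min_{f_t \in \mathcal{P}_t}\|f_t(\A)\br_0\|_2$, which is the desired identity. I expect the only delicate point to be the bookkeeping of polynomial degrees together with the divisibility step underlying surjectivity; everything else is a direct substitution. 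Notably, the argument is purely algebraic and requires no assumption that $\A$ be symmetric or even diagonalizable.
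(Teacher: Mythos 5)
Your argument is correct: the identification of $\mathcal{K}_t$ with $\{q(\A)\br_0 : \deg q \le t-1\}$, the substitution $f_t(z)=1-z\,q(z)$, and the divisibility argument for surjectivity onto $\mathcal{P}_t$ together give exactly the standard proof of this classical result, with no hidden gaps (in particular, no spectral assumptions on $\A$ are needed, as you note). The paper itself offers no proof — it imports the statement as Lemma 6.31 of \cite{saad2003iterative} — and your derivation is essentially the one found there.
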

 
Although GMRES is applied to a system of linear
equations not a fixed-point problem, there is a strong connection between Anderson Mixing and GMRES. In AM we are looking for a
fixed-point $\mathbf{x}$ such that $\mathbf{G}\mathbf{x}-\bb -\mathbf{x} =0$ and by rearranging this equation we get 
\begin{equation*}
\bb+(\mathbf{G}-\mathbf{I}) \mathbf{x}=0 \Leftrightarrow(\mathbf{I}-\mathbf{G})\mathbf{x}=\bb.
\end{equation*}
Theorem \ref{thm:equvialence} shows that if GMRES is applied to the system $(\mathbf{I} - \mathbf{G})\mathbf{x}=\mathbf{b}$ and AM is applied to $ g(\mathbf{x})=\mathbf{Gx}+\mathbf{b}$ with the same initial guess and $\mathbf{I}-\mathbf{G}$ is non-singular, then these are equivalent in the sense that the iterates of each algorithm can be obtained directly from the iterates of the other algorithm.

\begin{theorem}[Equivalence between AM with restart and GMRES \citep{walker2011anderson}]
\label{thm:equvialence}
Consider the fixed point iteration $\mathbf{x}=g(\mathbf{x})$ where $ g(\mathbf{x})=\mathbf{Gx}+\mathbf{b}$ for $\mathbf{G}\in \mathbb{R}^{n\times n}$ and $\mathbf{b}\in \mathbb{R}^n$. If 
$\mathbf{I} - \mathbf{G}$ is non-singular, Algorithm \ref{alg:AA} produces exactly the same iterates as GMRES being applied to solve $(\mathbf{I} - \mathbf{G})\mathbf{x}=\mathbf{b}$ when both algorithms start with the same initial guess. 

\end{theorem}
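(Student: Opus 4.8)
The plan is to reduce the Anderson Mixing iteration to GMRES by exploiting that the fixed-point map $g(\x)=\mathbf{G}\x+\bb$ is affine, so that the constrained least-squares step of Algorithm~\ref{alg:AA} becomes a Krylov residual minimization. Set $\A=\mathbf{I}-\mathbf{G}$, which is nonsingular by hypothesis. First I would rewrite the Anderson residuals: for each stored iterate $w_i$ we have $f_i=g(w_i)-w_i=\bb-(\mathbf{I}-\mathbf{G})w_i=\bb-\A w_i$, i.e.\ $f_i$ is exactly the linear-system residual $\br_i=\bb-\A w_i$. Hence the columns of $F_t$ are residual vectors, and the AM subproblem $\min_{\beta}\|F_t\beta\|_2$ subject to $\sum_i\beta_i=1$ is the problem of minimizing $\|\bb-\A\bar w_t\|_2$ over all \emph{affine} combinations $\bar w_t=\sum_i\beta_i\,w_{t-p_t+i}$ of the stored iterates.

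The second step uses affineness twice. Because $\sum_i\beta_i=1$ and $g$ is affine, the update collapses to $w_{t+1}=\sum_i\beta_i\,g(w_{t-p_t+i})=g(\bar w_t)=\bar w_t+(\bb-\A\bar w_t)$. Thus each AM sweep (i) selects the affine combination $\bar w_t$ of past iterates with the smallest residual, then (ii) takes one further fixed-point step. This already mirrors the GMRES template, which minimizes $\|\bb-\A\x\|_2$ over $\x\in\x_0+\mathcal{K}_t$, and the claim becomes that $\bar w_t$ is precisely the $t$-th GMRES iterate $\x_t^{G}$.

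The core is an induction showing that, within a restart cycle started at $\x_0=w_0$, the affine hull of $\{w_0,\dots,w_t\}$ coincides with $\x_0+\mathcal{K}_t$. The base case $t=0$ is immediate. For the inductive step I would use $w_{t+1}=\x_t^{G}+\br_t^{G}$ (from (i)--(ii) with $\bar w_t=\x_t^{G}$) together with $\br_t^{G}=\br_0-\A(\x_t^{G}-\x_0)$; since $\x_t^{G}-\x_0\in\mathcal{K}_t$ and $\A\mathcal{K}_t\subseteq\mathcal{K}_{t+1}$ while $\br_0\in\mathcal{K}_1$, this gives $w_{t+1}\in\x_0+\mathcal{K}_{t+1}$, and a rank/dimension argument (no Krylov breakdown, valid while the residual is nonzero) upgrades the inclusion to equality of the two affine hulls. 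With the hulls equal, the minimizer is unique and $\bar w_t=\x_t^{G}$, so the intermediate AM quantity equals the GMRES iterate, and each sequence is recoverable from the other via $\x_t^{G}=\bar w_t$ and $w_{t+1}=\x_t^{G}+\br_t^{G}$.

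Finally I would handle the restart: clearing the table every $p$ steps corresponds exactly to restarting GMRES with the current iterate as the new initial guess, so the per-cycle argument composes to give the full equivalence. The hard part will be the bookkeeping in the induction---establishing equality, not merely inclusion, of the affine hulls, which requires ruling out premature linear dependence and treating the ``happy breakdown'' case where GMRES converges exactly and the Krylov dimension stops growing. Nonsingularity of $\mathbf{I}-\mathbf{G}$ is precisely what keeps the least-squares solution, and hence the correspondence, well defined throughout.
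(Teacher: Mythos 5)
The paper itself gives no proof of this theorem---it is quoted directly from \citet{walker2011anderson}---and your outline reproduces essentially the Walker--Ni argument: identify $f_i$ with the linear residual $\bb-(\mathbf{I}-\mathbf{G})w_i$, use affineness of $g$ together with the constraint $\sum_i\beta_i=1$ to show each AM step minimizes the residual over the affine hull of the stored iterates and then applies one fixed-point step, and prove by induction that this affine hull equals $x_0+\mathcal{K}_t$, so that $\bar w_t$ is the $t$-th GMRES iterate and $w_{t+1}=g(\bar w_t)$. This is correct; the one caveat, which you rightly flag as the hard part, is that upgrading the inclusion of affine hulls to equality requires the non-stagnation hypothesis (strictly decreasing GMRES residuals within the cycle) that Walker and Ni impose explicitly and that the theorem as restated in the paper omits.
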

Theorem \ref{thm:equvialence} can also be generalized to the restart version of AM an GMRES as well. 
\section{\methodName~: GDA with Anderson Mixing}
We propose a novel minimax optimizer, called \methodName, that is inspired by recent advances in parameter (or weight) averaging \citep{Wu2020ImprovingGT, DBLP:conf/iclr/YaziciFWYPC19}. We argue that a nonlinear adaptive average (combination) is a more appropriate choice for minimax optimization. 

\subsection{GDA with Na\"ive Anderson Mixing} We propose to exploit the dynamic information present in the GDA iterates to ``smartly" combine the past iterates. This is in contrast to the classical averaging methods (moving averaging and exponential moving averaging) \citep{Yang2019ASE} that ``blindly" combine past iterates. A na\"ive adoption of Anderson Mixing using the past $p$ GDA iterates for both simGDA and altGDA has the following form:

\begin{equation*}
\label{eqn:GDAAM}
\text{Anderson mixing}: \quad \x_{t+1} = \sum_{i=0}^{p}\beta_i\x_{t-p+i},  \y_{t+1} = \sum_{i=0}^{p}\beta_i\y_{t-p+i}.
\end{equation*}
{Since \cite{alter, NM} show the AltGDA is superior to SimGDA in many aspects}, we briefly summarized both Simultaneous and Alternating \methodName~in Algorithms \ref{alg:AAsim} and \ref{alg:AAalt} with the truncated Anderson Mixing Algorithm \ref{alg:AA} using a table size $p$.

\begin{minipage}{0.48\textwidth}
  \begin{algorithm}[H]
  \SetAlgoLined
    \KwIn{$\x_0, \y_0$, stepsize $\eta$, Anderson table size $p$}
    \KwOut{$\x_t, \y_t$}
    Set $\w_0 = [\x_0, \y_0], sx=length(x_0)$\\
     \For{t = 0, 1, \dots} {  
        $\x_t, \y_t = \w_t[0:sx-1], \w_t[sx:end] $\\
        $\x_{t+1} = \x_t - \eta \nabla_{\x} f(\x_{t},\y_{t})$\\
        $\y_{t+1} = \y_t - \eta \nabla_{\y} f(\x_{{t}},\y_{t})$\\
        $\w_{t+1} = \begin{bmatrix}
           \x_{t+1} \\
           \y_{t+1}
         \end{bmatrix}$\\
        Use Anderson Mixing with table size p to extrapolate $\w_{t+1}$\\
        
     }
     $\x_t, \y_t = \w_{t+1}[0:sx-1], \w_{t+1}[sx:end]$\\
     \Return $\x_{t}, \y_{t}$
    \caption{Simultaneous \methodName~}
     \label{alg:AAsim}
  \end{algorithm}
\end{minipage}
   \hfill
\begin{minipage}{0.48\textwidth}
  \begin{algorithm}[H]
  \SetAlgoLined
    \KwIn{$\x_0, \y_0$, stepsize $\eta$, Anderson table size $p$}
    \KwOut{$\x_t, \y_t$}
    Set $\w_0 = [\x_0, \y_0], sx=length(x_0)$\\
     \For{t = 0, 1, \dots} {  
        $\x_t, \y_t = \w_t[0:sx-1], \w_t[sx:end] $\\
        $\x_{t+1} = \x_t - \eta \nabla_{\x} f(\x_{t},\y_{t})$\\
        $\y_{t+1} = \y_t - \eta \nabla_{\y} f(\x_{{t+1}},\y_{t})$\\
        $\w_{t+1} = \begin{bmatrix}
           \x_{t+1} \\
           \y_{t+1}
         \end{bmatrix}$\\
         Use Anderson Mixing with table size p to extrapolate $\w_{t+1}$\\
     }
     $\x_t, \y_t = \w_{t+1}[0:sx-1], \w_{t+1}[sx:end]$\\
     \Return $\x_{t}, \y_{t}$
     \caption{Alternating \methodName~}
      \label{alg:AAalt}
  \end{algorithm}
\end{minipage}

It is important to note that the Anderson Mixing form shown in Algorithm \ref{alg:AA} is for illustrative purpose and not computationally efficient.  For example, only one column of $F_t$ needs to be updated at each iteration. In addition, the solution of the least-square problem in Algorithm \ref{alg:AA} can also be solved by a quick QR update scheme which costs $(2n+1)p^2$ \citep{walker2011anderson}. Thus, from Algorithms \ref{alg:AAsim} and \ref{alg:AAalt}, we can see that the major cost of \methodName~arises from solving the additional linear least squares problem compared to regular GDA at each iteration. Additional implementation details are provided in the Appendix.

\section{Convergence results for \methodName}
\label{sec:theory}
In this section, we show that both simultaneous and alternating version \methodName~ converge to the equilibrium for bilinear problems. First, we do not require the learning rate to be sufficiently small. Second, we explicitly provide a linear convergence rate that is faster than EG and OG. More importantly, we derive nonasymptotic rates from the spectrum analysis perspective because existing theoretical results can not help us derive a convergent rate (see \ref{supsec: diff}).

\subsection{Bilinear Games}
\label{sec:bilineargames}
Bilinear games are often regarded as an important simple example
for theoretically analyzing and understanding new algorithms and techniques for solving general minimax problems \citep{DBLP:conf/iclr/GidelBVVL19, DBLP:conf/iclr/MertikopoulosLZ19,CGD}. In this section, we analyze the convergence property of simultaneous \methodName~ and alternating \methodName~schemes on the following  zero-sum bilinear games:
\begin{equation}
\label{eq:bilinear}
    \min_{\mathbf{x}\in \mathbb{R}^n}\max_{\mathbf{y}\in \mathbb{R}^n}f(\mathbf{x},\mathbf{y}) =  \mathbf{x}^{T}\mathbf{A}\mathbf{y} + \mathbf{b}^{T}\x + \mathbf{c}^T\y, \quad \mathbf{A}\textrm{ is full rank.}
\end{equation}
The Nash equilibrium to the above problem is given by $(\mathbf{x}^{\ast},\mathbf{y}^{\ast})=(-\A^{-T}\mathbf{c},-\A^{-1}\bb)$. 

{
We also investigate bilinear-quadratic games from a spectrum analysis perspective. In addition, we show that analysis based on the numerical range \citep{bollapragada2018nonlinear} can be also extended to such games, although it can not help derive a convergent bound for \eqref{eq:bilinear}. Detailed discussion can be found in Appendix \ref{supsec: diff} and \ref{sec:B.4.1}. 
}

\subsection{Simultaneous \methodName~}
Suppose $\mathbf{x}_0$ and $\mathbf{y}_0$ are the initial guesses for $\mathbf{x}^{\ast}$ and $\mathbf{y}^{\ast}$, respectively. Then each iteration of simultaneous GDA can be written in the following matrix form:
\begin{equation}
\label{eq:simfix}
\begin{split}
    \begin{bmatrix}
    \mathbf{x}_{t+1}\\
    \mathbf{y}_{t+1}
    \end{bmatrix}=&\underbrace{\begin{bmatrix}
    \mathbf{I} & -\eta\mathbf{A}\\
    \eta\mathbf{A}^{T} &\mathbf{I}
    \end{bmatrix}}_{\mathbf{G}^{(Sim)}}
    \underbrace{
    \begin{bmatrix}
    \mathbf{x}_{t}\\
    \mathbf{y}_{t}
    \end{bmatrix}}_{\mathbf{w}_{t}^{(Sim)}}
    - \eta
    \underbrace{\begin{bmatrix}
     \bb\\
     \mathbf{c}
    \end{bmatrix}
  }_{\mathbf{b}^{(Sim)}}.
    \end{split}
\end{equation}

It has been shown that the iteration in \eqref{eq:simfix} often cycles and fails to converge for the bilinear problem due to the poor spectrum/numerical range of the fixed point operator $\mathbf{G}^{(Sim)}$ \citep{DBLP:conf/iclr/GidelBVVL19, DBLP:conf/aistats/AzizianSMLG20, mokhtari2020unified}. Next we show that the convergence  can be improved with Algorithm \ref{alg:AAsim}.

\begin{theorem}\label{thm:simAA1}[Global convergence for simultaneous \methodName~ on bilinear problem] 
Denote the distance between the stationary point $\w^{*}$ and current iterate $\w_{(k+1)p}$ of Algorithm \ref{alg:AAsim} with table size $p$ as $N_{(k+1)p} = \Vert\w^{*}-\w_{(k+1)p}\Vert$. 
Then we have the following bound for $N_{t}$
\begin{equation}
       N_{(k+1)p}^2
       \le \rho(A) N_{kp}^2
\end{equation}
where $\rho(A)=(\frac{1}{T_{p}(1 + \frac{2}{\kappa(\A^{T}\A) - 1})})^2$. Here, $T_{p}$ is the Chebyshev polynomial of first kind of degree p and $\frac{1}{T_{p}(1 + \frac{2}{\kappa(\A^{T}\A) - 1})} < 1$ since $1 + \frac{2}{\kappa(\A^{T}\A) - 1} > 1$.
\end{theorem}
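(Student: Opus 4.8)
The plan is to turn the nonlinear \methodName{} recursion into a restarted GMRES iteration on a skew-symmetric linear system and then estimate its per-cycle contraction with a scalar Chebyshev argument. First I would note that one simultaneous sweep is the affine map $\w_{t+1}=\G^{(Sim)}\w_t-\eta\,\bb^{(Sim)}$, whose fixed point $\w^{*}$ solves $(\mathbf{I}-\G^{(Sim)})\w^{*}=-\eta\,\bb^{(Sim)}$; here $\mathbf{I}-\G^{(Sim)}=\eta\mathbf{J}$ with $\mathbf{J}=\left[\begin{smallmatrix}\0 & \A\\ -\A^{T} & \0\end{smallmatrix}\right]$. Because $\A$ is full rank, $\mathbf{J}$ (hence $\mathbf{I}-\G^{(Sim)}$) is nonsingular, so Theorem~\ref{thm:equvialence} lets me identify the table-$p$ iterate $\w_{(k+1)p}$ of Algorithm~\ref{alg:AAsim} over one restart cycle begun at $\w_{kp}$ with the GMRES iterate for $\eta\mathbf{J}\,\w=-\eta\,\bb^{(Sim)}$. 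Thus the error $\ve_{(k+1)p}:=\w^{*}-\w_{(k+1)p}$ factors as $\ve_{(k+1)p}=q^{\star}(\eta\mathbf{J})\,\ve_{kp}$, where $q^{\star}\in\mathcal{P}_p$ is the degree-$p$ residual polynomial of that cycle furnished by Theorem~\ref{thm:residual}.

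Next I would diagonalize. Being real skew-symmetric, $\mathbf{J}$ is normal and unitarily diagonalizable, and $\mathbf{J}^{2}=-\mathrm{diag}(\A\A^{T},\A^{T}\A)$ shows its eigenvalues are $\pm i\sigma_1,\dots,\pm i\sigma_n$, with $\sigma_1\ge\cdots\ge\sigma_n>0$ the singular values of $\A$. Normality gives $\|q(\eta\mathbf{J})\|_2=\max_j|q(\pm i\eta\sigma_j)|$ for any polynomial, so working in the orthonormal eigenbasis yields $N_{(k+1)p}\le\big(\max_j|q^{\star}(\pm i\eta\sigma_j)|\big)N_{kp}$. I would then aim to replace the cycle-dependent factor by the ideal value $\min_{q\in\mathcal{P}_p}\max_j|q(\pm i\eta\sigma_j)|$. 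Since the spectrum is symmetric about the origin on the imaginary axis, the optimal comparison polynomial may be taken even, $q(\lambda)=s(\lambda^2)$, converting the estimate into the classical constrained Chebyshev problem $\min_{s(0)=1}\max_{\mu\in[\eta^2\sigma_n^2,\eta^2\sigma_1^2]}|s(-\mu)|$. Its optimum equals $1/T_m\!\big(1+\tfrac{2}{\kappa(\A^{T}\A)-1}\big)$ with $m$ the degree of $s$ and $\kappa(\A^{T}\A)=\sigma_1^2/\sigma_n^2$ (the ratio is scale invariant); squaring the corresponding factor gives $\rho(A)$, which is $<1$ because the argument exceeds $1$ where $T_m>1$.

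The hard part is the passage from GMRES's \emph{residual} optimality to a contraction of the \emph{distance} $N$ with exactly this constant. The polynomial $q^{\star}$ minimizes $\|q(\eta\mathbf{J})\br_{kp}\|$, not $\max_j|q(\pm i\eta\sigma_j)|$, and the textbook conversion $N_{(k+1)p}\le\|(\eta\mathbf{J})^{-1}\|\,\|\br_{(k+1)p}\|$ loses a factor $\kappa(\eta\mathbf{J})=\sqrt{\kappa(\A^{T}\A)}$ that would spoil the claimed rate. Avoiding this requires staying in the orthonormal eigenbasis throughout, so that the per-eigenvalue factors $|q^{\star}(\pm i\eta\sigma_j)|$ govern the error coordinate-by-coordinate, and then verifying that the even symmetry lets the ideal min-max value control $\max_j|q^{\star}(\pm i\eta\sigma_j)|$ rather than merely the residual-weighted average. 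A related bookkeeping point is the degree: an even comparison polynomial of degree $p$ is a degree-$\lfloor p/2\rfloor$ polynomial in $\lambda^2$, so the Chebyshev index $m$ must be reconciled with the statement's $T_p$ (equivalently, with how the table size $p$ sets the GMRES cycle length).

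Finally, it is worth confirming that the affine constraint $\sum_{i}\beta_i=1$ in Algorithm~\ref{alg:AAsim} is precisely what enforces the normalization $f_p(0)=1$ on the residual polynomials of Theorem~\ref{thm:residual}, so that the equivalence of Theorem~\ref{thm:equvialence} transfers the Krylov min-residual property to the extrapolated iterates without further conditions beyond nonsingularity of $\mathbf{I}-\G^{(Sim)}$. Iterating the per-cycle bound $N_{(k+1)p}^2\le\rho(A)N_{kp}^2$ with $\rho(A)<1$ then gives the asserted global linear convergence from an arbitrary start.
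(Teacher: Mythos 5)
You take the same route as the paper: one restart cycle of Algorithm~\ref{alg:AAsim} is identified, via Theorem~\ref{thm:equvialence}, with restarted GMRES on the skew-symmetric system $(\mathbf{I}-\G^{(Sim)})\w=-\eta\,\bb^{(Sim)}$; normality reduces the operator bound to $\max_j|f_p(\pm i\eta\sigma_j)|$; and the symmetry of the spectrum about the origin turns the min--max problem into a Chebyshev problem on $[\eta^2\sigma_{\min}^2(\A),\,\eta^2\sigma_{\max}^2(\A)]$, whose value is the reciprocal of a Chebyshev polynomial evaluated at $1+\tfrac{2}{\kappa(\A^{T}\A)-1}>1$. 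The step you flag as the hard part --- converting residual optimality into a contraction of $N$ without paying $\kappa(\eta\mathbf{J})$ --- is closed in the paper by a commutation identity rather than by the naive bound $N\le\|\G^{-1}\|\,\|\br\|$: writing $\G=\mathbf{I}-\G^{(Sim)}$, the error and the residual are propagated by the \emph{same} residual polynomial, since $\G^{-1}f_p(\G)\,\G(\w_{kp}-\w^{*})=f_p(\G)(\w_{kp}-\w^{*})$, and for normal $\G$ the $2$-norm of $f_p(\G)$ equals $\max_i|f_p(\lambda_i)|$; hence $N_{(k+1)p}\le\max_i|f_p^{\star}(\lambda_i)|\,N_{kp}$ with no condition-number loss. (That the residual-optimal $f_p^{\star}$ can then be replaced by the min--max optimal polynomial is asserted in the paper by writing the error itself as a minimum over $\mathcal{P}_p$ --- the same mild gloss found in most GMRES analyses, and exactly the point you suspected.) Your degree-bookkeeping concern is also well placed: the paper's first estimate correctly uses $T_l$ with $l=\lfloor p/2\rfloor$, yielding a rate $\bigl(\tfrac{\kappa(\A)-1}{\kappa(\A)+1}\bigr)^{l}$, but the ``tighter'' final bound evaluates $T_p$ of full degree $p$ on the squared interval without reconciling the two; under the even-polynomial reduction you describe, the honest Chebyshev index is $\lfloor p/2\rfloor$. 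Apart from these two points, both of which are soft spots in the paper's own write-up rather than defects in your plan, your proposal reproduces the paper's argument.
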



It is worthy emphasizing that the convergence rate of Algorithm \ref{alg:AAsim} is independent of learning rate $\eta$ while the convergence results of other methods like EG and OG depend on the learning rate. 

\begin{remark}
Both EG and OG have the following form of convergence rate \citep{mokhtari2020unified} for bilinear problem 
\begin{equation*}
     N_{t+1}^2 \le (1 - \frac{c}{\kappa(\A^{T}\A)})N_{t}^2,
\end{equation*}
where c is a positive constant independent of the problem parameters.
\end{remark}

\begin{figure*}[ht]
\begin{subfigure}[t]{0.33\textwidth}
\includegraphics[width=0.99\linewidth]{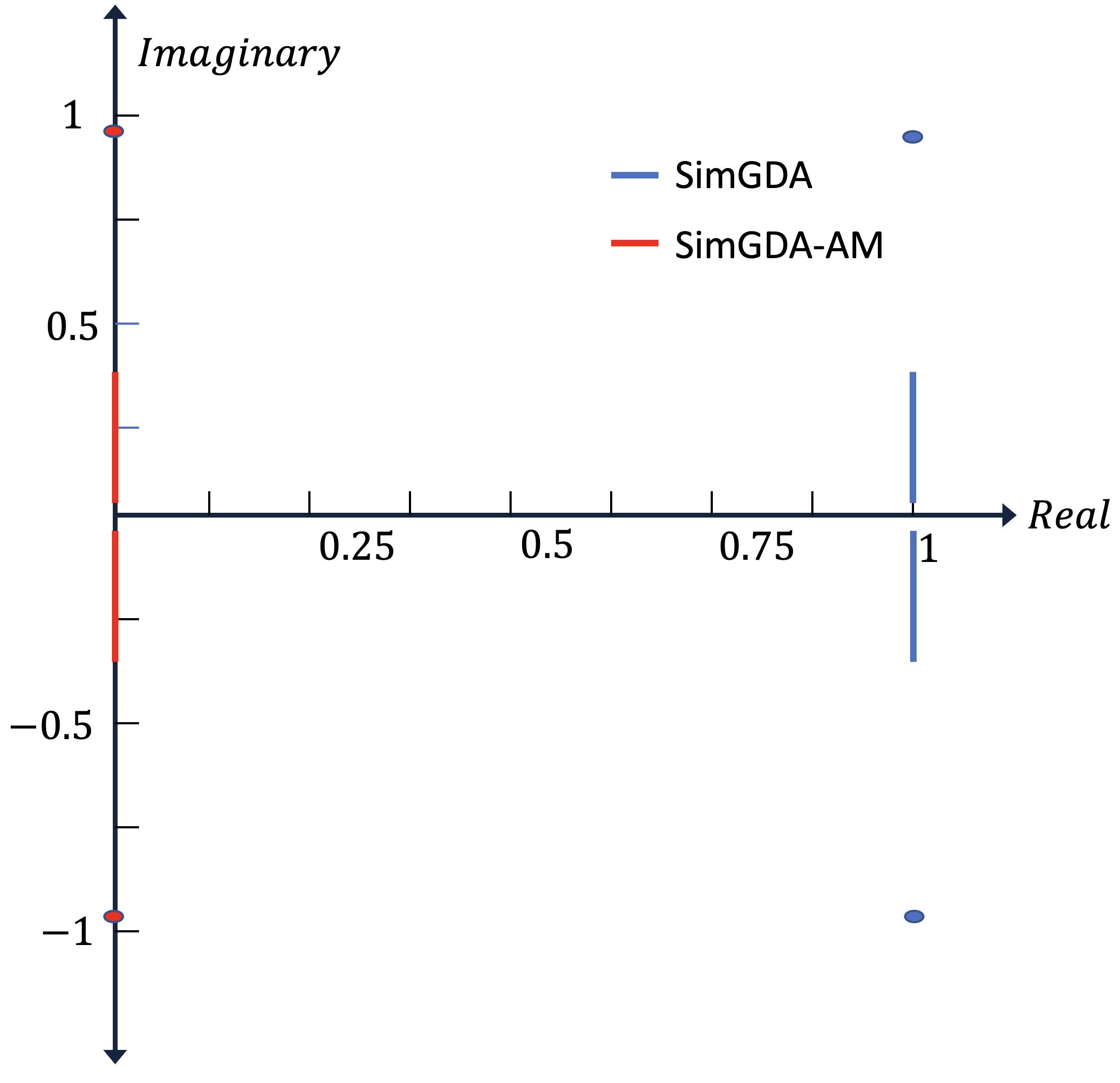}
\subcaption{Eigenvalues of iteration matrix of SimGDA and \methodName} 
\label{fig:disksim}
\end{subfigure}
\begin{subfigure}[t]{0.33\textwidth}
\includegraphics[width=0.99\linewidth]{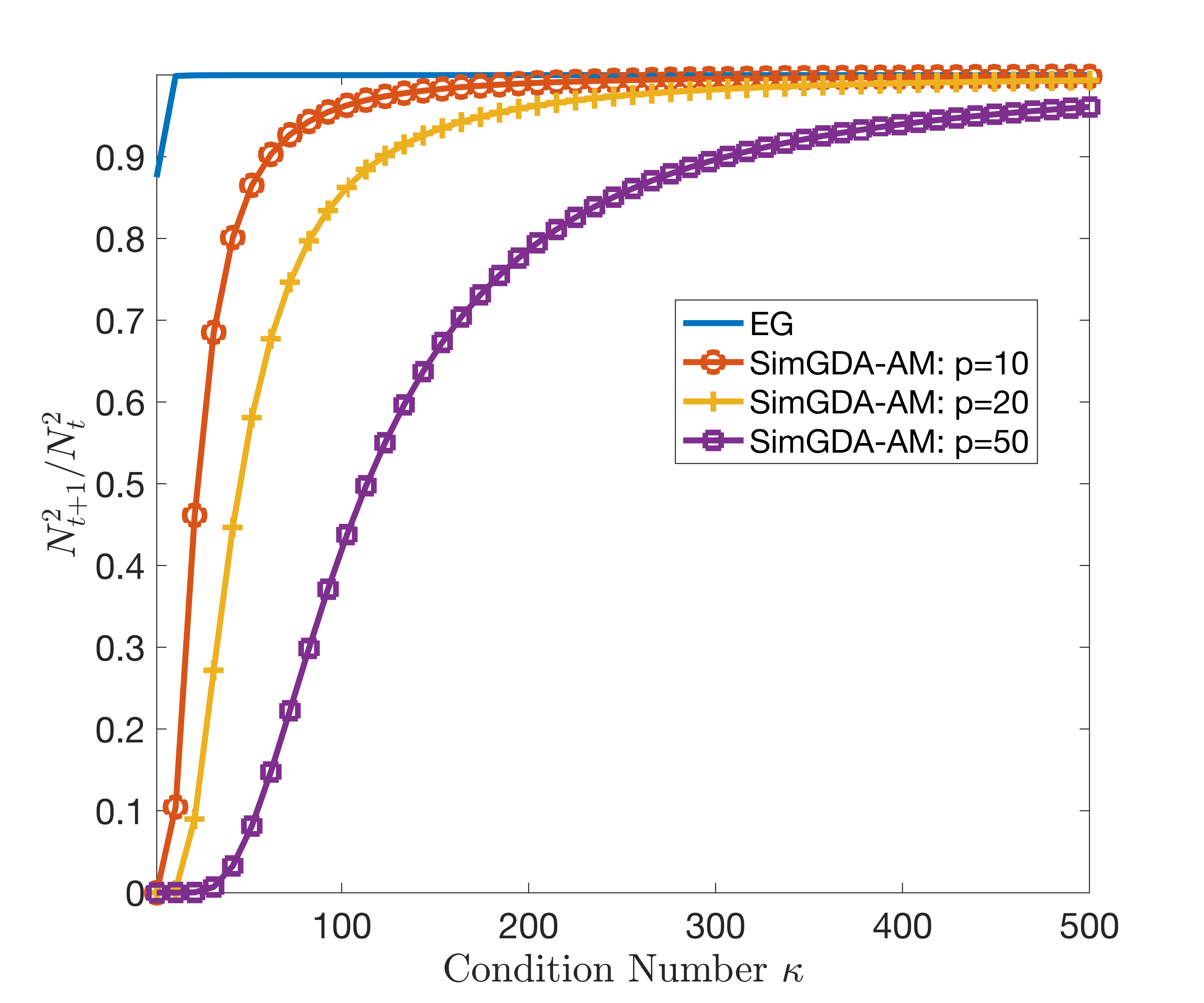}
\subcaption{Different condition number} 
\label{fig:ratesim}
\end{subfigure}
\begin{subfigure}[t]{0.33\textwidth}
\includegraphics[width=0.99\linewidth]{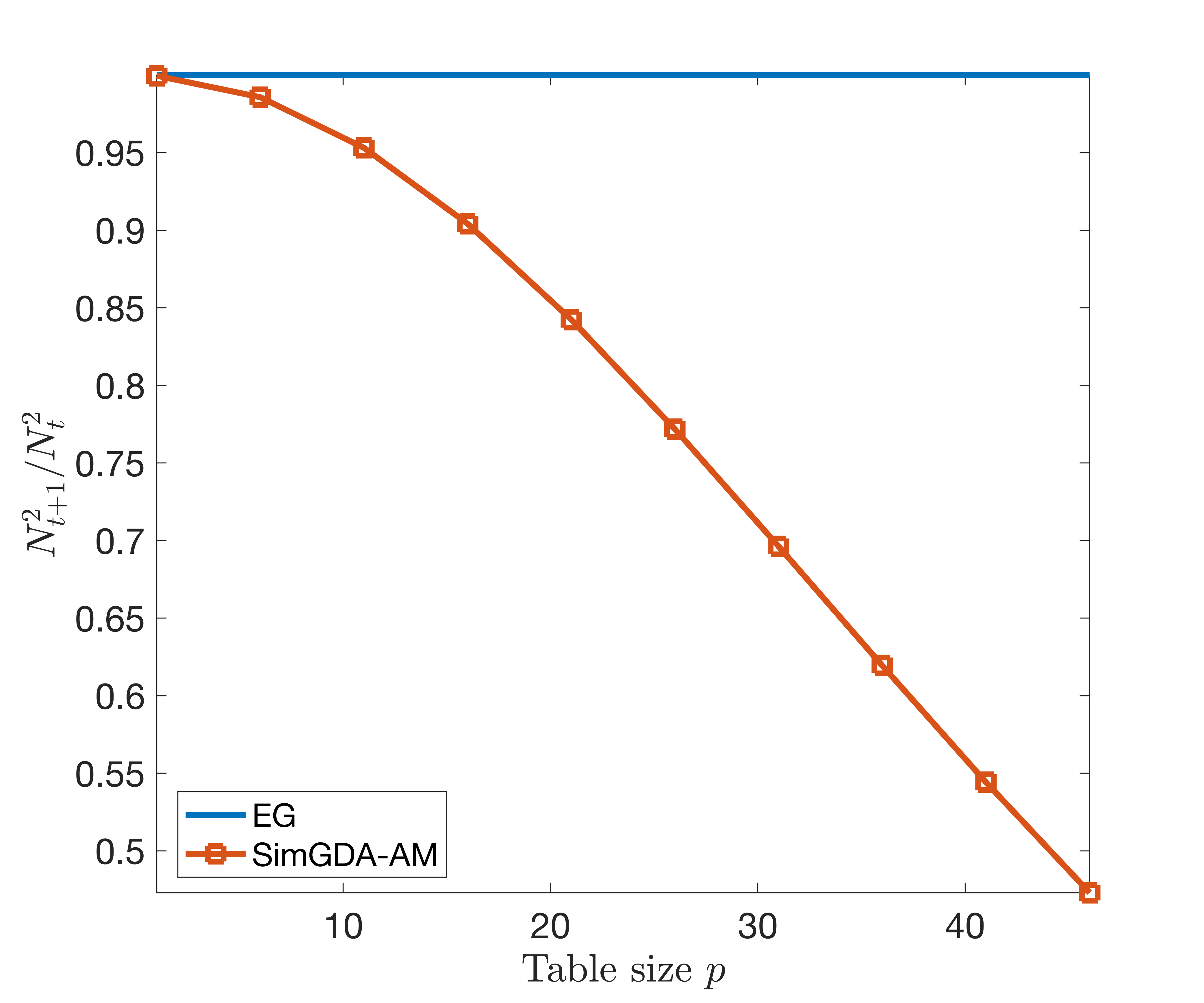}
\subcaption{Different table size, condition number $\kappa=100$} 
\label{fig:ratealt}
\end{subfigure}

\caption{\textbf{Figure \ref{fig:disksim}}: The blue line is the spectrum of matrix $\G^{(Sim)}$ while the red line is spectrum of matrix $\mathbf{I} - \G^{(Sim)}$. Our method transforms the divergent problem to a convergent problem due to the transformed spectrum. \textbf{Figure \ref{fig:ratesim}}: Convergence rate comparison between SimGDA-AM and EG for different condition numbers of $\A$ and fixed table size $p=10, 20, 50$.  \textbf{Figure \ref{fig:ratealt}}: Convergence rate comparison between SimGDA-AM and EG for increasing table size on a matrix $\A$ with condition number $100$. 
}
\label{fig:illstruate}
\end{figure*}

\subsection{Alternating \methodName~}
The underlying fixed point iteration in Algorithm \ref{alg:AAalt} can be written in the following matrix form:
\begin{equation*}
\begin{split}
    \begin{bmatrix}
    \mathbf{x}_{t+1}\\
    \mathbf{y}_{t+1}
    \end{bmatrix}
    =&\underbrace{\begin{bmatrix}
    \mathbf{I} & -\eta\mathbf{A}\\
    \eta\mathbf{A}^{T} &\mathbf{I}-\eta^2\mathbf{A}^{T}\A
    \end{bmatrix}}_{\mathbf{G}^{(Alt)}}
    \underbrace{
    \begin{bmatrix}
    \mathbf{x}_{t}\\
    \mathbf{y}_{t}
    \end{bmatrix}}_{\mathbf{w}^{(Alt)}_t}
    -\eta
    \underbrace{\begin{bmatrix}
     \bb\\
    \mathbf{c}
    \end{bmatrix}
  }_{\mathbf{b}^{(Alt)}}.
    \end{split}
\end{equation*}

According to the equivalence between truncated Anderson acceleration and GMRES with restart, we can analyze the convergence of Algorithm \ref{alg:AAalt} through the convergence analysis of applying GMRES to solve linear systems associated with $\G = \mathbf{I}-\G^{(Alt)}$:
\begin{equation*}
 \G= 
\begin{bmatrix}
    \mathbf{0} & \eta\mathbf{A}\\
    -\eta\mathbf{A}^{T}& \eta^2\mathbf{A}^{T}\A
    \end{bmatrix}.
\label{eq:Galt}
\end{equation*}

\noindent\begin{minipage}{0.7\textwidth}
\begin{theorem}\label{thm:altaa}[Global convergence for alternating \methodName~ on bilinear problem] Denote the distance between the stationary point $\w^{*}$ and current iterate $\w_{(k+1)p}$ of Algorithm \ref{alg:AAalt} with table size $p$ as $N_{(k+1)p} = \Vert\w^{*}-\w_{(k+1)p}\Vert$. Assume $\A$ is normalized such that its largest singular value is equal to $1$. Then when the learning rate $\eta$ is less than $2$, we have the following bound for $N_{t}$
\begin{equation*}
    N_{(k+1)p}^{2} \le  \sqrt{1+\frac{2\eta}{2-\eta}}(\frac{r}{c})^{p}N_{kp}^{2}
\end{equation*}
where $c$ and $r$ are the center and radius of a disk $D(c,r)$ which includes all the eigenvalues of $\G$. Especially, $\frac{r}{c} < 1$.
\end{theorem}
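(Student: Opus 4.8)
The plan is to transfer everything onto the linear-algebra side via the AM--GMRES equivalence and then exploit the very special spectrum of $\G=\mathbf{I}-\G^{(Alt)}$. By Theorem~\ref{thm:equvialence}, one restart cycle of Algorithm~\ref{alg:AAalt} reproduces exactly $p$ steps of restarted GMRES applied to $\G\w=\bb$, so it suffices to control how much GMRES contracts the distance to $\w^{*}$ over a cycle of length $p$. Writing $\mathbf{e}_{t}=\w^{*}-\w_{t}$ and $\br_t=\G\mathbf{e}_t$, the GMRES iterate satisfies $\mathbf{e}_{(k+1)p}=\pi_k(\G)\,\mathbf{e}_{kp}$ for the degree-$p$ residual polynomial $\pi_k\in\mathcal{P}_p$ (so $\pi_k(0)=1$) generated in cycle $k$, and the whole problem reduces to bounding $\pi_k(\G)$ on the spectrum of $\G$.

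First I would compute the spectrum of $\G$ explicitly. Using the SVD $\A=\mathbf{U}\mathbf{\Sigma}\mathbf{V}^{T}$ and the orthogonal change of variables it induces, $\G$ block-diagonalizes into $2\times2$ blocks $\left[\begin{smallmatrix}0 & \eta\sigma_i\\ -\eta\sigma_i & \eta^{2}\sigma_i^{2}\end{smallmatrix}\right]$, one per singular value $\sigma_i$ of $\A$. Evaluating the characteristic polynomial of each block gives the conjugate pair $\lambda=\tfrac{t^{2}}{2}\pm i\tfrac{t}{2}\sqrt{4-t^{2}}$ with $t=\eta\sigma_i$, which is complex precisely because $\eta<2$ and $\sigma_i\le 1$ force $t<2$. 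The crucial observation is the identity $|\lambda|^{2}=2\,\mathrm{Re}(\lambda)$, i.e. every eigenvalue lies on the circle $|\lambda-1|=1$; since $\A$ is full rank the real parts are bounded below by $\eta^{2}\sigma_{\min}^{2}/2>0$, so the eigenvalue arc stays away from the origin and can be enclosed in a disk $D(c,r)$ with $c>0$ on the real axis and $r<c$, yielding $r/c<1$ (with $r/c\to 1$ only as $\sigma_{\min}\to 0$, i.e. as $\A$ becomes ill-conditioned).

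With the spectrum localized, the next step is the GMRES polynomial estimate of Theorem~\ref{thm:residual}: the admissible choice $q(z)=(1-z/c)^{p}$ satisfies $\max_{z\in D(c,r)}|q(z)|\le (r/c)^{p}$, and by the minimizing property of GMRES the residual contracts at least as fast as $q$ does. Diagonalizing $\G=\mathbf{V}_{\!\G}\mathbf{\Lambda}\mathbf{V}_{\!\G}^{-1}$ converts this spectral estimate into a norm estimate at the price of the eigenvector condition number $\kappa(\mathbf{V}_{\!\G})$; because the reducing transformation is block-diagonal, $\kappa(\mathbf{V}_{\!\G})=\max_i\kappa(\mathbf{V}_i)$ over the $2\times2$ blocks. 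Computing $\mathbf{V}_i^{*}\mathbf{V}_i$ for the block at $\sigma_i$ gives its eigenvalues $2\pm\eta\sigma_i$, hence $\kappa(\mathbf{V}_i)^{2}=\tfrac{2+\eta\sigma_i}{2-\eta\sigma_i}$; this is increasing in $\sigma_i$, so the worst block is the one with $\sigma_i=1$ and $\kappa(\mathbf{V}_{\!\G})=\sqrt{\tfrac{2+\eta}{2-\eta}}=\sqrt{1+\tfrac{2\eta}{2-\eta}}$, which is exactly the prefactor in the statement. Assembling these pieces over one restart cycle then produces a contraction of the form $N_{(k+1)p}^{2}\le \sqrt{1+\tfrac{2\eta}{2-\eta}}\,(r/c)^{p}N_{kp}^{2}$.

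The hard part will be the non-normality bookkeeping. Since GMRES minimizes the residual $\|\br_t\|$ rather than the error $N_t=\|\w^{*}-\w_t\|$, passing from the clean residual-polynomial bound to a bound on $N_t$ must be routed through the diagonalization, and the entire point is to show that the accumulated constant collapses to the single clean factor $\kappa(\mathbf{V}_{\!\G})=\sqrt{(2+\eta)/(2-\eta)}$ rather than blowing up with $\kappa(\G)$ or $\kappa(\A)$; tracking the exact power of $\kappa(\mathbf{V}_{\!\G})$ and of $r/c$ through this conversion is the delicate step. By contrast, establishing the eigenvalue identity $|\lambda-1|=1$, evaluating $\mathbf{V}_i^{*}\mathbf{V}_i$, and fitting the enclosing disk are explicit computations that I expect to be routine once the block structure is in place.
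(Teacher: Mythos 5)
Your proposal follows essentially the same route as the paper's proof: reduce to restarted GMRES via Theorem~\ref{thm:equvialence}, block-diagonalize $\G$ into the $2\times2$ blocks $\bigl[\begin{smallmatrix}0 & \eta\sigma_i\\ -\eta\sigma_i & \eta^2\sigma_i^2\end{smallmatrix}\bigr]$, enclose the eigenvalues in a disk $D(c,r)$ in the right half-plane, apply the polynomial $(1-z/c)^p$, and pay the eigenvector condition number $\sqrt{(2+\eta)/(2-\eta)}$ computed blockwise. Your extra observation that every eigenvalue satisfies $|\lambda-1|=1$ is a nice refinement the paper does not state explicitly, but the argument and all constants are otherwise identical.
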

\end{minipage}%
\hfill%
\begin{minipage}{0.3\textwidth}\raggedleft
\includegraphics[width=0.7\textwidth]{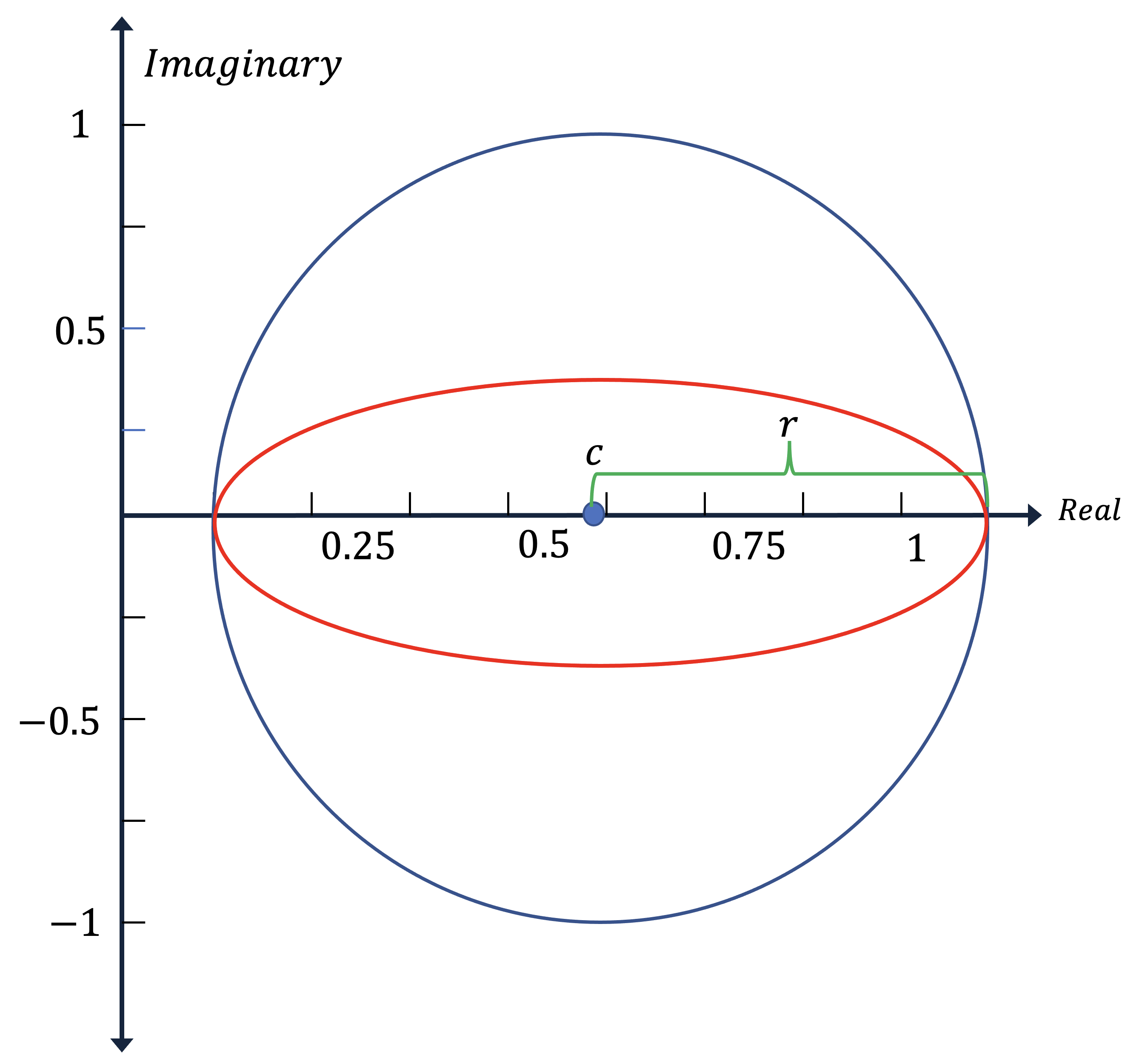}
\captionof{figure}{\begin{small}{An illustration of the spectrum of $\G$ (red) and the closing circle (blue) in Theorem \ref{thm:altaa}}\end{small}.}
\end{minipage}

Theorem \ref{thm:altaa} shows that when $p>\frac{\log\sqrt{\frac{2 - \eta}{2 + \eta}}}{\log \frac{r}{c}}$, alternating \methodName~will converge globally. 

{\subsection{Discussion of obtained rates}
We would like to first explain on why taking Chebyshev polynomial of degree p at the point $1+\frac{2}{\kappa-1}$. We evaluate the Chebyshev polynomial at this specific point because the reciprocal of this value gives the minimal value of infinite
 norm of the all polynomials of degree p defined on the 
interval $\Tilde{I} = [\eta^{2}\sigma^{2}_{min}(\A),~~\eta^2\sigma^{2}_{max}(\A)]$ based on Theorem 6.25 (page 209) \citep{saad2003iterative}.
In other words, taking the function value at this point leads to the tight bound. }

When comparing between existing bounds, we would like to point our our derived bounds are hard to compare directly.
The numerical experiments in figure 2b numerically verify that our bound is smaller than EG. We wanted to numerically compare our rate with EG with positive momentum. However the bound of EG with positive momentum is asymptotic. Moreover, it does not specify the constants so we can not numerically compare them. We do provide empirical comparison between GDA-AM and EG with positive momentum for bilinear problems in Appendix \ref{supsec:egpm}. It shows GDA-AM outperforms EG with positive momentum. Regarding alternating \methodName~, we would like to note that the bound in Theorem \ref{thm:altaa} depends on the eigenvalue distribution of the matrix $\mathbf{G}$. Condition number is not directly related to the distribution of eigenvalues of a nonsymmetric matrix $\mathbf{G}$. Thus, the condition number is not a precise metric to characterize the convergence. If these eigenvalues are clustered, then our bound can be small. On the other hand, if these eigenvalues are evenly distributed in the complex plane, then the bound can very close to $1$. 

More importantly, we would like to stress several technical contributions.

$\mathbf{1:}$ Our obtained Theorem \ref{thm:simAA1} and \ref{thm:altaa}  provide nonasymptotic guarantees, while most other work are asymptotic. For example, EG with positive momentum can achieve a asymptotic rate of $1-O(1/\sqrt{\kappa})$ under strong assumptions \citep{DBLP:conf/aistats/AzizianSMLG20}.

$\mathbf{2:}$  Our contribution is not just about fix the convergence issue of GDA by applying Anderson Mixing; another contribution is that we arrive at a convergent and tight bound on the original work and not just adopting existing analyses. We developed Theorem \ref{thm:simAA1} and \ref{thm:altaa} from a new perspective because applying existing theoretical results  fail to give us neither convergent nor tight bounds. 

$\mathbf{3:}$ Theorem \ref{thm:simAA1} and \ref{thm:altaa} only requires mild conditions and reflects how the table size $p$ controls the convergence rate. Theorem \ref{thm:simAA1} is independent of the learning rate $\eta$. However, the convergence results of other methods like EG and OG depend on the learning rate, which may yield less than desirable results for ill-specified learning rates.


\section{Experiments}
\label{sec:exp}
In this section, we conduct experiments to see whether \methodName~improves GDA for minimax optimization from simple to practical problems.  We first investigate performance of \methodName~on bilinear games. In addition, we evaluate the efficacy of our approach on GANs. 

\subsection{Bilinear Problems}
In this section, we answer following questions:
    \textbf{Q1:} How is \methodName~ perform in terms of iteration number and running time?
    \textbf{Q2:} How is the scalability of \methodName~?
    \textbf{Q3:} How is the performance of \methodName~using different table size $p$?
   \textbf{Q4:} Does \methodName~ converge for large step size $\eta$?

We compare the performance with SimGDA, AltGDA, EG, and OG, and EG with Negative Momentum(\cite{DBLP:conf/aistats/AzizianSMLG20}) on bilinear minimax games shown in \eqref{eq:bilinear} without any constraint.

$\A,\mathbf{b}, \mathbf{c}$, and initial points are generated using normally distributed random number. We set the maximum iteration number as \num{1e6}, stopping criteria \num{1e-5} and depict convergence by use of the norm of distance to optima, which is defined as $\lVert \mathbf{w}^{\ast} - \mathbf{w}_t \rVert$. Similar to \citet{DBLP:conf/aistats/AzizianSMLG20, lastiterate}, the step size is set as 1 after rescaling $\A$ to have 2-norm 1. We present results of different settings in Figures \ref{fig:bilineariter}, \ref{fig:bilineartime}, and  \ref{fig:bilineareffect}.

We first generate different problem size ($n=100, 1000, 5000$) and present results of convergence in terms of iteration number in Figure \ref{fig:bilineariter}. It can be observed that \methodName~ converges in much fewer iterations for different problem sizes. Note that EG, EG-NM, and OG converge in the end but requires many iterations, thus we plot only a portion for illustrative purposes. Figure \ref{fig:bilineartime} depicts the convergence for all methods in terms of time. It can be observed that the running time of \methodName~ is faster than EG. Although slower than OG, we can observe \methodName~ converges in much less time for all problems. Figure \ref{fig:bilineariter} and Figure \ref{fig:bilineartime} answer Q1 and Q2; although there is additional computation for \methodName~, it does not hinder the benefits of adopting Anderson Mixing. Even for a large problem size, \methodName~ still converges in much less time than the baselines. 
 
Next, we run \methodName~ using different table size $p$ and show the results in Figure \ref{fig:effect_p} and Figure \ref{fig:effect_ptime}. Figure \ref{fig:effect_p} indicates an increasing of table size results in faster convergence in terms of iteration number, which also verifies our claim in Theorem \ref{thm:simAA1}. However, we also observe an increased running time when using a larger table size in Figure \ref{fig:effect_ptime}. Further, we can see that $p=50$ converges in a comparable time and iterations to $p=100$. Similar results are found in repeated experiments as well. As a result, our answer to Q3 is that although a larger $p$ means less iterations, a medium $p$ is sufficient and a small $p$ still outperforms the baselines. The optimal choice of $p$ is related to the condition number and step size, which is another interesting topic in the Anderson Mixing community.  

Next, we answer Q4 on convergence under different step sizes. Although \methodName~ usually converges with suitable step size, our theorem suggests it requires a larger table size when combined with a extremely aggressive step size. Figure \ref{fig:effect_lr} shows the convergence under such circumstance. We can observe that although a very large step size goes the wrong way in the beginning, Anderson Mixing can still make it back on track except when $\eta>1$. It answers the question and confirms our claim that \methodName~can achieve global convergence for bilinear problems for a large step size $\eta>0$. 

\begin{figure*}[ht]
\begin{subfigure}[b]{0.33\textwidth}
\includegraphics[width=0.99\linewidth]{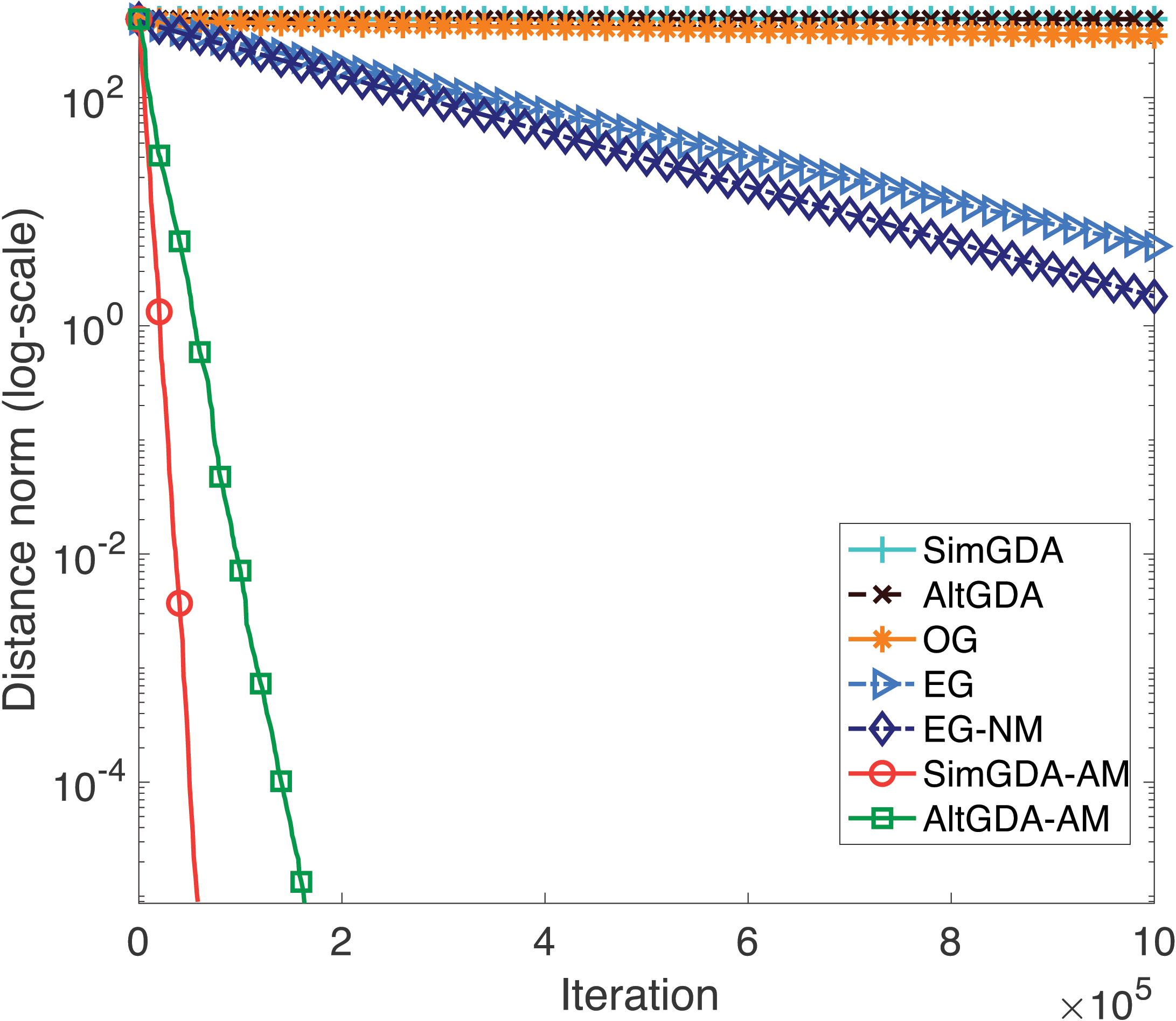}
\subcaption{$n=100$} 
\label{fig:n100}
\end{subfigure}
\begin{subfigure}[b]{0.33\textwidth}
\includegraphics[width=0.99\linewidth]{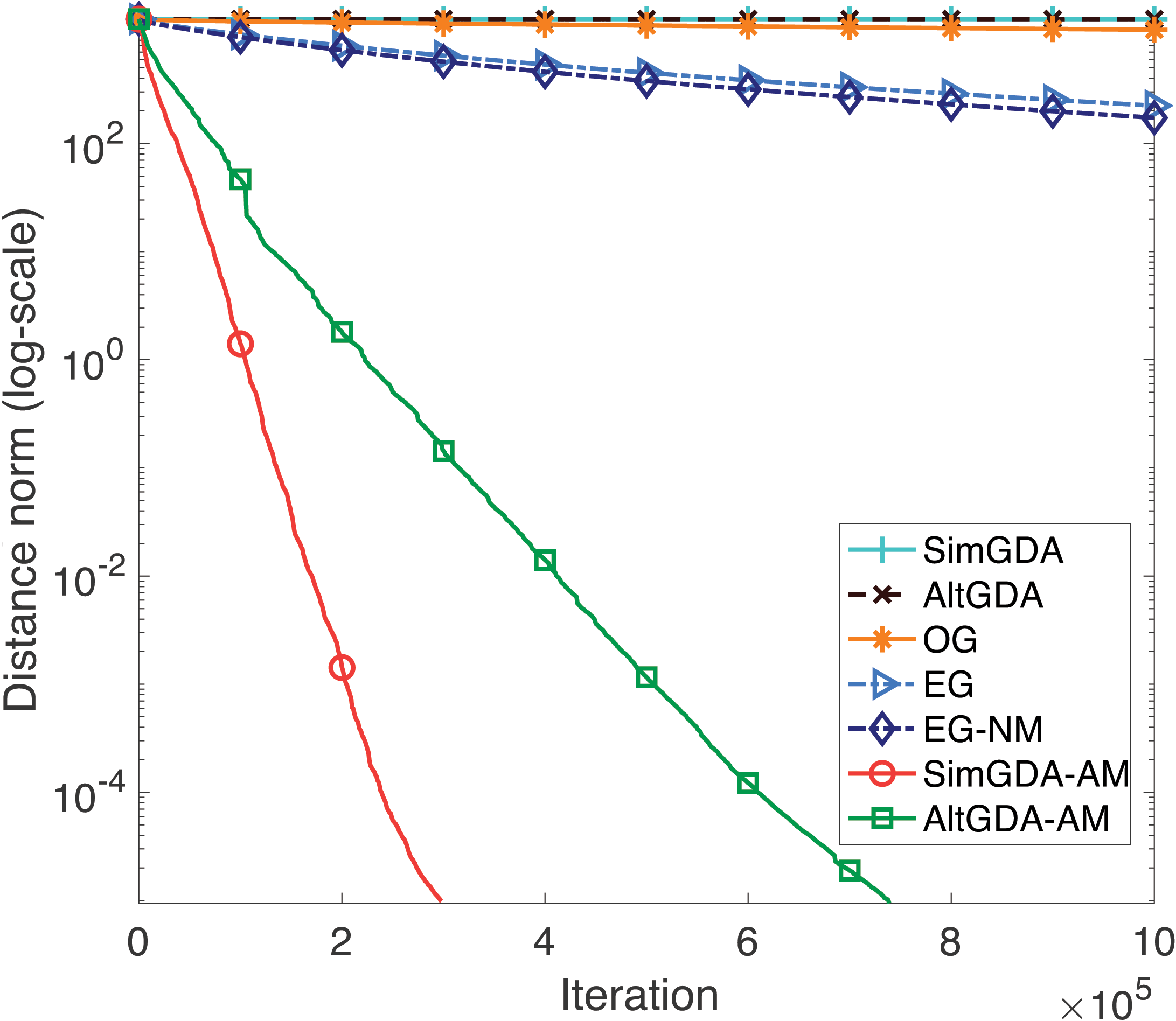}
\subcaption{$n=500$} 
\label{fig:n1000}
\end{subfigure}
\begin{subfigure}[b]{0.33\textwidth}
\includegraphics[width=0.99\linewidth]{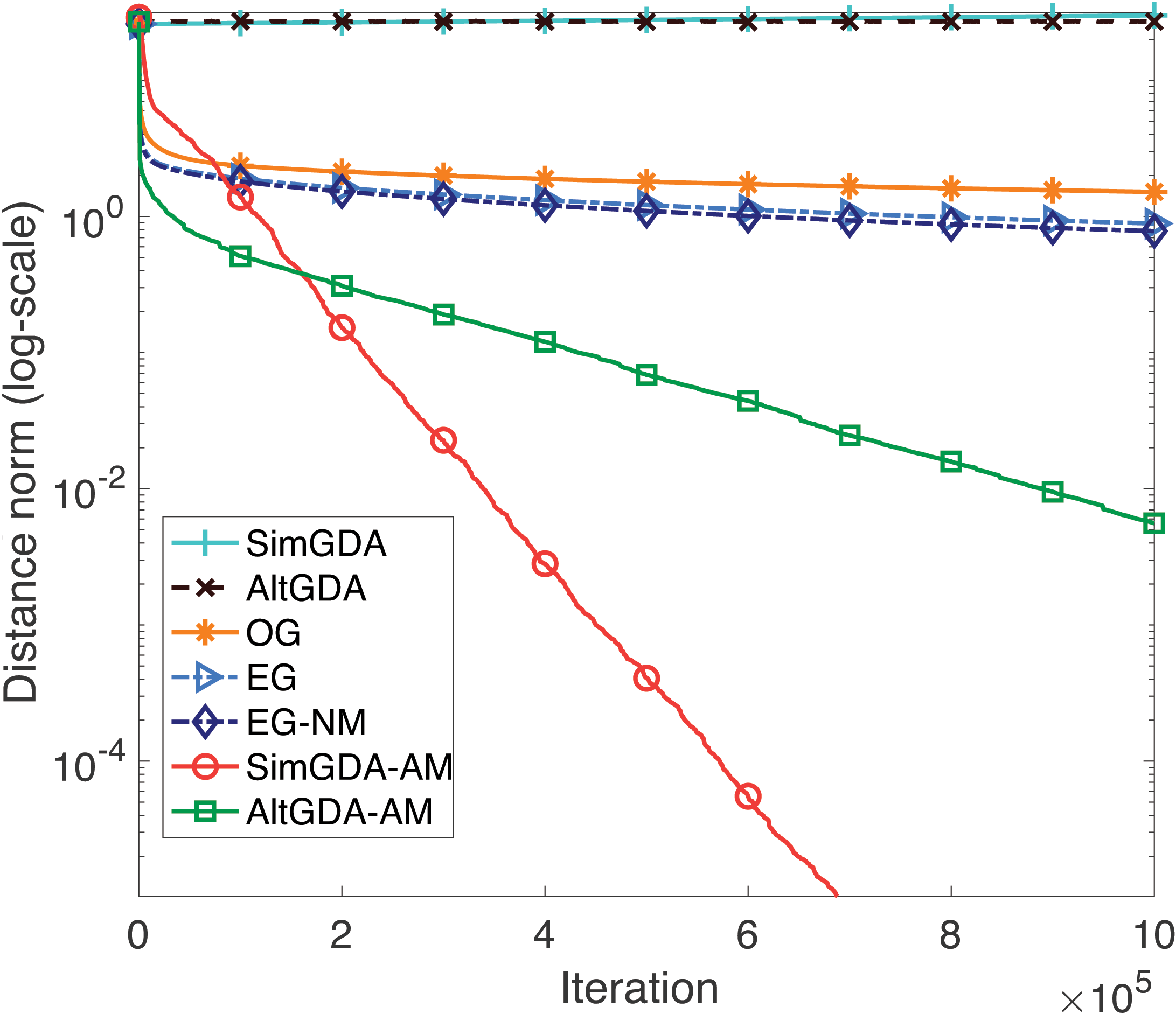}
\subcaption{$n=1000$} 
\label{fig:n5000}
\end{subfigure}

\caption{Comparison in terms of iteration: $\min_{\mathbf{x}}\max_{\mathbf{y}}f(\mathbf{x},\mathbf{y}) =  \mathbf{x}^{T}\mathbf{A}\mathbf{y} + \mathbf{b}^{T}\x + \mathbf{c}^T\y$. We use different problem size and fix $p=10, \eta=1$ for all experiments. }

\label{fig:bilineariter}
\end{figure*}

\begin{figure*}[ht]
\begin{subfigure}[b]{0.33\textwidth}
\includegraphics[width=0.99\linewidth]{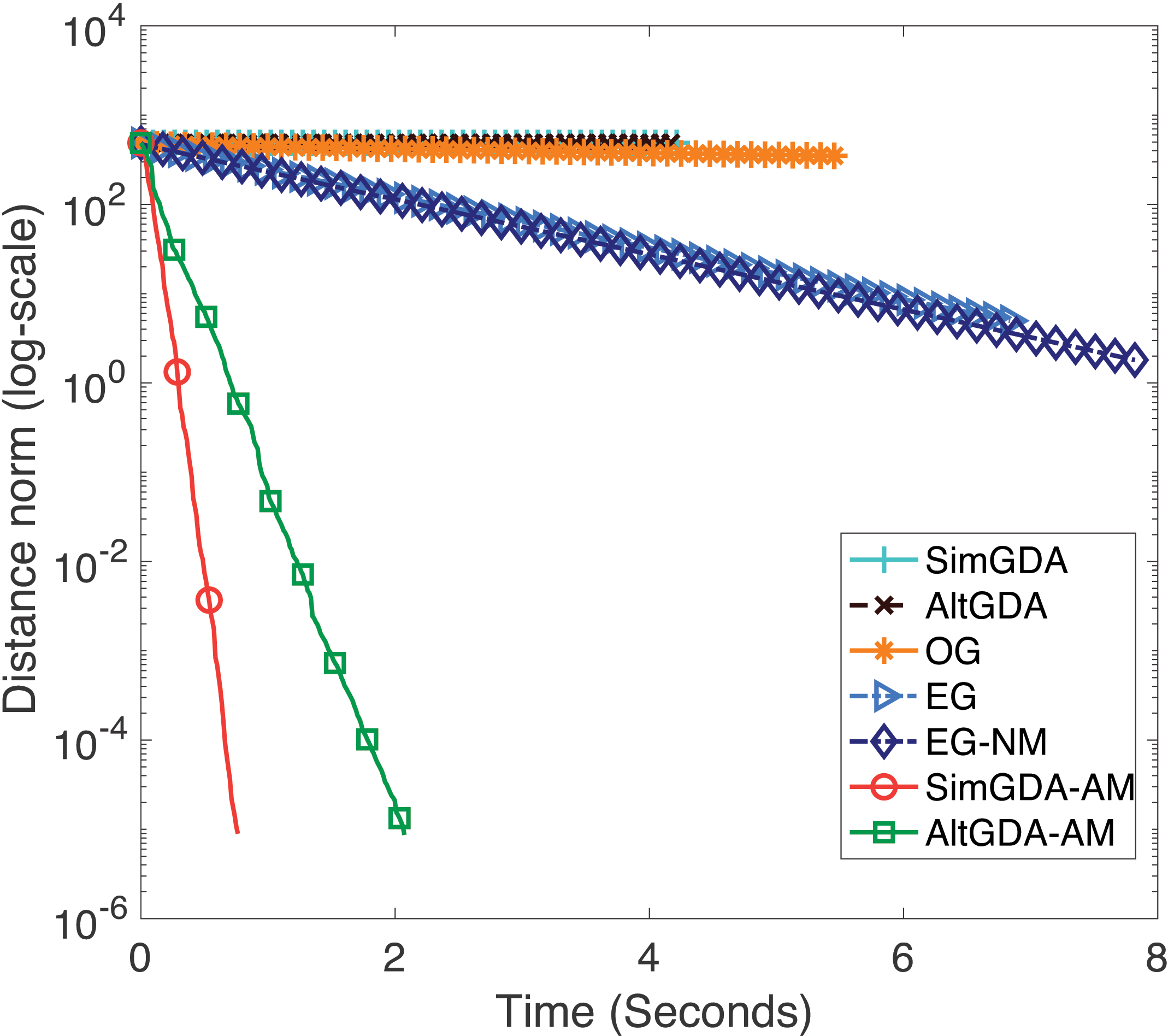}
\subcaption{Time comparison for Figure \ref{fig:n100} } 
\label{fig:t100}
\end{subfigure}
\begin{subfigure}[b]{0.33\textwidth}
\includegraphics[width=0.99\linewidth]{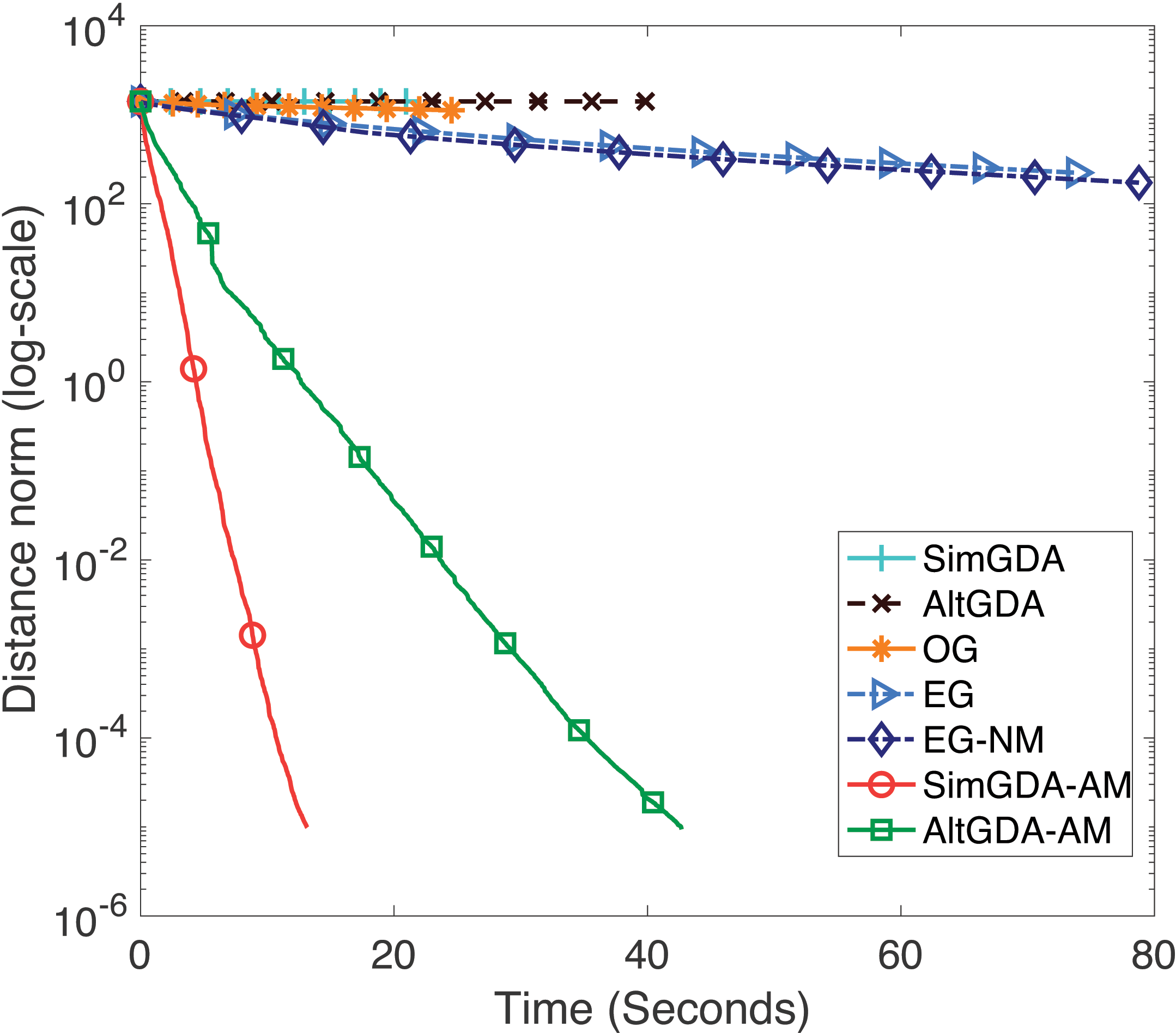}
\subcaption{Time comparison for Figure \ref{fig:n1000} } 
\label{fig:t1000}
\end{subfigure}
\begin{subfigure}[b]{0.33\textwidth}
\includegraphics[width=0.99\linewidth]{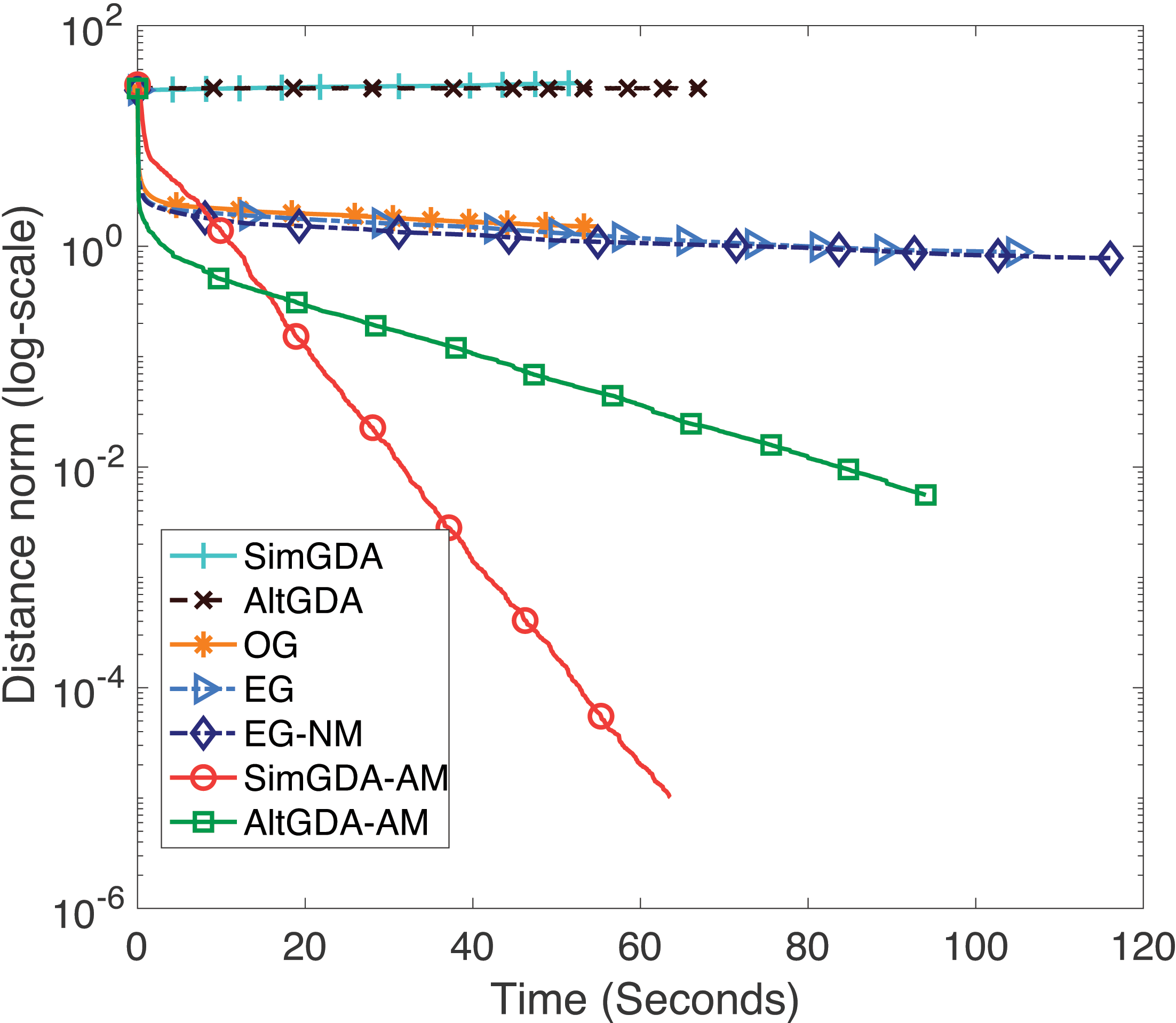}
\subcaption{Time comparison for Figure \ref{fig:n5000} } 
\label{fig:t5000}
\end{subfigure}
\caption{Comparison between methods in terms of time.}
\label{fig:bilineartime}
\end{figure*}

\begin{figure*}[ht]
\begin{subfigure}[t]{0.33\textwidth}
\includegraphics[width=0.99\linewidth]{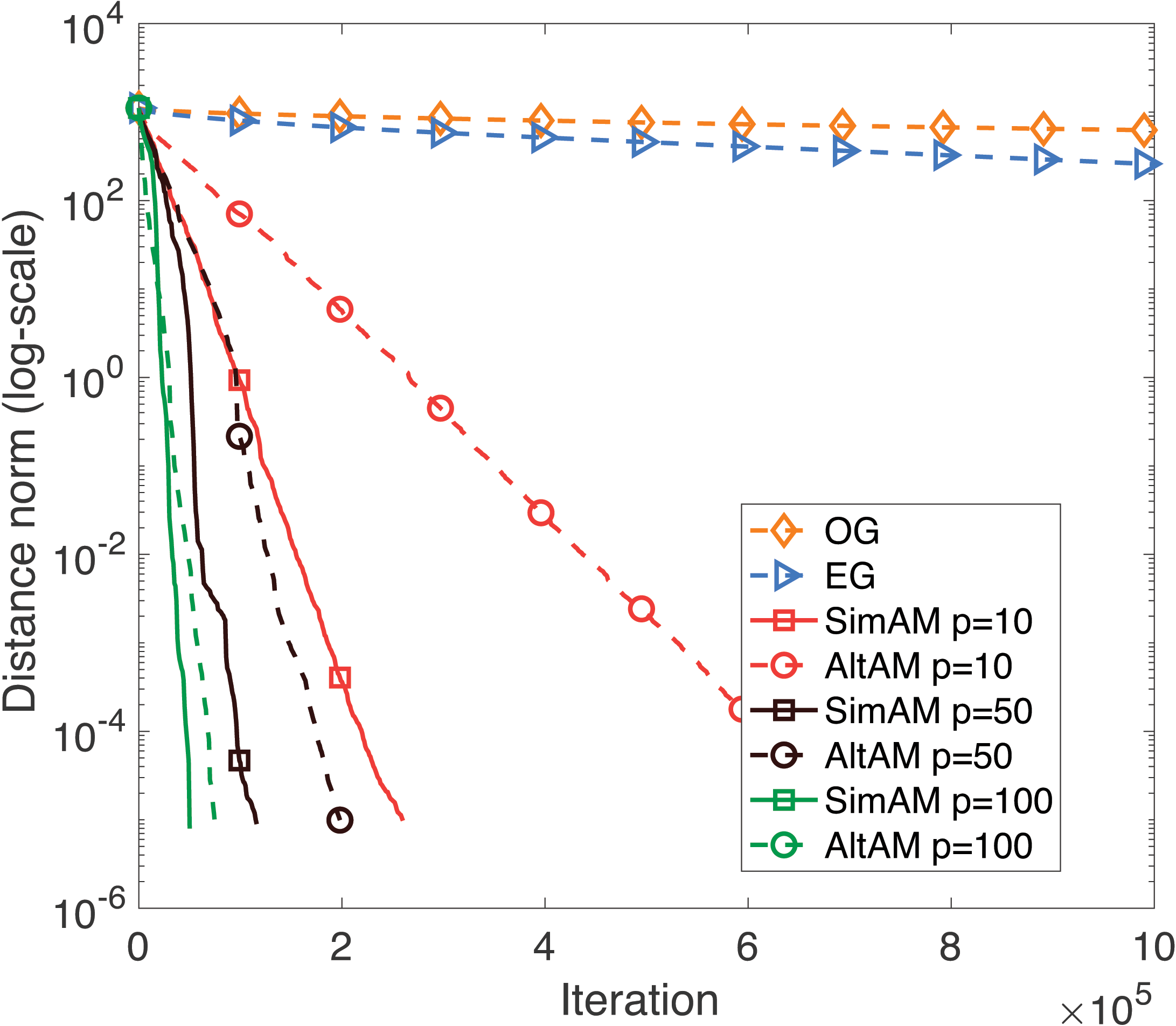}
\subcaption{Effects of $p$ in terms of iteration} 
\label{fig:effect_p}
\end{subfigure}
\begin{subfigure}[t]{0.33\textwidth}
\includegraphics[width=0.99\linewidth]{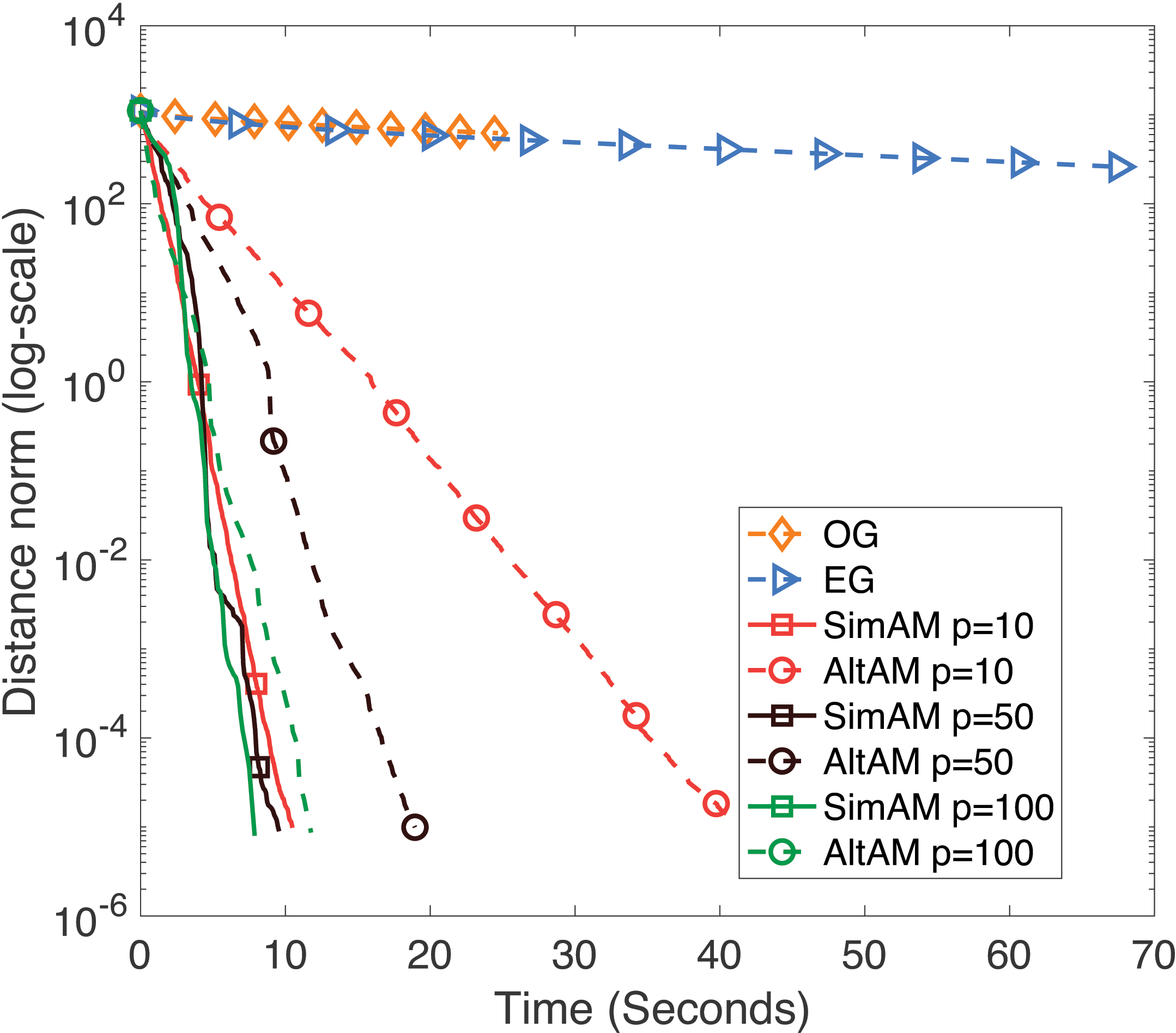}
\subcaption{Time compasion for Figure \ref{fig:effect_p}} 
\label{fig:effect_ptime}
\end{subfigure}
\begin{subfigure}[t]{0.33\textwidth}
\includegraphics[width=0.99\linewidth]{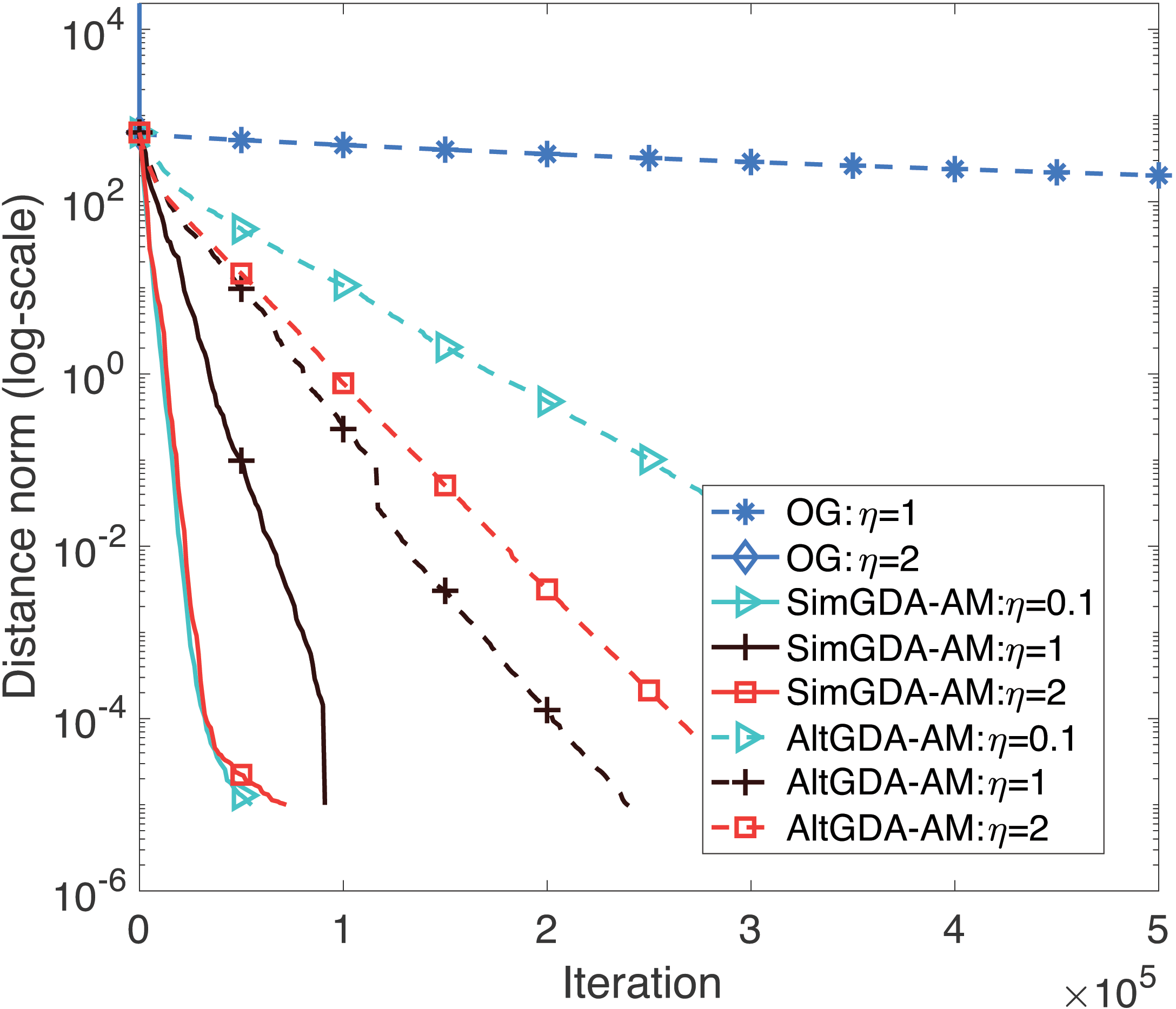}
\subcaption{Effect of step size $\eta$, $p=10$} 
\label{fig:effect_lr}
\end{subfigure}
\caption{Effects of table size $p$ and step size $\eta$, $n=500$}
\label{fig:bilineareffect}
\end{figure*}

\subsection{GAN Experiments: Image Generation}
We apply our method to the CIFAR10 dataset \citep{cifar} and use the ResNet architecture with WGAN-GP \citep{improved} and SNGAN \citep{Miyato2018SpectralNF} objective. We also compared the performance of \methodName~using cropped CelebA (64$\times64$) \citep{liu2015faceattributes} on WGAN-GP. We compare with Adam and extra-gradient with Adam (EG) as it offers significant improvement over OG. Models are evaluated using the inception score (IS) \citep{IS} and FID \citep{ttur} computed on 50,000 samples. For fair comparison, we fixed the same hyperparamters of Adam for all methods after an extensive search. Experiments were run with 5 random seeds. We show results in Table \ref{tab:cifaralllr}. Table \ref{tab:cifaralllr} reports the best IS and FID (averaged over 5 runs) achieved on these datasets by each method. We see that \methodName~yields improvements over the baselines in terms of generation quality.

\begin{table}[ht]
\caption{Best inception scores and FID for Cifar10 and FID for CelebA (IS is a less informative metric for celebA).}
    \centering
  \begin{tabular}{c c c c |c c}
  & \multicolumn{3}{c}{WGAN-GP(ResNet)}& \multicolumn{2}{c}{SNGAN(ResNet)}\\
  \hline
  & \multicolumn{2}{c}{CIFAR10 }&{CelebA} & \multicolumn{2}{c}{CIFAR10 }\\
  \hline
  Method & IS $\uparrow$& FID $\downarrow$& FID & IS& FID \\
  \hline
  Adam& 7.76 $\pm$.11 & 22.45 $\pm$.65 & 8.43 $\pm$.05 &8.21 $\pm$.05& 20.81  $\pm$.16\\
  EG  & 7.83 $\pm$.08 & 20.73 $\pm$.22 & 8.15 $\pm$.06 & 8.15 $\pm$.07 & 21.12 $\pm$.19 \\
  Ours (\methodName~)&\textbf{8.05} $\pm$.06 & \textbf{19.32} $\pm$.16 & \textbf{7.82} $\pm$.06& \textbf{8.38} $\pm$.04 & \textbf{18.84} $\pm$.13\\
  \hline
  \end{tabular}
\label{tab:cifaralllr}
\end{table}
\section{Conclusion}
We prove the convergence property of \methodName~and obtain a faster convergence rate than EG and OG on the bilinear problem. Empirically, we verify our claim for such a problem and show the efficacy of \methodName~in a deep learning setting as well. 
We believe our work is different from previous approaches and takes an important step towards understanding and improving minimax optimization by exploiting the GDA dynamic and reforming it with numerical techniques.  
\clearpage

\subsubsection*{Acknowledgments}
This work was funded in part by the NSF grant OAC 2003720, IIS 1838200 and NIH grant 5R01LM013323-03,5K01LM012924-03. 
\bibliography{main}
\bibliographystyle{iclr2022_conference}

\newpage
\appendix
\section{Related work}
There is a rich literature on different strategies to alleviate the issue of minimax optimization. {A useful add-on technique, Momentum, has been shown to be effective for bilinear games and strongly-convex-strongly-concave settings \citep{Zhang2021OnTS, NM, DBLP:conf/aistats/AzizianSMLG20}}. Several second-order methods \citep{pmlr-v89-adolphs19a, CO, Mazumdar2019OnFL, fr} show that their stable fixed points are exactly either Nash equilibria or local minimax by incorporating second-order information. However, such methods are computationally expensive and thus unsuitable for large applications such as image generation. Focusing on variants of GDA, EG and OG are two widely studied algorithms on improving the GDA dynamics. EG proposed to apply extra-gradient to overcome the cycling behaviour of GDA. OG, originally proposed in \citet{Popov1980AMO} and rediscovered in \citet{Daskalakis2018TrainingGW, DBLP:conf/iclr/MertikopoulosLZ19}, is more efficient by storing and re-using the extrapolated gradient for the extrapolation step. Without projection, OG is equivalent to extrapolation from past. \cite{pmlr-v108-unified} shows that both of these algorithms can be interpreted as approximations of the classical proximal point method and did a unified analysis for bilinear games. These approaches mentioned the GDA dynamics can be viewed as a fixed-point iteration, but none of them further provides a solution to improve it. In this work, we fill this gap by proposing the application of the extrapolation method directly on the entire GDA dynamics. Unlike OG, EG and their variants \citep{Hsieh2019OnTC, Lei2021LastIC, NEURIPS2019_05d0abb9, Yang2019ASE}, which regard minimax problems as variational inequality problems \citep{Bruck1977OnTW, Nemirovski2004ProxMethodWR}, our work is from a new perspective and thus orthogonal to these previous approaches. 

{In addition, several recent works consider nonconvex-concave minimax problems. \cite{zjw} introduced a “smoothing” scheme combined with GDA to stabilize the dynamic of GDA. \cite{luo} proposed a method called Stochastic Recursive
gradiEnt Descent Ascent (SREDA) for stochastic nonconvex-strongly-concave minimax problems, by estimating gradients recursively and reducing its variance. \cite{chijin} showed that the two-timescale GDA can find a stationary point of nonconvex-concave minimax problems effectively. \cite{ostrovskii2021efficient} proposed a variant of Nesterov’s accelerated algorithm to find $\epsilon$ -first-order Nash equilibrium that is a stronger criterion than the commonly used proximal gradient norm. \cite{iterative} proposed a iterative method that finds $\epsilon$ -first-order Nash equilibrium in $O(\epsilon^{-2})$ iterations under Polyak-Lojasiewicz (PL) condition. Focusing on nonconvex minimax problems, they studied an interesting and difficult problem. Since our work cast insight on the effectiveness of solving minimax optimization via Anderson Mixing, we expect the extension of this algorithm to general nonconvex problems can be further investigated in the future. 
}

\section{Anderson Mixing Implementation Details}
In this section, we discuss the efficient implementation of Anderson Mixing. We start with generic Anderson Mixing prototype (Algorithm \ref{suppalg:AA}) and then present the idea of Quick QR-update Anderson Mixing implementation as described in \citet{walkerAA}, which is commonly used in practice. For each iteration $t \geq 0$ , AM prototype solves a least squares problem with a normalization constraint. The intuition is to minimize the norm of the weighted residuals of the previous $m$ iterates. 
\begin{algorithm}[H]
        \SetAlgoLined
        \KwIn{Initial point $w_0$, Anderson restart dimension $p$, fixed-point mapping $g: \mathbf{R}^{n} \rightarrow \mathbf{R}^{n}.$}
        \KwOut{${w}_{t+1}$ }
        \For{t = 0, 1, \dots}{
        Set $p_t = \min\{t,p\}$.\\
        Set $F_t=[f_{t-p_t}, \dots, f_t$], where $f_i=g(w_i)-w_i$ for each $i \in [t-p_t\isep t]$.\\
        Determine weights $\mathbf{\beta}=(\beta_0,\dots, \beta_{p_t})^{T}$ that solves
        $
        \min _{\mathbf{\beta}}\left\|F_{t} \mathbf{\beta}\right\|_{2}, \text { s. t. } \sum_{i=0}^{p_{t}} \beta_{i}=1$.
        \\
        Set $w_{t+1}=\sum_{i=0}^{p_{t}} \beta_{i} g\left(w_{t-p_{t}+i}\right)$.
        }
         \caption{Anderson Mixing Prototype (truncated version)}
         \label{suppalg:AA}
    \end{algorithm}

The constrained linear least-squares problem in Algorithm AA can be solved in a number of ways. Our preference is to recast it in an unconstrained form suggested in \citet{fangqr,walkerAA} that is straightforward to solve and convenient for implementing efficient updating of QR. 

Define $f_i=g(w_i)-w_i$, $\triangle f_i= f_{i+1}-f_i$ for each $i$ and set $F_t=[f_{t-p_t}, \dots, f_t$], $\mathcal{F}_t=[\triangle f_{t-p_t}, \dots, \triangle f_t]$. Then solving the
least-squares problem ($\min _{\mathbf{\beta}}\left\|F_{t} \mathbf{\beta}\right\|_{2}, \text { s. t. } \sum_{i=0}^{p_{t}} \beta_{i}=1$) is equivalent to 
\begin{equation}
\label{eqn:unAA}
\min _{\gamma=\left(\gamma_{0}, \ldots, \gamma_{p_{t-1}}\right)^{T}}\left\|f_{t}-\mathcal{F}_{t} \gamma\right\|_{2}
\end{equation}
where $\alpha$ and $\gamma$ are related by $\alpha_{0}=\gamma_{0}, \alpha_{i}=\gamma_{i}-\gamma_{i-1}$ for $1 \leq i \leq p_{t}-1$, and $\alpha_{p_{t}}=1-\gamma_{p_{t}-1}$.

Now the inner minimization subproblem can be efficiently solved as an unconstrained least squares problem by a
simple variable elimination. This unconstrained least-squares problem leads to a modified form of Anderson Mixing
\begin{equation*}
w_{t+1}=g\left(w_{t}\right)-\sum_{i=0}^{p_{t}-1} \gamma_{i}^{(t)}\left[g\left(w_{t-p_{t}+i+1}\right)-g\left(w_{t-p_{t}+i}\right)\right]=g\left(w_{t}\right)-\mathcal{G}_{t} \gamma^{(t)}
\end{equation*}
where $\mathcal{G}_t=[\triangle g_{t-p_t}, \dots, \triangle g_{t-1}]$ with  $ \triangle g_i= g(w_{i+1})-g(w_i)$ for each $i$. 

To obtain $\gamma^{(t)}=\left(\gamma_{0}^{(t)}, \ldots, \gamma_{p_{t}-1}^{(t)}\right)^{T}$ by solving \eqref{eqn:unAA} efficiently, we show how the successive least-squares problems can be solved efficiently by updating the factors in the QR decomposition $\mathcal{F}_t = Q_tR_t$ as the algorithm proceeds. We assume a think QR decomposition, for which the solution of the least-squares problem is obtained by solving the $p_t \times p_t$ linear system $R \gamma=Q^{\prime}*f_{t}$. Each $\mathcal{F}_t$ is $n \times p_t$ and is obtained from $\mathcal{F}_{t-1}$ by adding a column on
the right and, if the resulting number of columns is greater than $p$, also cleaning up (re-initialize) the table. That is,we never need to delete the left column because cleaning up the table stands for a restarted version of AM. As a result, we only need to handle two cases; 1 the table is empty(cleaned). 2 the table is not full.
When the table is empty, we initialize $\mathcal{F}_1=Q_1R_1$ with $Q_1=\triangle f_{0} /\left\|\triangle f_{0}\right\|_{2}$ and $R=\left\|\triangle f_{0}\right\|_{2}$. If the table size is smaller than $p$,  we add a column on the right of $\mathcal{F}_{t-1}$. Have $\mathcal{F}_{t-1} =QR$, we update $Q$ and $R$ so that $\mathcal{F}_{t}=\left[\mathcal{F}_{t-1}, \Delta f_{t-1}\right]=Q R$. 
It is a single modified Gram–Schmidt sweep that is described as follows:
\begin{algorithm}[H]
        \SetAlgoLined
        \For{$i = 1,\dots, p_{t-1}$}{
        Set $R(i, p_t) = Q(:,i)' * \triangle f_{t-1}$.\\
        Update $\triangle f_{t-1}\leftarrow \Delta f_{t-1}-R\left(i, p_{t}\right) * Q(:, i)$ 
        }
        Set $Q\left(:, p_{t}\right)=\triangle f_{t-1} /\left\|\triangle f_{t-1}\right\|_{2}$ and $R\left(p_{t}, p_{t}\right)=\left\|\Delta f_{t-1}\right\|_{2}$
         \caption{QR-updating procedures}
         \label{suppalg:QR}
    \end{algorithm}
Note that we do not explicitly conduct QR decomposition in each iteration, instead we update the factors ($O(p^2n)$) and then solve a linear system using back substitution which has a complexity of $O(p^2)$. Based on this complexity analysis, we can find Anderson Mixing with QR-updating scheme has limited computational overhead than GDA (or OG). This explains why \methodName~ is faster than EG but slower than OG in terms of running time of each iteration.   

\section{Theoretical Results}

\label{supsec: proof}
{
\subsection{Difficulty of analysis on GDA with Anderson Mixing}
\label{supsec: diff}
In the analysis, we study the inherent structures of the dynamics of the fixed point iteration and provide the convergence analysis for both simultaneous and alternating schemes. We want to emphasize that the direct application of existing convergence results of GMRES can not lead to convergent results. A recent paper \cite{bollapragada2018nonlinear} study the convergence acceleration schemes for multi-step optimization algorithms using Regularized Nonlinear Acceleration. We also want to point out that a na{\"i}ve application of Crouzeix's bound to the minimax optimization problem can not be used to derive the convergent result.
\begin{theorem}[\cite{fischer1991chebyshev}]
\label{thm:ellipse-optimal}
Let $n \ge 5$ be an integer, $r>1$, and $c \in \mathbb{R} .$
Consider the following constrained polynomial minmax problem
\begin{equation}
\min _{p \in \mathbb{P}_{n}: p(c)=1} \max _{z \in \mathscr{E}_{r}}|p(z)|
\end{equation}
where
\begin{equation}
\mathscr{E}_{r}:=\left\{z \in \mathbb{C}~~\Big|~~| z-1|+| z+1| \le r+\frac{1}{r}\right\}
\end{equation}
 and $c\in \mathbb{C}\setminus \mathscr{E}_{r}$.
Then this problem can be solved uniquely by
\begin{equation}
t_{n}(z ; c):=\frac{T_{n}(z)}{T_{n}(c)},
\end{equation}
where
\begin{equation}
T_{n}(z)=\frac{1}{2}\left(v^{n}+\frac{1}{v^{n}}\right), \quad z=\frac{1}{2}\left(v+\frac{1}{v}\right)
\end{equation}
if \\
(a) $|c| \ge \frac{1}{2}\left(r^{\sqrt{2}}+r^{-\sqrt{2}}\right)$ or\\
(b) $|c| \ge \left(1 / 2 a_{r}\right)\left(2 a_{r}^{2}-1+\sqrt{2 a_{r}^{4}-a_{r}^{2}+1}\right),$
where $a_{r}:=\frac{1}{2}\left(r+\frac{1}{r}\right)$.
\end{theorem}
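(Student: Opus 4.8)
The plan is to pass to the Joukowski coordinate, recast optimality as a best-approximation problem, and turn it into a nonnegativity condition on dual multipliers that is exactly what (a) and (b) guarantee. First I would use the substitution $z=\frac12(v+v^{-1})$, under which $\mathscr{E}_r$ is the image of the closed annulus $\{r^{-1}\le|v|\le r\}$ and its boundary $\partial\mathscr{E}_r$ is the image of $|v|=r$; a degree-$n$ polynomial pulls back to a symmetric Laurent polynomial, and $T_n$ pulls back to $\frac12(v^n+v^{-n})$. By the maximum-modulus principle the sup-norm over $\mathscr{E}_r$ equals the sup-norm over $\partial\mathscr{E}_r$, and a direct computation on $|v|=r$ gives $\|T_n\|_{\mathscr{E}_r}=\frac12(r^n+r^{-n})$, attained exactly at the $2n$ boundary points $z_k=\frac12(re^{\,ik\pi/n}+r^{-1}e^{-ik\pi/n})$ with $T_n(z_k)=(-1)^k\tfrac12(r^n+r^{-n})$ real and of alternating sign. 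This identifies the \emph{extremal set} $M=\{z_k\}$ of the candidate $t_n$ together with its phase pattern.

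Next I would reformulate optimality. Writing a feasible $p$ as $t_n+q$ with $q$ in the $n$-dimensional subspace $V=\{q\in\mathbb{P}_n: q(c)=0\}$, the constrained problem is exactly best sup-norm approximation of $t_n$ from $V$, and $t_n$ is optimal iff $0$ is that best approximation. By the Kolmogorov criterion for complex Chebyshev approximation this holds iff $\max_{z\in M}\mathrm{Re}\big[\overline{t_n(z)}\,q(z)\big]\ge0$ for every $q\in V$; by a Carath\'eodory/duality argument, and since the annihilator of $V$ inside $\mathbb{P}_n^{*}$ is spanned by evaluation at $c$, this is equivalent to the existence of weights $\lambda_k\ge0$, not all zero, and a scalar $\mu$ with
\begin{equation*}
\sum_{k=0}^{2n-1}\lambda_k\,\overline{t_n(z_k)}\,s(z_k)=\mu\,s(c)\qquad\text{for all }s\in\mathbb{P}_n.
\end{equation*}

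I would then solve this moment system explicitly by testing against the basis $s=T_j$, $j=0,\dots,n$, using $T_j(z_k)=\frac12(r^je^{\,ijk\pi/n}+r^{-j}e^{-ijk\pi/n})$; the discrete orthogonality of $\{e^{\,ijk\pi/n}\}$ over the equally spaced nodes decouples the equations and yields closed-form candidate weights $\lambda_k$ as trigonometric sums in the ratios $r^{\pm j}/T_j(c)$. The \textbf{main obstacle}, and the entire content of the theorem, is proving that these weights are nonnegative: nonnegativity fails when $c$ sits too close to $\mathscr{E}_r$ and holds once $c$ is pushed out far enough. I expect two routes to the sufficient conditions: a crude estimate (Cauchy--Schwarz / an $\ell^2$ bound on the node sums) trading sharpness for simplicity and producing the threshold $|c|\ge\frac12(r^{\sqrt2}+r^{-\sqrt2})$ of (a), and a sharper termwise estimate producing the exact algebraic threshold of (b) expressed through $a_r=\frac12(r+r^{-1})$. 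The hypothesis $n\ge5$ enters precisely here, ensuring enough extremal nodes for the sign analysis of the $\lambda_k$ to close.

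Finally, uniqueness would follow from the unicity theory of Chebyshev approximation: the extremal set $M$ carries $2n\ge n+2$ points with the full alternating phase pattern, so the relevant Haar/strong-unicity condition for $\mathbb{P}_n$ holds and the best approximation from $V$, hence the minimizer $t_n$, is unique. I would conclude by checking feasibility ($t_n(c;c)=1$) and that the value $\|t_n\|_{\mathscr{E}_r}=\frac{r^n+r^{-n}}{2\,|T_n(c)|}$ matches the optimum certified by the dual weights.
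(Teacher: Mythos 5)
This theorem is imported verbatim from \cite{fischer1991chebyshev}; the paper contains no proof of it, so your attempt can only be measured against the argument in the literature. On that score, your skeleton is essentially the right one, and in fact mirrors Fischer and Freund's own strategy: the Joukowski substitution $z=\frac12(v+v^{-1})$ identifies the $2n$ extremal points $z_k=\frac12(re^{ik\pi/n}+r^{-1}e^{-ik\pi/n})$ with $T_n(z_k)=(-1)^k\frac12(r^n+r^{-n})$; the normalized problem is best sup-norm approximation of $t_n$ from the subspace $V=(z-c)\,\mathbb{P}_{n-1}$, which has codimension one in $\mathbb{P}_n$; the Kolmogorov criterion plus Carath\'eodory duality reduce optimality of $t_n$ to the existence of nonnegative weights $\lambda_k$ solving your moment system with $\mu\,\delta_c$ (and testing against $s=t_n$ shows $\mu>0$); and the DFT inversion is sound, since the modes $r^{\pm j}e^{\pm ijk\pi/n}$, $0\le j\le n$, with the $j=n$ terms aliasing onto $(-1)^k$, span exactly the $2n$ discrete frequencies, so the candidate weights are uniquely determined explicit trigonometric sums. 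Your uniqueness argument also goes through, though more simply than you state it: $V$ is a complex Haar subspace of dimension $n$ on $\mathscr{E}_r$ (a nonzero $(z-c)q$ has at most $n-1$ zeros there because $c\notin\mathscr{E}_r$), and uniqueness of best approximation from complex Haar subspaces is classical; no alternation count is needed.

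The genuine gap is that everything specific to this theorem is deferred. The reduction you carry out is the soft part; the actual content of the result is the verification that the explicitly computed $\lambda_k$ are nonnegative precisely when (a) or (b) holds, and this is where you stop, writing only that you ``expect'' a Cauchy--Schwarz route to yield the threshold $|c|\ge\frac12(r^{\sqrt2}+r^{-\sqrt2})$ and a termwise estimate to yield (b). Neither mechanism is substantiated by any computation, and there is no indication of how the exponent $\sqrt2$ or the algebraic expression in $a_r$ would emerge from your cosine sums, nor why $n\ge5$ is required for the sign analysis. This is not a routine estimate one can wave at: the title result of \cite{fischer1991chebyshev} is that the weights genuinely go negative when $c$ is close to $\mathscr{E}_r$, so the nonnegativity analysis is delicate and constitutes the bulk of that paper. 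As written, your proposal establishes the correct dual certificate framework and conditional uniqueness, but leaves the theorem itself --- sufficiency of conditions (a) and (b) --- unproven.
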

This is because the point $0$ where all the residual polynomials take the fixed value of $1$ is included in the numerical range of the iteration matrix, which violates the assumption of Theorem \ref{thm:ellipse-optimal}. As a result, it can not be used to prove that the residual norm is decreasing based on this approach. Instead, we show that although the coefficient matrix is non-normal, it is diagonalizable. We then give the convergence results based on the eigenvalues instead of the numerical range.
More specifically, Anderson mixing is equivalent to GMRES being applied to solve the following linear system:
\begin{equation}
\label{eq:Alt-GMRES}
    (\mathbf{I} - \G^{(Alt)})\w = \bb^{(Alt)}, \quad \text{with } \w_{0} = \w^{(Alt)}_{0}.
\end{equation}
Writing this linear system in the block form:
\begin{equation}
    \begin{bmatrix}
    \mathbf{0} & \eta\mathbf{A}\\
    -\eta\mathbf{A}^{T} &\eta^2\mathbf{A}^{T}\A
    \end{bmatrix}\w = \bb^{(Alt)}.
\end{equation}
The residual norm bound for GMRES reads:
\begin{equation}
    \|\br_t\|_2 = \min_{p\in \mathbb{P}^{1}_{t}}\| p(\mathbf{I}-\G^{(Alt)})\br_0\|_2.
\end{equation}
Notice that the matrix $(\mathbf{I} - \G^{(Alt)})$ is non-normal. If we apply Crouzeix's bound in \cite{crouzeix2017numerical} to our problem as \cite{bollapragada2018nonlinear} did, then we have the following bound
\begin{equation}
     \frac{\|\br_t\|_2}{\|\br_0\|_2 }  \le \min_{p\in \mathbb{P}^{1}_{t}}\| p(\mathbf{I}-\G^{(Alt)})\| \le (1+\sqrt{2})\min_{p\in \mathbb{P}^{1}_{t}}\sup_{z\in W(\mathbf{I}-\G^{(Alt)})}\| p(z)\|
\end{equation}
where $W(\mathbf{I}-\G^{(Alt)}) = \{\mathbf{z}^{*}(\mathbf{I}-\G^{(Alt)})\mathbf{z}, \forall z\in \mathbb{C}^{2n}\setminus\{\mathbf{0}\}  , \|z\| = 1 \}$ is the numerical range for $\mathbf{I}-\G^{(Alt)}$. 
In order to simplify the upper bound in the previous theorem, we study the numerical range of $\mathbf{I}-\G^{(Alt)}$ similar to \cite{bollapragada2018nonlinear}. Writing 
$\mathbf{z} =
\begin{bmatrix}
\mathbf{z}_1\\
\mathbf{z}_2
\end{bmatrix}$ 
and computing the numerical range of $\mathbf{I}-\G^{(Alt)}$ explicitly yields:
\begin{equation}
\begin{bmatrix}
\mathbf{z}_1^{*}, \mathbf{z}_2^{*}
\end{bmatrix}
\begin{bmatrix}
\mathbf{0} & \eta\mathbf{A}\\
-\eta\mathbf{A}^{T} &\eta^2\mathbf{A}^{T}\A
\end{bmatrix}
\begin{bmatrix}
\mathbf{z}_1\\
\mathbf{z}_2
\end{bmatrix}
= \eta^2\mathbf{z}_2^{*}\mathbf{A}^{T}\A\mathbf{z}_2 + \eta\mathbf{z}_1^{*}\A\mathbf{z}_2 - \eta\mathbf{z}_2^{*}\A^{T}\mathbf{z}_1.
\end{equation}
For a general matrix $A$, there is no special structure about the numerical range of $\mathbf{I}-\G^{(Alt)}$. However, when $\A$ is  symmetric, we can decompose $\A$ as $\A =\sum_{i=1}^{n} \lambda_{i} \bv_{i} \bv_{i}^{T}$ where $\{\lambda_{i}\}_{i=1}^{n}$ are eigenvalues of $\A$ in decreasing order and $\{\bv_{i}\}_{i=1}^{n}$ are associated eigenvectors, and write $\mathbf{A}^{T}\A = \sum_{i=1}^{n}\lambda_{i}^{2}\bv_i\bv_i^{T}$. Then we can compute the numerical range of $\G^{(Alt)}$ as follows: 
\begin{equation}
\label{eq-convex}
\sum_{i}^{n}
\begin{bmatrix}
\mathbf{z}_1^{*}, \mathbf{z}_2^{*}
\end{bmatrix}
\begin{bmatrix}
\mathbf{0} & \eta \lambda_{i} \bv_{i} \bv_{i}^{T}\\
-\eta\lambda_{i} \bv_{i} \bv_{i}^{T} &\eta^2\lambda_{i}^{2}\bv_i\bv_i^{T}
\end{bmatrix}
\begin{bmatrix}
\mathbf{z}_1\\
\mathbf{z}_2
\end{bmatrix}
= 
\sum_{i}^{n}
\begin{bmatrix}
\mathbf{z}_1^{*} \bv_{i}, \mathbf{z}_2^{*} \bv_{i}
\end{bmatrix}
\begin{bmatrix}
{0} & \eta \lambda_{i} \\
-\eta\lambda_{i}  &\eta^2\lambda_{i}^{2}
\end{bmatrix}.
\begin{bmatrix}
\bv_{i}^{T}\mathbf{z}_1\\
\bv_{i}^{T}\mathbf{z}_2
\end{bmatrix}
\end{equation}
Following the techniques proposed in \cite{bollapragada2018nonlinear} to analyze the numerical range of general $2\times 2$ matrices, we can show that the numerical range of $\mathbf{I}-\G^{(Alt)}$ is equal to the convex hull of the union of the numerical range of 
\begin{equation}
\label{Alt-ellipsegen}
\G_{i}=
\begin{bmatrix}
0 & \eta \lambda_{i} \\
-\eta\lambda_{i}  &\eta^2\lambda_{i}^{2}
\end{bmatrix} \quad, i = 1,\dots, n.
\end{equation}
And the boundary of numerical range of $\G_i$ is an ellipse whose axes are the line segments joining the points x to y and w to z, respectively, with
\begin{equation}
\label{eq:Alt-ellipseaxes}
    x = 0, \quad y = \eta^2\lambda_{i}^{2}, \quad, w = \frac{\eta^2\lambda_{i}^{2}}{2} - \sqrt{-1}\eta|\lambda_{i}| , \quad z = \frac{\eta^2\lambda_{i}^{2}}{2} + \sqrt{-1}\eta|\lambda_{i}|.
\end{equation}
Thus,  the numerical range of $\mathbf{I}-\G^{(Alt)}$ can be spanned by convex hull of the union of the numerical range of a set of 2-by-2 matrices and the numerical range of each such a 2-by-2 matrix is an ellipse. We can compute the center o and focal distance d of the ellipse generated by numerical range of $\mathbf{I}-\G^{(Alt)}$ explicitly. Then a linear transformation enables us to use Theorem \ref{thm:ellipse-optimal} to show that the near-best polynomial for the minimax problem on the numerical range of $\mathbf{I}-\G^{(Alt)}$ is given by $t_{n}(z ; c):=\frac{T_{n}(\frac{z-o}{d})}{T_{n}(\frac{c-o}{d})}$ if $0$ is excluded from the numerical range of $\mathbf{I}-\G^{(Alt)}$. 
However, according to equation \ref{eq:Alt-ellipseaxes} the numerical range includes the point 0 where the residual polynomial  takes value 1, thus the analysis based on numerical range can not help derive the convergent result as the upper bound is not guaranteed to be less than 1. 
}

\subsection{Proofs of theorem}
We first provide proof of Theorem \ref{thm:simAA1}. 
\begin{theorem}[Global convergence for simultaneous \methodName~ on bilinear problem] 
Denote the distance between the stationary point $\w^{*}$ and current iterate $\w_{(k+1)p}$ of Algorithm \ref{alg:AAsim} with Anderson restart dimension $p$ as $N_{(k+1)p} = dist(\w^{*},\w_{(k+1)p})$. 
Then we have the following bound for $N_{t}$
Algorithm \ref{alg:AAsim} is unconditionally convergent
\begin{equation}
       N_{(k+1)p}
       \le \frac{1}{T_{p}(1 + \frac{2}{\kappa(\A^{T}\A) - 1})} N_{kp}
\end{equation}
where $T_{p}$ is the Chebyshev polynomial of first kind of degree p and $\frac{1}{T_{p}(1 + \frac{2}{\kappa(\A^{T}\A) - 1})} < 1$ since $1 + \frac{2}{\kappa(\A^{T}\A) - 1} > 1$.
\end{theorem}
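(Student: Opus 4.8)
The plan is to reduce the convergence of Algorithm \ref{alg:AAsim} over one restart cycle to a scalar Chebyshev minimax problem, via the AM--GMRES equivalence. First I would invoke Theorem \ref{thm:equvialence}: each restart cycle of simultaneous \methodName{} applies restarted Anderson Mixing to the fixed-point map $g(\w)=\G^{(Sim)}\w-\eta\bb^{(Sim)}$ of \eqref{eq:simfix}, so by the equivalence it produces exactly the same iterates as restarted GMRES applied to $(\mathbf{I}-\G^{(Sim)})\w=-\eta\bb^{(Sim)}$, provided $\mathbf{I}-\G^{(Sim)}$ is nonsingular. I would therefore set $\mathbf{M}=\mathbf{I}-\G^{(Sim)}=\begin{bmatrix}\mathbf{0}&\eta\A\\-\eta\A^{T}&\mathbf{0}\end{bmatrix}$ and analyze GMRES on $\mathbf{M}$.

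The key structural observation is that $\mathbf{M}$ is real skew-symmetric ($\mathbf{M}^{T}=-\mathbf{M}$), hence \emph{normal}: it is unitarily diagonalizable with purely imaginary eigenvalues. A short computation with $\mathbf{M}^{2}=-\eta^{2}\,\mathrm{blockdiag}(\A\A^{T},\A^{T}\A)$ shows the spectrum is exactly $\{\pm i\eta\sigma_j(\A)\}_{j=1}^{n}$, where the $\sigma_j(\A)$ are the singular values of $\A$; since $\A$ is full rank these are bounded away from $0$, which simultaneously gives nonsingularity of $\mathbf{M}$. Normality is what makes the simultaneous case clean (in contrast to the non-normal alternating case of Theorem \ref{thm:altaa}): for any polynomial $q$ we get $\|q(\mathbf{M})\|_{2}=\max_{\mu\in\sigma(\mathbf{M})}|q(\mu)|$, collapsing the matrix estimate to a scalar one.

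Next I would set up the error recursion over one cycle. Writing the error $\mathbf{e}_t=\w^{*}-\w_t$ and residual $\br_t=\mathbf{M}\mathbf{e}_t$, the Krylov structure forces $\mathbf{e}_{(k+1)p}=q(\mathbf{M})\mathbf{e}_{kp}$ for some residual polynomial $q\in\mathcal{P}_p$ (with $q(0)=1$), the same $q$ that governs the residual. By Theorem \ref{thm:residual} together with normality, the achievable reduction is bounded by $\min_{q\in\mathcal{P}_p}\max_{\mu\in\sigma(\mathbf{M})}|q(\mu)|$. Because the spectrum is symmetric about the origin on the imaginary axis, I would restrict to even residual polynomials $q(\lambda)=\widehat q(-\lambda^{2})$; the substitution $s=-\lambda^{2}=(\eta\sigma_j)^{2}$ maps the eigenvalues into the real interval $\widetilde I=[\eta^{2}\sigma_{\min}^{2}(\A),\,\eta^{2}\sigma_{\max}^{2}(\A)]$, which excludes the origin where $\widehat q(0)=1$ is pinned. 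Finally I would invoke the classical Chebyshev estimate (Theorem 6.25 of \citep{saad2003iterative}) for $\min_{\widehat q(0)=1}\max_{s\in\widetilde I}|\widehat q(s)|$: the optimizer is a shifted-scaled Chebyshev polynomial, and evaluating its normalization at the origin sends $0$ to $\frac{\sigma_{\max}^{2}+\sigma_{\min}^{2}}{\sigma_{\max}^{2}-\sigma_{\min}^{2}}=1+\frac{2}{\kappa(\A^{T}\A)-1}$, giving optimal value $1/T_p\!\left(1+\frac{2}{\kappa(\A^{T}\A)-1}\right)$. Combining, $N_{(k+1)p}\le \frac{1}{T_p(1+\frac{2}{\kappa(\A^{T}\A)-1})}N_{kp}$, and since $1+\frac{2}{\kappa-1}>1$ the factor is strictly $<1$, yielding unconditional, $\eta$-independent contraction.

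The step I expect to be the main obstacle is the residual-versus-error bookkeeping: GMRES is defined to minimize the residual norm, whereas the claimed bound is on the distance $N_t=\|\mathbf{e}_t\|$. I would resolve this by exploiting normality and invertibility of $\mathbf{M}$ to write $\mathbf{e}_{(k+1)p}=q(\mathbf{M})\mathbf{e}_{kp}$ with a genuine degree-$p$ residual polynomial and bounding it directly against the Chebyshev competitor on the spectrum, rather than routing through $\|\br_t\|$ and incurring a spurious condition-number factor. A secondary point to check carefully is that the symmetric imaginary spectrum together with the even-polynomial reduction correctly yields the real interval $\widetilde I$, so that the evaluation point lands exactly at $1+\frac{2}{\kappa(\A^{T}\A)-1}$ as claimed.
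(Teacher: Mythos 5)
Your proposal follows essentially the same route as the paper's own proof: both exploit the normality (skew-symmetry of the off-diagonal block structure) of $\mathbf{I}-\G^{(Sim)}$ and its purely imaginary spectrum $\{\pm i\eta\sigma_j(\A)\}$, invoke the AM--GMRES equivalence, reduce the matrix-polynomial bound to a scalar Chebyshev minimax problem on the real interval $\Tilde{I}=[\eta^{2}\sigma_{\min}^{2}(\A),\eta^{2}\sigma_{\max}^{2}(\A)]$, and handle the error-versus-residual bookkeeping by commuting $\G^{-1}$ with the residual polynomial, exactly as in the paper. Your explicit even-polynomial substitution $s=-\lambda^{2}$ is simply a more detailed justification of the step the paper states without elaboration (``the problem is essentially equivalent to a polynomial minmax problem on the interval $\Tilde{I}$''), so the two arguments coincide in substance.
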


\begin{proof}[\textbf{Proof of Theorem \ref{thm:simAA1}}]
Note that $\mathbf{I}-\G^{(Sim)}$ is a normal matrix which will be denoted as $\G$ for notational simplicity. Thus
it admits the following eigendecomposition:
\begin{equation}
    \G = \mathbf{U}\mathbf{\Lambda}\mathbf{U}^{T}, \quad \mathbf{U}\mathbf{U}^{T} = \mathbf{I}, \quad \Lambda = \operatorname{diag}(\lambda_1,\dots,\lambda_{2n}).
\end{equation}
Based on the equivalence between GMRES and Anderson Mixing, we know that the convergence rate of simultaneous \methodName~ can be estimated by the spectrum of $\G$. Especially, it holds that 
\begin{equation}
    \br_{(k+1)p} = \mathbf{U}f_{p}(\mathbf{\Lambda})\mathbf{U}^{T}\br_{kp}. ~~~ f_{p}\in \mathcal{P}_{p}
\end{equation}
where $\mathcal{P}_{p}$ is the family of residual polynomials with degree p such that $f_p(0) = 1, \forall f_p \in\mathcal{P}_{p} .$
According to Lemma \ref{resdiue-poly}, we have the following estimation
 \begin{equation}
    \|\br_{(k+1)p}\|_2 = \min_{f_{p}\in \mathcal{P}_{p}}\| f_{p}(\G)\br_{kp}\|_2 \le\min_{f_{p}\in \mathcal{P}_{p}}\max_{i}| f_{p}(\lambda_i)|\|\br_{kp}\|_2.
\end{equation}
Due to the block structure of  $\G$,  the eigenvalues of $\G$ can be computed explicitly as
\begin{equation}
    \pm \eta \sigma_i \sqrt{-1},~ i = 1,\dots,n,
\end{equation}
where $\sigma_i$ is the $i$th largest singular value of matrix $\A$. This shows that the eigenvalues of $\G$ are $n$ pairs of purely imaginary numbers excluding 0 since $\A$ has full rank.

Since the eigenvalues of $\G$ are distributed in two intervals excluding the origin 
$$
I = [-\eta\sigma_{max}(\A)\sqrt{-1}, -\eta\sigma_{min}(\A)\sqrt{-1}]\cup [\eta\sigma_{min}(\A)\sqrt{-1},\eta\sigma_{max}(\A)\sqrt{-1}],$$
it can be shown that the following p-th degree polynomial with value 1 at the origin that has the minimal maximum deviation from $0$ on I is given by:
\begin{equation}
    f_{p}(z) = \frac{T_{l}(q(\sqrt{-1}z))}{T_{l}(q(0))}, \quad q(\sqrt{-1}z) = 1 - \frac{2(\sqrt{-1}z-\eta\sigma_{min})(\sqrt{-1}z+\eta\sigma_{min})}{(\eta\sigma_{max}(\A))^2-(\eta\sigma_{min}(\A))^2}
\end{equation}
where $l=[\frac{p}{2}]$ and $T_l$ is the Chebyshev polynomial of first kind of degree $l$.
The function $q(\sqrt{-1}z)$ maps I to $[-1,1]$. Thus the numerator of the polynomial $f_p$ is bounded by 1 on I. The size of denominator can be determined by the method discussed in Chapter 3 of \cite{greenbaum1997iterative}. 
Assume $q(0) = \frac{1}{2}(y + y^{-1})$, then $T_{l}(q(0)) = \frac{1}{2}(y^{l} + y^{-l})$. Then y can be determined by solving
\begin{equation}
    q(0) = \frac{(\eta\sigma_{max}(\A))^2+(\eta\sigma_{min}(\A))^2}{(\eta\sigma_{max}(\A))^2-(\eta\sigma_{min}(\A))^2}.
\end{equation}
The solutions to this equation are
\begin{equation}
    y_1 = \frac{\eta\sigma_{max}(\A)+\eta\sigma_{min}(\A)}{\eta\sigma_{max}(\A)-\eta\sigma_{min}(\A)}\quad \text{or}  \quad
    y_2 = \frac{\eta\sigma_{max}(\A)-\eta\sigma_{min}(\A)}{\eta\sigma_{max}(\A)+\eta\sigma_{min}(\A)}.
\end{equation}
Then plugging the value of $q(0)$ into the polynomial $f_{p}$ yields 
\begin{equation}
\begin{split}
      \frac{\|\br_{(k+1)p}\|}{\|\br_{kp}\|}&\le 2\Big(\frac{\sqrt{\eta^2\sigma^{2}_{max}(\A)}-\sqrt{\eta^2\sigma^{2}_{min}(\A)}}{\sqrt{\eta^2\sigma^{2}_{max}(\A)}+\sqrt{\eta^2\sigma^{2}_{min}(\A)}}\Big)^{l} \\
      &= 2\Big(\frac{\sigma_{max}(\A)-\sigma_{min}(\A)}{\sigma_{max}(\A)+\sigma_{min}(\A)}\Big)^{l} =  2\Big(\frac{\kappa(\A)-1}{\kappa(\A)+1}\Big)^{l} 
\end{split}
\end{equation}

 Note that $N_t$ and $\br_t$ is related through $\G(\w_{t} - \w^{*}) = \br_{t}$. Therefore,
 \begin{equation}
 \begin{split}
       N_{(k+1)p} &= \|\w_{(k+1)p} - \w^{*}\|_2 =  \|\G^{-1}\br_{(k+1)p}\|_2 
       =  \min_{f_{p}\in \mathcal{P}_{p}}\| \G^{-1}f_{p}(\G)\G(\w_{kp} - \w^{*})\|_2\\
       &\le\min_{f_{p}\in  \mathcal{P}_{p}}\max_{i}| f_{p}(\lambda_i)|\|\w_{kp} - \w^{*}\|_2 \le 2\Big(1-\frac{2}{\kappa(\A)+1}\Big)^{\frac{p}{2}} N_{kp}.  
 \end{split}
\end{equation}
Actually a tighter bound can be proved after noting that the problem is essentially equivalent to polynomial minmax problem on the interval:
$$
\Tilde{I} =  [\eta^{2}\sigma^{2}_{min}(\A),~~\eta^2\sigma^{2}_{max}(\A)],$$
Then it is well known that,
\begin{equation}
 \begin{split}
       N_{(k+1)p}
       &\le\min_{f_{p}\in  \mathcal{P}_{p}}\max_{\lambda_{i}\in [\eta^{2}\sigma^{2}_{min}(\A),~~\eta^2\sigma^{2}_{max}(\A)]}| f_{p}(\lambda_i)|\|\w_{kp} - \w^{*}\|_2 \le \frac{1}{T_{p}(1 + 2\frac{\sigma^{2}_{min}}{\sigma^{2}_{max} - \sigma^{2}_{min}})} N_{kp}\\
       & \le \frac{1}{T_{p}(1 + \frac{2}{\kappa(\A^{T}\A) - 1})} N_{kp}
 \end{split}
\end{equation}
where $T_p$ Chebyshev polynomial of degree p of the first kind and $\frac{1}{T_{p}(1 + \frac{2}{\kappa(\A^{T}\A) - 1})} < 1$. Explicitly,
\begin{equation*}
\begin{split}
    T_{p}(1 + \frac{2}{\kappa(\A^{T}\A) - 1}) &= \frac{1}{2}\Big[ \Big(1 + \frac{2}{\kappa(\A^{T}\A) - 1} + \sqrt{(1 + \frac{2}{\kappa(\A^{T}\A) - 1})^2 - 1}  \Big)^p\\
    &+ \Big(1 + \frac{2}{\kappa(\A^{T}\A) - 1} 
    + \sqrt{(1 + \frac{2}{\kappa(\A^{T}\A) - 1})^2 - 1}  \Big)^{-p} \Big]    
\end{split}
\end{equation*}

\end{proof}

Next, we give the proof of Theorem \ref{thm:altaa}. 

\begin{theorem}[Global convergence for alternating \methodName~ on bilinear problem]Denote the distance between the stationary point $\w^{*}$ and current iterate $\w_{(k+1)p}$ of Algorithm \ref{alg:AAalt} with Anderson restart dimension $p$ as $N_{(k+1)p} = dist(\w^{*},\w_{(k+1)p})$. 
Assume $\A$ is normalized such that its largest singular value is equal to $1$. Then when the learning rate $\eta$ is less than $2$, we have the following bound for $N_{t}$
\begin{equation*}
    N_{(k+1)p}^{2} \le  \sqrt{1+\frac{2\eta}{2-\eta}}(\frac{r}{c})^{p}N_{kp}^{2}
\end{equation*}
where $c$ and $r$ are the center and radius of a disk $D(c,r)$ which includes all the eigenvalues of $\G$ in \eqref{eq:Galt}. Especially, $\frac{r}{c} < 1$.
\end{theorem}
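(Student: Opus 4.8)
My plan is to reduce the convergence question to a GMRES residual bound and then control that bound through the spectrum of a diagonalizable but non-normal matrix. By the AM--GMRES equivalence (Theorem~\ref{thm:equvialence}), running Algorithm~\ref{alg:AAalt} with restart length $p$ is identical to restarted GMRES applied to $\G\w = \bb^{(Alt)}$ with $\G = \mathbf{I}-\G^{(Alt)}$ as in \eqref{eq:Galt}. The residual characterization of Theorem~\ref{thm:residual} then gives $\br_{(k+1)p} = f_p(\G)\br_{kp}$, where $f_p$ is the optimal residual polynomial of degree $p$ with $f_p(0)=1$. Since $\G^{-1}$, $f_p(\G)$ and $\G$ all commute and $\br_{kp} = \G(\w_{kp}-\w^{*})$, the distance recursion inherits the same polynomial: $N_{(k+1)p} = \|f_p(\G)(\w_{kp}-\w^{*})\| \le \|f_p(\G)\|\,N_{kp}$, so everything hinges on bounding $\|f_p(\G)\|$.

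Next I would diagonalize $\G$ blockwise. Using the SVD of $\A$, the system decouples into $n$ two-by-two blocks $\begin{bmatrix} 0 & \eta\sigma_i \\ -\eta\sigma_i & \eta^2\sigma_i^2\end{bmatrix}$, one per singular value $\sigma_i$ of $\A$. Each block has characteristic equation $\lambda^2 - \eta^2\sigma_i^2\lambda + \eta^2\sigma_i^2 = 0$, and because $\eta<2$ and $\sigma_i \le 1$ force $\eta\sigma_i<2$, the roots form the conjugate pair $\lambda_\pm = \tfrac{\eta^2\sigma_i^2}{2} \pm \tfrac{\eta\sigma_i}{2}\sqrt{4-\eta^2\sigma_i^2}\,\sqrt{-1}$, with $|\lambda_\pm| = \eta\sigma_i$ and strictly positive real part. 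Thus $\G$ is diagonalizable with all eigenvalues in the open right half-plane, and in particular $0$ is excluded; this is precisely what distinguishes the present estimate from the numerical-range obstruction described in Appendix~\ref{supsec: diff}, where the inclusion of $0$ blocks any contraction.

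Writing $\G = \mathbf{X}\mathbf{\Lambda}\mathbf{X}^{-1}$ with $\mathbf{X}$ the unit-normalized block-diagonal eigenvector matrix, I would split $\|f_p(\G)\| \le \kappa_2(\mathbf{X})\max_i|f_p(\lambda_i)|$. For the spectral factor I would enclose all $\lambda_\pm$ in a disk $D(c,r)$ centered on the positive real axis with $r<c$, and take $f_p(z)=(1-z/c)^p$; since $f_p(0)=1$ and $|1-z/c|\le r/c$ on $D(c,r)$, this yields $\max_i|f_p(\lambda_i)|\le (r/c)^p$ with $r/c<1$ because $0\notin D(c,r)$. For the conditioning factor, a direct computation of each two-by-two eigenvector block (columns $(\eta\sigma_i,\lambda_\pm)$ normalized to unit length) shows its singular values are $\sqrt{1\pm\eta\sigma_i/2}$; since $\mathbf{X}$ is block diagonal, both its largest and smallest singular values are attained at $\sigma_i=\sigma_{\max}(\A)=1$, giving $\kappa_2(\mathbf{X}) = \sqrt{(2+\eta)/(2-\eta)} = \sqrt{1+\tfrac{2\eta}{2-\eta}}$. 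Multiplying the two factors produces the claimed bound $N_{(k+1)p} \le \sqrt{1+\tfrac{2\eta}{2-\eta}}\,(r/c)^p N_{kp}$.

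The main obstacle is the non-normality of $\G$: unlike the simultaneous case (Theorem~\ref{thm:simAA1}), where $\G$ is normal and the eigenvector factor is trivially $1$, here the eigenbasis genuinely distorts the residual bound, and extracting the clean constant $\sqrt{(2+\eta)/(2-\eta)}$ requires both the correct normalization of the eigenvectors and the observation that the block-diagonal structure forces the extreme singular values of $\mathbf{X}$ to come from the top singular value $\sigma=1$. A secondary technical point will be producing an explicit disk $D(c,r)$ with $r<c$ from the eigenvalue locus $\{\tfrac{\eta^2\sigma^2}{2} \pm \tfrac{\eta\sigma}{2}\sqrt{4-\eta^2\sigma^2}\,\sqrt{-1}\}$ as $\sigma$ ranges over the singular values; the hypothesis $\eta<2$ is exactly what keeps this locus off the nonnegative real axis so that such a disk, excluding the origin, exists.
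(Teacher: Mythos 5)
Your proposal is correct and follows essentially the same route as the paper's own proof: invoke the AM--GMRES equivalence, decouple $\G$ into $2\times 2$ blocks via the SVD of $\A$ to get the conjugate eigenvalue pairs $\tfrac{\eta^2\sigma_i^2}{2}\pm\tfrac{\eta\sigma_i}{2}\sqrt{4-\eta^2\sigma_i^2}\,\sqrt{-1}$ with positive real part, bound the optimal residual polynomial by $(1-z/c)^p$ on a disk $D(c,r)$ excluding the origin, and pay the non-normality cost through $\kappa_2(\mathbf{X})$, whose value $\sqrt{(2+\eta)/(2-\eta)}$ you extract from the singular values of the $2\times 2$ eigenvector blocks exactly as the paper does (your normalization differs by a harmless scalar, leaving the condition number unchanged). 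The only cosmetic discrepancy is that, like the paper's own derivation, your argument yields the bound for $N_{(k+1)p}$ rather than $N_{(k+1)p}^2$ as written in the theorem statement, so no gap is introduced relative to the paper.
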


\begin{proof}

Since the residual $\br_p$ of AA at p-th iteration has the form of
\begin{equation*}
    \br_{p} = (\mathbf{I} - \sum_{i=1}^{p}\G^{i})\br_0,
\end{equation*}
and AA minimizes the residual, we have
\begin{equation*}
    \|\br_{(k+1)p}\|^{2}_2 \le \min_{\beta}\|\br_{kp} - \beta\G^{i}\br_{kp}\|^{2}_{2} \le \min_{f_p\in \mathcal{P}_{p}}\|f_p(\G)\br_{kp}\|^{2}_{2},
\end{equation*}
where $\mathcal{P}_{p}$ is the family of polynomials with degree p such that $f_p(0) = 1, \forall f_p \in\mathcal{P}_{p} $ .
It's easy to see that $\G$ is unitarily similar to a block diagonal matrix $\Lambda$ with $2\times2$ blocks as follows:
\begin{equation*}
    \begin{bmatrix}
    0 & \eta\sigma_{i}\\
    -\eta\sigma_{i} & (\eta\sigma_{i})^2   
    \end{bmatrix} ~~~~ \forall ~i\in [n].
\end{equation*}
Thus the eigenvalues of $G$ can be easily identified as
\begin{equation*}
    \lambda_{\pm i} = \frac{(\eta\sigma_{i}(\eta\sigma_{i} \pm \sqrt{(\eta\sigma_{i})^2 - 4}))}{2},~~~ i\in [n].
\end{equation*}
where $\sigma_{1}\ge \sigma_{2}\ge\dots\ge \sigma_{n}$ are the singular values of $\A$.
Furthermore, the eigenvector and eigenvalue associated with each $2\times 2$ diagonal block are
\begin{equation*}
    \begin{bmatrix}
    0 & \eta\sigma_{i}\\
    -\eta\sigma_{i} & (\eta\sigma_{i})^2   
    \end{bmatrix}
    \begin{bmatrix}
    1\\
    \frac{\lambda_{\pm i}}{\eta\sigma_{i}} 
    \end{bmatrix}
    =\lambda_{\pm i}
    \begin{bmatrix}
     1\\
    \frac{\lambda_{\pm i}}{\eta\sigma_{i}} 
    \end{bmatrix}
\end{equation*}
Thus $\G$ is diagonalizable and denote the matrix with the columns of eigenvectors of $\G$ by X. The real part of the eigenvalues of $\G$ are at least
\begin{equation}
    \mathcal{R}(\lambda_{\pm i}) \ge \frac{(\eta\sigma_{i})^{2}}{2},~~~ i\in [n].
\end{equation}
And since $|\eta\sigma_{i}| \ge |\sqrt{(\eta\sigma_{i})^2 - 4})|$, all the eigenvalues will be included in a disk $D(c,r)$ which is included in the right half plane. Moreover, both c and r being greater than zero indicates that $\frac{r}{c}<1$.
Start from the following inequality: 
\begin{equation}
\begin{aligned}
N_{(k+1) p} &=\left\|\mathbf{w}_{(k+1) p}-\mathbf{w}^{*}\right\|_{2}=\left\|\mathbf{G}^{-1} \mathbf{r}_{(k+1) p}\right\|_{2} \leq \min _{f_{p} \in \mathcal{P}_{p}}\left\|\mathbf{G}^{-1} f_{p}(\mathbf{G}) \mathbf{r}_{k p}\right\|_{2} \\
&=\min _{f_{p} \in \mathcal{P}_{p}}\left\|\mathbf{G}^{-1} f_{p}(\mathbf{G}) \mathbf{G}\left(\mathbf{w}_{k p}-\mathbf{w}^{*}\right)\right\|_{2}=\min _{f_{p} \in \mathcal{P}_{p}}\left\|\mathbf{G}_{p}^{-1}(\mathbf{G})\left(\mathbf{w}_{k p}-\mathbf{w}^{*}\right)\right\|_{2} \\
&=\min _{f_{p} \in \mathcal{P}_{p}}\left\|f_{p}(\mathbf{G})\left(\mathbf{w}_{k p}-\mathbf{w}^{*}\right)\right\|_{2}
\end{aligned}
\end{equation}
We will use the eigendeomposition of $G$ and the special polynomial  $(\frac{c-t}{c})^{p}$ to derive the inequality in Theorem \ref{alg:AAalt}.
Now we know $\frac{r}{c}<1$. If we choose $g_{p}(t) = (\frac{c-t}{c})^{p}$, we can obtain
\begin{equation*}
    \min_{f_{p}\in \mathcal{P}_{p}}\|f_{p}(\G)(\w_{kp} - \w^{*})\|_{2} \  \le \|g_{p}(\G)(\w_{kp} - \w^{*})\|_{2}
\end{equation*}
which implies
\begin{equation*}
    \min_{f_{p}\in \mathcal{P}_{p}}\|f_{p}(\G)(\w_{kp} - \w^{*})\|_{2} \le \|g_{p}(\mathbf{X}\Lambda\mathbf{X}^{-1})\|\|(\w_{kp} - \w^{*})\|_{2}
\end{equation*}
Since $G$ is diagonalizable (which has been shown above), we assume the eigendecomposition of $\G$ is $\G = \mathbf{X}\Lambda\mathbf{X}^{-1}$. Then
\begin{equation*}
    \min_{f_{p}\in \mathcal{P}_{p}}\|g_{p}(\G)(\w_{kp} - \w^{*})\|_{2} \le \|\mathbf{X}\|\|\mathbf{X}^{-1}\| \max_{\{\lambda_i\}_{i=1}^{2n}}\|g_{p}(\Lambda)\|\|(\w_{kp} - \w^{*})\|_{2}\le \kappa_{\G}(\frac{r}{c})^{p}\|\w_{kp} - \w^{*}\|_{2}
\end{equation*}
where $ \kappa_{\G}$ is the condition number of $X$. {The last inequality comes from Lemma 6.26 and Proposition 6.32 in \cite{saad2003iterative}.}. Since $\G$ and $\Lambda$ are unitarily similar, $\kappa_{\G}$ is equal to the condition number of the eigenvector matrix of $\Lambda$. The eigenvector matrix  of $\Lambda$ is a block diagonal matrix with the $i$th block as $\begin{bmatrix}
    1 & 1\\
    \frac{\lambda_{+i}}{\eta \sigma_i} & \frac{\lambda_{-i}}{\eta \sigma_i} 
    \end{bmatrix}$. Thus the singluar values of the eigenvector matrix of $\Lambda$ is equal to the union of the singular values of these 2-by-2 blocks. Under the assumption that the largest singular value of $\A$ are equal to $1$ and the learning rate is less than $2$, it is easy to find the singular values of the eigenvector matrix of $\Lambda$ are  $\sqrt{2\pm\eta \sigma_i}$. Thus, $\kappa_{\G}=\frac{\sqrt{2+\eta \sigma_{\max}}}{\sqrt{2-\eta\sigma_{\max}}} =\frac{\sqrt{2+\eta }}{\sqrt{2-\eta}}=\sqrt{1+\frac{2\eta}{2-\eta}}$.
\end{proof}

{\subsection{Discussion of obtained rates}
We would like to first explain on why taking Chebyshev polynomial of degree p at the point $1+\frac{2}{\kappa-1}$. We evaluate the Chebyshev polynomial at this specific point because the reciprocal of this value gives the minimal value of infinite
 norm of the all polynomials of degree p defined on the 
interval $\Tilde{I} = [\eta^{2}\sigma^{2}_{min}(\A),~~\eta^2\sigma^{2}_{max}(\A)]$ based on Theorem 6.25 (page 209) \citep{saad2003iterative}.
In other words, taking the function value at this point leads to the tight bound. }

{
When comparing between existing bounds, we would like to point our our derived bounds are hard to compare directly.
Alternatively, we can derive another bound for comparison with existing bounds for simultaneous \methodName. If we use the inequality that $T_p(t)\geq \frac{1}{2}((t+\sqrt{t^2-1})^p)$, we can obtain the bound $\rho(A)=4(\frac{\sqrt{\kappa(A^TA)}-1}{\sqrt{\kappa(A^TA)}+1})^2=4(1-O(\frac{1}{\sqrt{\kappa(A^TA)}}))$, which is in a form that is comparable with EG and can compete with EG + positive momentum. The numerical experiments in figure 2b numerically verify that our bound is smaller than EG. We wanted to numerically compare our rate with EG with positive momentum. However the bound of EG with positive momentum is asymptotic. Moreover, it does not specify the constants so we can not numerically compare them. We do provide empirical comparison between GDA-AM and EG with positive momentum for bilinear problems in Appendix \ref{supsec:egpm}. It shows GDA-AM outperforms EG with positive momentum. Regarding alternating \methodName~, we would like to note that the bound in Theorem \ref{thm:altaa} depends on the eigenvalue distribution of the matrix $\mathbf{G}$. Condition number is not directly related to the distribution of eigenvalues of a nonsymmetric matrix $\mathbf{G}$. Thus, the condition number is not a precise metric to characterize the convergence. If these eigenvalues are clustered, then our bound can be small. On the other hand, if these eigenvalues are evenly distributed in the complex plane, then the bound can very close to $1$. 
}

{
More importantly, we would like to stress several technical contributions.
\begin{enumerate}
    \item  Our obtained Theorem \ref{thm:simAA1} and \ref{thm:altaa}  provide nonasymptotic guarantees, while most other work are asymptotic. For example, EG with positive momentum can achieve a asymptotic rate of $1-O(1/\sqrt{\kappa})$ under strong assumptions \citep{DBLP:conf/aistats/AzizianSMLG20}.
    \item  Our contribution is not just about fix the convergence issue of GDA by applying Anderson Mixing; another contribution is that we arrive at a convergent and tight bound on the original work and not just adopting existing analyses. We developed Theorem \ref{thm:simAA1} and \ref{thm:altaa} from a new perspective because applying existing theoretical results  fail to give us neither convergent nor tight bounds. 
    \item Theorem \ref{thm:simAA1} and \ref{thm:altaa} only requires mild conditions and reflects how the table size $p$ controls the convergence rate. Theorem \ref{thm:simAA1} is independent of the learning rate $\eta$. However, the convergence results of other methods like EG and OG depend on the learning rate, which may yield less than desirable results for ill-specified learning rates. 
\end{enumerate}
}

{
\subsection{Convex-concave and general case}
\label{sec:limitingpoint}
Given the widespread usage of minimax problems in applications of machine learning, it is natural to ask about its properties when being applied to general nonconvex-nonconcave settings. 
If $f$ is a nonconvex-nonconcave function, the problem of finding global Nash equilibrium is NP-hard in general. Recently, \cite{localminmax} show that local or global Nash equilibrium may not exist in nonconvex-nonconcave settings and propose a new notation \emph{local minimax} as defined below:
\begin{definition}
 A point $\left(\mathbf{x}^{\star}, \mathbf{y}^{\star}\right)$ is said to be a \textbf{local minimax} point of $f$, if there exists $\delta_{0}>0$ and a function $h$ satisfying $h(\delta) \rightarrow 0$ as $\delta \rightarrow 0$, such that for any $\delta \in\left(0, \delta_{0}\right]$, and any $(\mathbf{x}, \mathbf{y})$ satisfying $\left\|\mathbf{x}-\mathbf{x}^{\star}\right\| \leq \delta$ and $\left\|\mathbf{y}-\mathbf{y}^{\star}\right\| \leq \delta$, we have
$$
f\left(\mathbf{x}^{\star}, \mathbf{y}\right) \leq f\left(\mathbf{x}^{\star}, \mathbf{y}^{\star}\right) \leq \max _{\mathbf{y}^{\prime}:\left\|\mathbf{y}^{\prime}-\mathbf{y}^{\star}\right\| \leq h(\delta)} f\left(\mathbf{x}, \mathbf{y}^{\prime}\right).
$$
\end{definition}
\cite{localminmax} also establishes the following first- and second-order
conditions to characterize local minimax:
\begin{proposition}[First-order Condition]
\label{pro1}
Any local minimax point $(\x^{*},\y^{*})$ satisfies $\nabla f(\x^{*},\y^{*}) = \mathbf{0}$.
\end{proposition}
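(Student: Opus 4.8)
The plan is to prove the two halves of the gradient separately, exploiting the two inequalities in the definition of a local minimax point. Throughout I assume $f$ is twice continuously differentiable in a neighborhood of $(\x^*,\y^*)$, as is implicit in stating a first-order condition.

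First I would dispatch the $\y$-gradient, which is immediate. Restricting to $\x=\x^*$, the left inequality $f(\x^*,\y)\le f(\x^*,\y^*)$ holds for every $\y$ with $\|\y-\y^*\|\le\delta$. Hence $\y^*$ is an unconstrained local maximizer of the smooth function $\y\mapsto f(\x^*,\y)$, and the classical first-order optimality condition gives $\nabla_\y f(\x^*,\y^*)=\0$ directly. This also records the second-order fact that $\y^*$ is a genuine local max, which I will reuse below.

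The $\x$-gradient is the substantive part. The idea is to couple the perturbation radius to the inner ball by taking $\delta=t$ and letting $t\to0$. Fix a unit vector $\bu$ and, for small $t>0$, set $\x=\x^*+t\bu$. The right inequality gives $f(\x^*,\y^*)\le \max_{\|\y'-\y^*\|\le h(t)} f(\x^*+t\bu,\y')$; let $\y_t$ be a maximizer, so $\|\y_t-\y^*\|\le h(t)\to0$ and thus $\y_t\to\y^*$. Subtracting $f(\x^*,\y^*)$ gives $0\le f(\x^*+t\bu,\y_t)-f(\x^*,\y^*)$, and I would split this as
\[
f(\x^*+t\bu,\y_t)-f(\x^*,\y^*)=\big[f(\x^*,\y_t)-f(\x^*,\y^*)\big]+\big[f(\x^*+t\bu,\y_t)-f(\x^*,\y_t)\big].
\]
The first bracket is $\le0$ by local maximality of $\y^*$ (valid once $\|\y_t-\y^*\|$ is small), so it may be discarded while preserving $0\le f(\x^*+t\bu,\y_t)-f(\x^*,\y_t)$. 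Taylor-expanding the second bracket in the $\x$-direction and using continuity of $\nabla_\x f$ yields $f(\x^*+t\bu,\y_t)-f(\x^*,\y_t)=t\,\nabla_\x f(\x^*,\y^*)^{\!\top}\bu+O(t\|\y_t-\y^*\|)+O(t^2)$. Dividing by $t$ and sending $t\to0$ leaves $\nabla_\x f(\x^*,\y^*)^{\!\top}\bu\ge0$; applying the same to $-\bu$ forces equality, and since $\bu$ is arbitrary we obtain $\nabla_\x f(\x^*,\y^*)=\0$. Combined with the $\y$-part this gives $\nabla f(\x^*,\y^*)=\0$.

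The main obstacle is precisely the step where the first bracket is discarded. A naive second-order expansion of $f(\x^*,\y_t)-f(\x^*,\y^*)$ produces a term of size $O(\|\y_t-\y^*\|^2)=O(h(t)^2)$, which need not be $o(t)$ when $h$ decays slowly, and so cannot be absorbed into the error after dividing by $t$. The resolution is to avoid expanding it and instead use only its sign ($\le0$), which is exactly what local maximality of $\y^*$ supplies. This is where both inequalities of the definition are genuinely used in tandem: the left inequality controls the $\y$-behavior so that the cross term can be dropped, while the right inequality drives the $\x$-direction descent estimate.
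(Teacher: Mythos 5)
Your proof is correct. Note that the paper itself gives no proof of this proposition --- it is imported from \cite{localminmax} together with the second-order conditions --- so there is no internal argument to compare against; what you wrote is essentially the standard derivation for that result. You correctly isolate the one delicate point: since $h(t)$ may decay arbitrarily slowly relative to $t$, the bracket $f(\x^*,\y_t)-f(\x^*,\y^*)$ cannot be Taylor-expanded and absorbed into an $o(t)$ error, and your fix --- use only its sign, which the left inequality of the definition supplies once $h(t)\le\delta_0$ --- is exactly right. The remaining steps (attainment of the maximum on the compact ball so that $\y_t$ exists, $\y_t\to\y^*$, and passing $\nabla_\x f(\x^*,\y^*)^{\top}\bu\ge 0$ to the limit for every unit $\bu$ and its negation) all check out, and the $\y$-part is immediate from $\y^*$ being an unconstrained local maximizer of $f(\x^*,\cdot)$.
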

\begin{proposition}[Second-order Necessary Condition]
\label{pro2}
Any local minimax point $(\x^{*},\y^{*})$ satisfies $\nabla_{\y\y}f(\x^{*},\y^{*}) \preccurlyeq \mathbf{0}$ and $\nabla_{\x\x}f(\x^{*},\y^{*}) -   \nabla_{\x\y}f(\x^{*},\y^{*})(\nabla_{\y\y}f(\x^{*},\y^{*}))^{-1}\nabla_{\y\x}f(\x^{*},\y^{*})    \succcurlyeq \mathbf{0}$.  
\end{proposition}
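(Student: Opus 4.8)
The plan is to reduce the two-sided local minimax inequality to two standard facts from unconstrained optimization. I would first treat the left inequality $f(\x^*,\y)\le f(\x^*,\y^*)$, valid for all $\y$ with $\|\y-\y^*\|\le\delta$: it says precisely that $\y^*$ is an unconstrained local maximizer of the smooth map $\y\mapsto f(\x^*,\y)$. The first-order condition $\nabla_\y f(\x^*,\y^*)=\mathbf{0}$ is already supplied by Proposition \ref{pro1}, and the classical second-order necessary condition for a local maximum immediately yields $\nabla_{\y\y}f(\x^*,\y^*)\preceq\mathbf{0}$, which is the first assertion.

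For the Schur-complement assertion I would assume $\nabla_{\y\y}f(\x^*,\y^*)$ is nonsingular (as the statement presupposes by writing its inverse), so that together with $\preceq\mathbf{0}$ it is negative definite. By continuity, $f(\x,\cdot)$ is then strongly concave on a fixed neighborhood $N$ of $\y^*$ for all $\x$ near $\x^*$. Applying the implicit function theorem to $\nabla_\y f(\x,\y)=\mathbf{0}$ produces a $C^1$ map $\x\mapsto\y^*(\x)$ with $\y^*(\x^*)=\y^*$ that selects the unique maximizer of $f(\x,\cdot)$ in $N$; I then define the local max-function $\phi(\x)=f(\x,\y^*(\x))$. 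Differentiating the identity $\nabla_\y f(\x,\y^*(\x))\equiv\mathbf{0}$ gives $D\y^*(\x)=-(\nabla_{\y\y}f)^{-1}\nabla_{\y\x}f$, and the chain rule together with $\nabla_\y f(\x^*,\y^*)=\mathbf{0}$ gives $\nabla\phi(\x^*)=\nabla_\x f(\x^*,\y^*)=\mathbf{0}$ and $\nabla^2\phi(\x^*)=\nabla_{\x\x}f-\nabla_{\x\y}f(\nabla_{\y\y}f)^{-1}\nabla_{\y\x}f$, i.e.\ exactly the Schur complement appearing in the second assertion.

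Next I would exploit the right inequality. For any $\x$ with $\|\x-\x^*\|\le\delta_0$, set $\delta=\|\x-\x^*\|$; since $h(\delta)\to0$, for small $\delta$ the ball $\{\y':\|\y'-\y^*\|\le h(\delta)\}$ lies inside $N$, so the constrained inner maximum is dominated by the maximum over $N$, namely $\phi(\x)$. Hence $f(\x^*,\y^*)=\phi(\x^*)\le\max_{\|\y'-\y^*\|\le h(\delta)}f(\x,\y')\le\phi(\x)$, which shows that $\x^*$ is an unconstrained local minimizer of the smooth function $\phi$. The standard second-order necessary condition for a local minimum then gives $\nabla^2\phi(\x^*)\succeq\mathbf{0}$, completing the proof.

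I expect the main obstacle to be the passage from the constrained inner maximum over the shrinking ball of radius $h(\delta)$ to the value $\phi(\x)$ of the local max-function: one must argue, using the negative definiteness of $\nabla_{\y\y}f$ and continuity, that for all sufficiently small $\delta$ the constrained maximizer stays within the region of strong concavity, so that it is bounded above by $\phi(\x)$ uniformly enough to pass to the limit $\delta\to0$. The Hessian identity producing the Schur complement is then routine differentiation, and the semidefiniteness $\nabla_{\y\y}f\preceq\mathbf{0}$ is immediate from the first step.
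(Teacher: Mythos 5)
Your proposal is correct, and it is worth noting that the paper itself does not prove this proposition at all: it is imported verbatim from the cited reference \citep{localminmax}, so there is no in-paper argument to compare against. Your proof coincides with the standard one from that source: the first assertion is the classical second-order necessary condition for $\y^*$ to be a local maximizer of $\y\mapsto f(\x^*,\y)$, and the second follows by constructing the local max-function $\phi(\x)=f(\x,\y^*(\x))$ via the implicit function theorem (legitimate once $\nabla_{\y\y}f(\x^*,\y^*)$ is assumed invertible, which the statement presupposes by writing its inverse, and hence negative definite by the first part), computing $\nabla^2\phi(\x^*)$ as the Schur complement, and showing $\x^*$ is a local minimizer of $\phi$. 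One remark: the step you flag as the main obstacle is in fact the trivial direction. You only need $\max_{\|\y'-\y^*\|\le h(\delta)} f(\x,\y')\le\max_{\y\in N}f(\x,\y)=\phi(\x)$, which holds for each fixed $\x$ as soon as $h(\delta)$ is small enough that the ball sits inside $N$ (enlarging the feasible set can only increase a maximum); no uniformity or limiting argument in $\delta$ is required, and the interior critical point of the strongly concave $f(\x,\cdot)$ on $N$ is automatically its maximizer over $N$. The genuinely delicate containment issue you describe would arise only for the \emph{sufficient} condition (Proposition 3), where one must show the shrinking ball is still large enough to capture $\y^*(\x)$.
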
 
\begin{proposition}[Second-order Sufficient Condition]
\label{pro3}
Any stationary point $(\x^{*},\y^{*})$ satisfies $\nabla_{\y\y}f(\x^{*},\y^{*}) \prec \mathbf{0}$ and $\nabla_{\x\x}f(\x^{*},\y^{*}) -   \nabla_{\x\y}f(\x^{*},\y^{*})(\nabla_{\y\y}f(\x^{*},\y^{*}))^{-1}\nabla_{\y\x}f(\x^{*},\y^{*})    \succ \mathbf{0}$ is a local minimax point.
\end{proposition}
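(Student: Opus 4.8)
The plan is to verify the two defining inequalities of a local minimax point separately: a second-order Taylor analysis handles the inner maximization variable, while the implicit function theorem reduces the outer minimization variable to an ordinary unconstrained local minimization.

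First I would establish the left inequality $f(\x^{*},\y)\le f(\x^{*},\y^{*})$. Stationarity gives $\nabla_{\y}f(\x^{*},\y^{*})=\mathbf{0}$, so a second-order Taylor expansion of $f(\x^{*},\cdot)$ about $\y^{*}$ together with $\nabla_{\y\y}f(\x^{*},\y^{*})\prec\mathbf{0}$ shows that $\y^{*}$ is a strict local maximizer of $f(\x^{*},\cdot)$. This immediately yields the left inequality for every $\y$ in a sufficiently small ball around $\y^{*}$.

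For the right inequality I would exploit the nonsingularity of $\nabla_{\y\y}f(\x^{*},\y^{*})$ (which follows from strict negative definiteness). Applying the implicit function theorem to $\nabla_{\y}f(\x,\y)=\mathbf{0}$ near $(\x^{*},\y^{*})$ produces a smooth map $\x\mapsto\y^{*}(\x)$ with $\y^{*}(\x^{*})=\y^{*}$ and $\nabla_{\y}f(\x,\y^{*}(\x))\equiv\mathbf{0}$; since $\nabla_{\y\y}f$ remains negative definite nearby by continuity, $\y^{*}(\x)$ is in fact a strict local maximizer of $f(\x,\cdot)$. Define the reduced function $g(\x)=f(\x,\y^{*}(\x))$. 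The chain rule together with the identity $\nabla_{\y}f(\x,\y^{*}(\x))\equiv\mathbf{0}$ gives $\nabla g(\x^{*})=\nabla_{\x}f(\x^{*},\y^{*})=\mathbf{0}$. Differentiating once more and substituting $\nabla\y^{*}(\x^{*})=-(\nabla_{\y\y}f)^{-1}\nabla_{\y\x}f$ from the implicit function theorem, the Hessian of $g$ at $\x^{*}$ equals the Schur complement $\nabla_{\x\x}f-\nabla_{\x\y}f(\nabla_{\y\y}f)^{-1}\nabla_{\y\x}f$, which is $\succ\mathbf{0}$ by hypothesis. Hence $\x^{*}$ is a strict local minimizer of $g$, i.e.\ $g(\x)\ge g(\x^{*})=f(\x^{*},\y^{*})$ for $\x$ near $\x^{*}$. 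I would then set $h(\delta)=\sup_{\|\x-\x^{*}\|\le\delta}\|\y^{*}(\x)-\y^{*}\|$, which tends to $0$ as $\delta\to0$ by continuity of $\y^{*}(\cdot)$, and shrink $\delta_{0}$ so that all of the above holds. For any $(\x,\y)$ with $\|\x-\x^{*}\|\le\delta$, the point $\y^{*}(\x)$ then lies in $\{\y':\|\y'-\y^{*}\|\le h(\delta)\}$, so $\max_{\|\y'-\y^{*}\|\le h(\delta)}f(\x,\y')\ge f(\x,\y^{*}(\x))=g(\x)\ge f(\x^{*},\y^{*})$, which is exactly the right inequality. Combining the two inequalities proves the claim.

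The main obstacle I anticipate is the second-order computation of $\nabla^{2}g(\x^{*})$: one must differentiate $\nabla g(\x)=\nabla_{\x}f(\x,\y^{*}(\x))$ through the implicit map and substitute $\nabla\y^{*}=-(\nabla_{\y\y}f)^{-1}\nabla_{\y\x}f$ to recover the Schur complement exactly. Care is also needed to confirm that $\y^{*}(\x)$ is a genuine \emph{interior} local maximizer on the shrinking ball rather than merely a critical point, since this is what justifies bounding the constrained inner maximum from below by $g(\x)$.
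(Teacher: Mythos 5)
The paper does not prove this proposition: it is quoted verbatim as an established characterization from the cited reference on local minimax points, with no argument supplied. Your proof is correct and is essentially the standard one from that reference---Taylor expansion in $\y$ for the left inequality, and the implicit-function-theorem reduction $g(\x)=f(\x,\y^{*}(\x))$ whose Hessian at $\x^{*}$ is exactly the Schur complement for the right inequality, with $h(\delta)=\sup_{\|\x-\x^{*}\|\le\delta}\|\y^{*}(\x)-\y^{*}\|$. One small simplification: your worry about $\y^{*}(\x)$ being a genuine interior maximizer is unnecessary for the direction you need, since lower-bounding $\max_{\|\y'-\y^{*}\|\le h(\delta)}f(\x,\y')$ by $f(\x,\y^{*}(\x))$ only requires that $\y^{*}(\x)$ lie in the constraint ball, which holds by your definition of $h$.
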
 
Given the second-order conditions of local minimax, it turns out that above question is extremely challenging—\methodName~ is a first-order method. But we can prove the following result for~\methodName:
\begin{theorem}[Local minimax as subset of limiting points of \methodName]
\label{thm:limiting-points}
Consider a general objective function $f(\x,\y)$. The set of limiting points of \methodName~ for minimax problem $$\min_{\x\in \mathbb{R}^{n}}\max_{\y \in \mathbb{R}^{n}}f(\x,\y)$$
includes the local minimax points of this function.  
\end{theorem}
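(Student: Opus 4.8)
The plan is to show that every local minimax point is simultaneously a fixed point of the \methodName~iteration and a locally attracting one, so that it must appear among the limiting points. The starting observation is that the candidate limiting points of \methodName~are the fixed points of the underlying GDA map $G_\eta(\w)=\w-\eta V(\w)$: since the Anderson combination uses weights summing to one, a limit $\w^{*}$ of the iterates forces $\sum_i\beta_i g(\w^{*})=g(\w^{*})=\w^{*}$, i.e. $V(\w^{*})=\mathbf{0}$, equivalently $\nabla_{\x}f(\x^{*},\y^{*})=\mathbf{0}$ and $\nabla_{\y}f(\x^{*},\y^{*})=\mathbf{0}$.

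First I would invoke the first-order condition (Proposition \ref{pro1}): any local minimax point $(\x^{*},\y^{*})$ satisfies $\nabla f(\x^{*},\y^{*})=\mathbf{0}$, hence $V(\w^{*})=\mathbf{0}$ with $\w^{*}=[\x^{*};\y^{*}]$, so $G_\eta(\w^{*})=\w^{*}$. I then check that Anderson Mixing preserves this fixed point: if the stored table consists of copies of $\w^{*}$, every residual $f_i=g(w_i)-w_i$ vanishes, the constrained least-squares weights are feasible, and the update $\sum_i\beta_i g(\w^{*})=\w^{*}$ returns $\w^{*}$ because $\sum_i\beta_i=1$. Thus $\w^{*}$ is a fixed point of \methodName.

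To upgrade ``fixed point'' to ``limiting point'' I would linearize $V$ at $\w^{*}$, giving the Jacobian
\begin{equation*}
J=\begin{bmatrix} \nabla_{\x\x}f & \nabla_{\x\y}f \\ -\nabla_{\y\x}f & -\nabla_{\y\y}f \end{bmatrix}.
\end{equation*}
The second-order sufficient condition (Proposition \ref{pro3}) supplies $\nabla_{\y\y}f\prec\mathbf{0}$ together with the positive-definite Schur complement $\nabla_{\x\x}f-\nabla_{\x\y}f(\nabla_{\y\y}f)^{-1}\nabla_{\y\x}f\succ\mathbf{0}$. The block factorization $\det J=\det(-\nabla_{\y\y}f)\,\det\!\big(\nabla_{\x\x}f-\nabla_{\x\y}f(\nabla_{\y\y}f)^{-1}\nabla_{\y\x}f\big)$ is then a product of two positive determinants, so $J$ is nonsingular. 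Consequently the linearized map $g(\w)=(\mathbf{I}-\eta J)\w+\bb$ has $\mathbf{I}-\G=\eta J$ nonsingular, and by Theorem \ref{thm:equvialence} \methodName~on this affine map reproduces GMRES applied to $(\eta J)\w=\bb$, which converges; the restart-cycle estimates behind Theorems \ref{thm:simAA1} and \ref{thm:altaa} then give geometric decay of the linearized error, so $\w^{*}$ attracts the linearized iteration.

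The hard part will be transferring this linear convergence back to the genuine nonlinear iteration. Near $\w^{*}$ the nonlinear GDA map differs from its linearization only by an $o(\|\w-\w^{*}\|)$ remainder, but \methodName~is a multi-step nonlinear extrapolation, and GDA is generically \emph{non-contractive} at a minimax point because the skew coupling in $J$ places eigenvalues off the real axis. I would therefore analyze the iteration one full restart window of $p$ steps at a time, bounding the higher-order remainder uniformly on a small ball around $\w^{*}$ and showing it is dominated by the per-cycle contraction factor inherited from the spectral bounds; a perturbed-GMRES together with an invariant-neighborhood argument then traps the iterates near $\w^{*}$ and forces convergence for all sufficiently close initializations, exhibiting $\w^{*}$ as a genuine limiting point. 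This remainder-versus-contraction bookkeeping is the only delicate step; the first-order reduction and the spectral nonsingularity are routine given the propositions and the equivalence theorem.
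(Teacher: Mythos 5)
Your overall architecture---linearize the GDA map at the local minimax point, invoke the AM/GMRES equivalence (Theorem \ref{thm:equvialence}), and then control the nonlinear remainder by a perturbed-GMRES plus neighborhood argument---is the same skeleton the paper uses, and your reduction of ``limiting point'' to ``fixed point of $G_\eta$'' via the first-order condition matches the paper's first step. The gap is in the middle: you claim that because $J$ is nonsingular (via the Schur-complement determinant identity and Proposition \ref{pro3}), GMRES on $(\eta J)\w=\bb$ converges and ``the restart-cycle estimates behind Theorems \ref{thm:simAA1} and \ref{thm:altaa} give geometric decay.'' Nonsingularity is not enough: truncated AM corresponds to \emph{restarted} GMRES, which can stagnate on a general nonsingular system, and Theorems \ref{thm:simAA1} and \ref{thm:altaa} are proved using the special spectral structure of the bilinear iteration matrix (purely imaginary conjugate pairs, respectively eigenvalues confined to a disk $D(c,r)$ in the right half-plane); a generic local-minimax Jacobian has neither property. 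Indeed, the symmetric part of $\eta J$ is $\operatorname{diag}(\eta\nabla_{\x\x}f,\,-\eta\nabla_{\y\y}f)$, and $\nabla_{\x\x}f$ may be indefinite at a local minimax point, so $0$ can lie in the numerical range of the coefficient matrix---exactly the obstruction the paper spells out in Appendix \ref{supsec: diff} as the reason Crouzeix/numerical-range bounds fail here.

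The paper closes this hole with a device you do not have: it introduces a shift $\alpha\mathbf{I}$ into the linear system so that the symmetric part becomes $\operatorname{diag}(\alpha\mathbf{I}+\eta \mathbf{H}_{\x^{*}\x^{*}},\,\alpha\mathbf{I}-\eta \mathbf{H}_{\y^{*}\y^{*}})$, which is positive definite using only the \emph{necessary} condition $\mathbf{H}_{\y^{*}\y^{*}}\preccurlyeq\mathbf{0}$ (Proposition \ref{pro2}) for the $\y$-block and the constraint $\eta<-\alpha/\lambda_{\min}(\mathbf{H}_{\x^{*}\x^{*}})$ for the $\x$-block; an Elman-type bound then gives genuine residual decrease for the shifted system. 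The $\alpha$-shift and the $o(\|\w_t-\w^*\|)$ Taylor remainder are both treated as perturbations of the right-hand side, handled by the stability result Theorem \ref{thm:perturbed-minimax} applied to a companion system whose exact solution is $\w^*$, and the conclusion follows by letting $t\to\infty$ and $\alpha,\epsilon\to 0$. Your ``remainder-versus-contraction bookkeeping'' sketch gestures at the right kind of perturbation argument for the nonlinear error, but without the regularizing shift you have no per-cycle contraction to trade against, so the proof as proposed does not go through.
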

The definition of local minimax is stronger than that of first order $\epsilon$ point. The convergence analysis for complexity of finding $\epsilon$ stationary point is included in the next section. The proof of Theorem \ref{thm:limiting-points} needs the result from the following theorem.
\begin{theorem}[\cite{calvetti2002regularizing}]
\label{thm:perturbed-minimax}
Let $\delta$ satisfy $0<\delta \leq \delta_0$ for some constant $\delta_0>0$ (refer to \cite{calvetti2002regularizing} for details), and let $b^{\delta} \in \mathcal{X}$ satisfy $\left\|b-b^{\delta}\right\| \leq \delta$. Let $k \leq \ell$ and let $x_{k}^{\delta}$ denote the kth iterate determined by the GMRES method applied to equation $A x=b^{\delta} ,$ with initial guess $x_{0}^{\delta}=0 .$ Similarly, let $x_{k}$ denote the kth iterate determined by the GMRES method applied to equation $A x=b$ with initial guess $x_{0}=0 .$ Then, there are constants $\sigma_{k}$ independent of $\delta$, such that
$$
\left\|x_{k}-x_{k}^{\delta}\right\| \leq \sigma_{k} \delta, \quad 1 \leq k \leq \ell
$$
\end{theorem}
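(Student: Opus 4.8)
The plan is to exhibit the $k$-th GMRES iterate as an \emph{explicit smooth function of the right-hand side} and then read the bound off from local Lipschitz continuity on a compact neighbourhood of $b$, with the Lipschitz constant playing the role of $\sigma_k$. Throughout I assume, as in the application to $A = \mathbf{I}-\mathbf{G}$, that $A$ is nonsingular, and I write $\ell$ for the grade of $b$ with respect to $A$ (the largest dimension attained by the nested Krylov subspaces).

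First I would make the iterate explicit. Since $x_0=0$ the initial residual is $r_0=b$ and the $k$-th Krylov subspace is $\mathcal{K}_k=\mathrm{span}\{b,Ab,\dots,A^{k-1}b\}$; collect these vectors into the Krylov matrix $K_k(b)=[\,b \mid Ab \mid \cdots \mid A^{k-1}b\,]$ and set $W_k(b)=AK_k(b)=[\,Ab \mid \cdots \mid A^{k}b\,]$. Because $x_k\in\mathcal{K}_k$ minimizes $\|b-Ax\|$, writing $x_k=K_k(b)\,c$ reduces GMRES to the least-squares problem $\min_c\|b-W_k(b)c\|$, whose normal equations give
\begin{equation*}
 x_k(b)=K_k(b)\bigl(W_k(b)^\top W_k(b)\bigr)^{-1}W_k(b)^\top b .
\end{equation*}
Here $K_k(b)$ and $W_k(b)$ are \emph{linear} in $b$, so $x_k(\cdot)$ is a rational — hence real-analytic — map of $b$ on the open set $\Omega_k=\{\,b':\operatorname{rank}W_k(b')=k\,\}$, where the Gram matrix $W_k^\top W_k$ is invertible.

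Next I would secure a uniform full-rank neighbourhood. The hypothesis $k\le\ell$ forces $b,Ab,\dots,A^{k-1}b$ to be linearly independent, and since $A$ is nonsingular $W_k(b)$ has full column rank, i.e.\ $b\in\Omega_k$. Full column rank is an open condition (it is $\det(W_k^\top W_k)\neq0$, a polynomial in $b$), so there is $\delta_0^{(k)}>0$ with $\{\,b':\|b'-b\|\le\delta_0^{(k)}\,\}\subset\Omega_k$; taking $\delta_0=\min_{1\le k\le\ell}\delta_0^{(k)}>0$ yields a single radius valid for all $k\le\ell$, which is the $\delta_0$ of the statement. For any $\delta\le\delta_0$ and any $b^\delta$ with $\|b-b^\delta\|\le\delta$, both $b$ and $b^\delta$ lie in this ball, so $x_k(b)$ and $x_k(b^\delta)$ are produced by the one smooth formula above.

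Finally I would extract the bound. On the compact convex ball $B=\{\,b':\|b'-b\|\le\delta_0\,\}\subset\Omega_k$ the Jacobian $\nabla x_k$ is continuous, hence bounded; set $\sigma_k=\sup_{b'\in B}\|\nabla x_k(b')\|$, which depends only on $A$, $b$, $\delta_0$ and $k$, and crucially not on $\delta$ or on the particular $b^\delta$. The mean value inequality along the segment $[b,b^\delta]\subset B$ then gives $\|x_k(b)-x_k(b^\delta)\|\le\sigma_k\|b-b^\delta\|\le\sigma_k\delta$, which is the asserted estimate for each $1\le k\le\ell$. I expect the main obstacle to be controlling $(W_k^\top W_k)^{-1}$ uniformly on $B$: its norm and derivative blow up as $b'$ approaches the ``breakdown variety'' where the Krylov grade drops below $k$, so $\delta_0$ must be taken small enough that $\det(W_k^\top W_k(b'))$ stays bounded away from zero on the whole ball. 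This conditioning requirement is exactly what the constant $\delta_0$ encodes, and it is the reason the constants $\sigma_k$ degrade as $k\to\ell$.
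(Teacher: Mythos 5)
Your proposal is correct for the setting in which the paper uses this theorem, but first note that the paper contains no proof of this statement at all: it is imported verbatim from \cite{calvetti2002regularizing} as a black box for the proof of the limiting-points theorem, so the comparison is really between your argument and the original one in that reference. Your route --- writing the iterate explicitly as $x_k(b)=K_k(b)\bigl(W_k(b)^{\top}W_k(b)\bigr)^{-1}W_k(b)^{\top}b$ with $W_k=AK_k$, observing that for $k\le \ell$ the full-column-rank condition $\det\bigl(W_k(b')^{\top}W_k(b')\bigr)\neq 0$ is open and polynomial in $b'$, and then reading off $\sigma_k$ as a Jacobian bound on a compact ball via the mean value inequality --- is a clean and valid finite-dimensional smoothness-plus-compactness proof: the grade hypothesis $k\le\ell$ together with nonsingularity of $A$ gives $\operatorname{rank}W_k(b)=k$, uniqueness of the least-squares coefficient vector makes the formula the actual GMRES iterate for every right-hand side in the ball, and your $\sigma_k$ visibly depends only on $A$, $b$, $\delta_0$, $k$, not on $\delta$ or $b^{\delta}$, exactly as the statement requires. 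Two caveats distinguish it from the cited original. First, Calvetti--Lewis--Reichel work in a Hilbert space $\mathcal{X}$ in the context of linear discrete ill-posed problems, where $A$ need not be invertible; your added hypothesis that $A$ is nonsingular is harmless here --- the theorem is only invoked for a matrix whose symmetric part is positive definite, hence nonsingular --- but it is a genuine restriction relative to the source. Second, your compactness step (``the Jacobian is continuous on a compact ball, hence bounded'') is specifically finite-dimensional: in a Hilbert space closed balls are not compact, and one would instead bound the $k\times k$ Gram determinant below quantitatively (its entries are quadratic, hence locally Lipschitz, in $b'$, so shrinking $\delta_0$ keeps the determinant above half its value at $b$) and propagate explicit norm bounds through the rational formula; this is routine but is where the original's explicit perturbation analysis does more work than your nonconstructive $\sigma_k$. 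For the statement as quoted, and for every use the paper makes of it, your argument suffices.
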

Then, we give the proof of Theorem \ref{thm:limiting-points}.
\begin{proof}[\textbf{Proof of Theorem \ref{thm:limiting-points}}]
For notational simplicity, we will denote $\nabla_{\x\x}f(\x^{*},\y^{*})$, $\nabla_{\x\y}f(\x^{*},\y^{*})$ and $\nabla_{\y\y}f(\x^{*},\y^{*})$ by $\mathbf{H}_{\x^{\ast}\x^{\ast}}$, $\mathbf{H}_{\x^{\ast}\y^{\ast}}$ and $\mathbf{H}_{\y^{\ast}\y^{\ast}}$ , respectively. Simultaneous GDA can be written as
\begin{equation*}
\w_{t+1} = 
    \begin{bmatrix}
        \x_{t+1}\\
        \y_{t+1}
    \end{bmatrix}
    =
    \begin{bmatrix}
        \x_{t} - \eta \nabla_{\x} f(\x_{t},\y_{t})\\
        \y_{t} + \eta \nabla_{\y} f(\x_{t},\y_{t})\\
    \end{bmatrix}.
\end{equation*}
Since the function is differentiable, Taylor expansion holds for $\nabla_{\x} f(\x_{t},\y_{t})$ and $\nabla_{\y} f(\x_{t},\y_{t})$ at a local minimx point $\w^{*} = (\x^{*},\y^{*})$,
\begin{equation*}
    \begin{split}
        \nabla_{\x} f(\x_{t},\y_{t}) & =  \nabla_{\x}f(\x^{*},\y^{*}) + \mathbf{H}_{\x^{*}\x^{*}}(\x_t - \x^{*}) +  \mathbf{H}_{\x^{*}\y^{*}}(\y_t - \y^{*}) + o(\|\w_t-\w^{*}\|_{2})   \\        
        \nabla_{\y} f(\x_{t},\y_{t}) & =  \nabla_{\y}f(\x^{*},\y^{*}) + \mathbf{H}_{\y^{*}\y^{*}}(\y_t - \y^{*}) +  \mathbf{H}_{\y^{*}\x^{*}}(\x_t - \x^{*}) + o(\|\w_t-\w^{*}\|_{2}).
    \end{split}
\end{equation*}
Use the fact that $\nabla f(\x^{*},\y^{*}) = \mathbf{0}$ to simplify the above equations and obtain
\begin{equation*}
    \begin{split}
        \nabla_{\x} f(\x_{t},\y_{t}) & = \mathbf{H}_{\x^{*}\x^{*}}(\x_t - \x^{*}) +  \mathbf{H}_{\x^{*}\y^{*}}(\y_t - \y^{*}) + o(\|\w_t-\w^{*}\|_{2})   \\          
        \nabla_{\y} f(\x_{t},\y_{t}) & = \mathbf{H}_{\y^{*}\y^{*}}(\y_t - \y^{*}) +  \mathbf{H}_{\y^{*}\x^{*}}(\x_t - \x^{*}) + o(\|\w_t-\w^{*}\|_{2}).
    \end{split}
\end{equation*}
Inserting the above formulas into the iteration scheme, it yields 
\begin{equation*}
\w_{t+1} = 
    \begin{bmatrix}
        \x_{t+1}\\
        \y_{t+1}
    \end{bmatrix}
    =
    \begin{bmatrix}
        \mathbf{I} - \eta \mathbf{H}_{\x^{*}\x^{*}} & -\eta \mathbf{H}_{\x^{*}\y^{*}}\\
        \eta \mathbf{H}_{\y^{*}\x^{*}} & \mathbf{I} + \eta \mathbf{H}_{\y^{*}\y^{*}}
    \end{bmatrix}
    \begin{bmatrix}
        \x_{t}\\ 
        \y_{t} \\
    \end{bmatrix}
    +
    \begin{bmatrix}
        \eta \mathbf{H}_{\x^{*}\x^{*}} \x^{*} + \eta \mathbf{H}_{\x^{*}\y^{*}}\y^{*} + \epsilon\\
        - \eta \mathbf{H}_{\y^{*}\y^{*}} \y^{*} - \eta \mathbf{H}_{\x^{*}\y^{*}}\x^{*} + \epsilon
    \end{bmatrix}
\end{equation*}
where $\epsilon$ denotes the higher order error $o(\|\w_t-\w^{*}\|_{2})$.
According to Theorem \ref{thm:equvialence}, we know that simultaneous \methodName~ is equivalent to applying GMRES to solve the following linear system
\begin{equation*}
(\mathbf{I} -
    \begin{bmatrix}
        (1-\alpha)\mathbf{I} - \eta \mathbf{H}_{\x^{*}\x^{*}} & -\eta \mathbf{H}_{\x^{*}\y^{*}}\\
        \eta \mathbf{H}_{\y^{*}\x^{*}} & (1-\alpha)\mathbf{I} + \eta \mathbf{H}_{\y^{*}\y^{*}}
    \end{bmatrix}) \w
  =
 \begin{bmatrix}
      \alpha\mathbf{I} + \eta \mathbf{H}_{\x^{*}\x^{*}} & \eta \mathbf{H}_{\x^{*}\y^{*}}\\
        -\eta \mathbf{H}_{\y^{*}\x^{*}} & \alpha\mathbf{I}- \eta \mathbf{H}_{\y^{*}\y^{*}}
 \end{bmatrix}\w = 
    \bb + \epsilon
\end{equation*}
where 
$ \bb = \begin{bmatrix}
        \eta \mathbf{H}_{\x^{*}\x^{*}} \x^{*} + \eta \mathbf{H}_{\x^{*}\y^{*}}\y^{*} \\
        - \eta \mathbf{H}_{\y^{*}\y^{*}} \y^{*} - \eta \mathbf{H}_{\x^{*}\y^{*}}\x^{*}
    \end{bmatrix}.$
We now know that \methodName~ is equivalent to GMRES being applied to solve the following linear system
\begin{equation*}
     \begin{bmatrix}
       \alpha\mathbf{I} +\eta \mathbf{H}_{\x^{*}\x^{*}} & \eta \mathbf{H}_{\x^{*}\y^{*}}\\
        -\eta \mathbf{H}_{\y^{*}\x^{*}} &  \alpha\mathbf{I}- \eta \mathbf{H}_{\y^{*}\y^{*}}
 \end{bmatrix}\Tilde{\w} = 
    \bb 
\end{equation*}
The symmetric part of the coefficient matrix of the above linear system is 
\begin{equation*}
     \begin{bmatrix}
       \alpha\mathbf{I} +\eta \mathbf{H}_{\x^{*}\x^{*}} & \mathbf{0}\\
        \mathbf{0} &  \alpha\mathbf{I}- \eta \mathbf{H}_{\y^{*}\y^{*}}
 \end{bmatrix}.
\end{equation*}
According to Proposition \ref{pro2}, $\alpha\mathbf{I}- \eta \mathbf{H}_{\y^{*}\y^{*}}$ is positive definite since $ \mathbf{H}_{\y^{*}\y^{*}}\preccurlyeq \mathbf{0}$. If $\mathbf{H}_{\x^{*}\x^{*}}$ is positive semidefinite, then $\alpha\mathbf{I} +\eta \mathbf{H}_{\x^{*}\x^{*}}$ is positive definite and we're done. Otherwise, assume $ \lambda_{\min}(\mathbf{H}_{\x^{*}\x^{*}})< 0.$ Then for fixed $\alpha$, when $\eta < -\frac{\alpha}{\lambda_{\min}(\mathbf{H}_{\x^{*}\x^{*}})}$, $\alpha\mathbf{I} +\eta \mathbf{H}_{\x^{*}\x^{*}}$ will be positive definite. Then according to Theorem \ref{thm:equvialence}, we know \methodName ~ indeed converges. Let's create a new companion linear system as follows
\begin{equation*}
     \begin{bmatrix}
        \alpha\mathbf{I} + \eta \mathbf{H}_{\x^{*}\x^{*}} & \eta \mathbf{H}_{\x^{*}\y^{*}}\\
        -\eta \mathbf{H}_{\y^{*}\x^{*}} &  \alpha\mathbf{I}- \eta \mathbf{H}_{\y^{*}\y^{*}}
 \end{bmatrix}\Hat{\w} = 
    \bb  + \alpha\w^{*}
\end{equation*}
Note that $\Hat{\w} = \w^{*}$ and GMRES on this companion linear system is convergent under suitable choice of learning rate $\eta$.
Let the iterates of GMRES for $\Tilde{\w}, \Hat{\w}, \w$ be denoted by  $\Tilde{\w}_{t}, \Hat{\w}_{t}, \w_{t}$.
Then $\|\Tilde{\w}_{t} - \Hat{\w}_{t}\|\le \|\Tilde{\w}_{t} - \w_{t}\|+\|\Hat{\w}_{t}-\w_{t}\|$. According to Theorem \ref{thm:perturbed-minimax}, we also have $\|\Tilde{\w}_{t} - \w_{t}\|\leq \sigma_{k} \epsilon, 1 \leq k \leq t$.
Further more, again according to Theorem \ref{thm:perturbed-minimax}, we know $\|\Hat{\w}_{t}-\Hat{\w}_{t}\|\le \sigma_{k}(\alpha\w^{*} + \epsilon) $. Starting from an initial point very close to $\w^{*}$ and let $t \rightarrow \infty$ and $\alpha,\epsilon \rightarrow 0$, $\Hat{\w}_{t}$ will converge to $\w^{*} = (\x^{*},\y^{*})$, which means the local minimax $\w^{*} = (\x^{*},\y^{*})$ is a limiting point of GDA-RAM. 
\end{proof}
\begin{theorem}
\label{thm:strong-convex-concave}
For strongly-convex-strongly-concave function $f(\x,\y)$, \methodName~ will converge to the Nash equilibrium of this function. 
\end{theorem}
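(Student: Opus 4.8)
The plan is to reduce the claim to the convergence of restarted GMRES, exactly as in the proof of Theorem~\ref{thm:limiting-points}, but now exploiting that strong convexity--concavity makes the relevant symmetric part \emph{globally} positive definite. First I would record that a strongly-convex-strongly-concave $f$ possesses a unique saddle point $\w^{*}=(\x^{*},\y^{*})$, which is therefore the unique Nash equilibrium and satisfies $\nabla f(\x^{*},\y^{*})=\mathbf{0}$; existence follows from the minimax theorem together with coercivity of $f(\cdot,\y)$ and $-f(\x,\cdot)$, and uniqueness from strict monotonicity of the operator $V(\w)=[\nabla_{\x}f;-\nabla_{\y}f]$. Since $\nabla_{\x\x}f\succ\mathbf{0}$ and $\nabla_{\y\y}f\prec\mathbf{0}$ everywhere, Propositions~\ref{pro2} and~\ref{pro3} confirm that $\w^{*}$ is exactly the local (hence global) minimax point.

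Next I would linearize the simultaneous GDA map about $\w^{*}$ by Taylor expansion, as done in the proof of Theorem~\ref{thm:limiting-points}. Writing $\mathbf{H}_{\x^{*}\x^{*}}$, $\mathbf{H}_{\x^{*}\y^{*}}$, $\mathbf{H}_{\y^{*}\y^{*}}$ for the Hessian blocks at $\w^{*}$, the update becomes an affine iteration with matrix
\begin{equation*}
\mathbf{G}=\begin{bmatrix} \mathbf{I}-\eta\mathbf{H}_{\x^{*}\x^{*}} & -\eta\mathbf{H}_{\x^{*}\y^{*}} \\ \eta\mathbf{H}_{\y^{*}\x^{*}} & \mathbf{I}+\eta\mathbf{H}_{\y^{*}\y^{*}}\end{bmatrix}
\end{equation*}
plus a higher-order remainder $\epsilon=o(\|\w_{t}-\w^{*}\|)$. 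By Theorem~\ref{thm:equvialence}, \methodName~coincides with GMRES applied to $(\mathbf{I}-\mathbf{G})\w=\bb$, so it suffices to establish GMRES convergence for this system. The crucial observation is that the symmetric part of $\mathbf{I}-\mathbf{G}$ is block diagonal,
\begin{equation*}
\tfrac{1}{2}\bigl((\mathbf{I}-\mathbf{G})+(\mathbf{I}-\mathbf{G})^{T}\bigr)=\begin{bmatrix} \eta\mathbf{H}_{\x^{*}\x^{*}} & \mathbf{0} \\ \mathbf{0} & -\eta\mathbf{H}_{\y^{*}\y^{*}}\end{bmatrix},
\end{equation*}
because the coupling is skew-symmetric (the bottom-left block $-\eta\mathbf{H}_{\y^{*}\x^{*}}$ is minus the transpose of the top-right block $\eta\mathbf{H}_{\x^{*}\y^{*}}$) and drops out. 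Strong convexity gives $\mathbf{H}_{\x^{*}\x^{*}}\succ\mathbf{0}$ and strong concavity gives $-\mathbf{H}_{\y^{*}\y^{*}}\succ\mathbf{0}$, so this symmetric part is positive definite for every $\eta>0$.

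A matrix whose symmetric part is positive definite has field of values in the open right half-plane, bounded away from the origin; hence $\mathbf{I}-\mathbf{G}$ is nonsingular and GMRES enjoys the classical geometric residual bound (Elman's estimate, compatible with the residual-polynomial analysis underlying Theorems~\ref{thm:simAA1} and~\ref{thm:altaa}). To pass from the linearized iteration to the genuine nonlinear one, I would invoke Theorem~\ref{thm:perturbed-minimax}: the GMRES iterates of the true right-hand side $\bb+\epsilon$ differ from those of the exact linear system by at most $\sigma_{k}\|\epsilon\|$, and since $\|\epsilon\|=o(\|\w_{t}-\w^{*}\|)$ vanishes as the iterates approach $\w^{*}$, a bootstrapping argument initialized sufficiently close to $\w^{*}$ forces convergence to $\w^{*}$.

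The hard part will be precisely this nonlinear-to-linear passage. The equivalence in Theorem~\ref{thm:equvialence} is exact only for affine maps, whereas a general strongly-convex-strongly-concave $f$ induces a genuinely nonlinear GDA operator, so the reduction is clean only in the bilinear-quadratic case. Making the perturbation argument rigorous requires controlling that the remainder $\epsilon$ does not accumulate across the $p$ inner steps of a restart cycle and that the linear contraction factor dominates the $o(\cdot)$ terms uniformly in a neighborhood of $\w^{*}$; this is where the bulk of the technical work lies, and it is also why the result is intrinsically a local statement that is promoted to a global guarantee only by the everywhere-definiteness supplied by strong convexity--concavity.
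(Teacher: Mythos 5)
Your proposal is correct in substance but takes a genuinely different route from the paper. The paper's entire proof is two lines: a strongly-convex-strongly-concave $f$ has a unique Nash equilibrium, which is also the unique (local) minimax point, and Theorem~\ref{thm:limiting-points} says every local minimax point is a limiting point of \methodName, so the algorithm converges to it. You instead unpack the machinery behind Theorem~\ref{thm:limiting-points} and re-run it: linearize the GDA map at $\w^{*}$, invoke the AM--GMRES equivalence of Theorem~\ref{thm:equvialence}, observe that the skew-symmetric coupling cancels so the symmetric part of $\mathbf{I}-\mathbf{G}$ is block diagonal and positive definite, apply Elman's bound, and control the nonlinear remainder via Theorem~\ref{thm:perturbed-minimax}. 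What your approach buys is a genuine strengthening in this special case: because $\mathbf{H}_{\x^{*}\x^{*}}\succ\mathbf{0}$ and $-\mathbf{H}_{\y^{*}\y^{*}}\succ\mathbf{0}$ hold automatically, you do not need the auxiliary regularization parameter $\alpha$ or the step-size restriction $\eta<-\alpha/\lambda_{\min}(\mathbf{H}_{\x^{*}\x^{*}})$ that the paper's proof of Theorem~\ref{thm:limiting-points} requires when $\mathbf{H}_{\x^{*}\x^{*}}$ is indefinite, and positive definiteness holds for every $\eta>0$. What the paper's route buys is brevity, at the cost of inheriting (and somewhat obscuring) two weaknesses you correctly flag: Theorem~\ref{thm:limiting-points} only establishes that $\w^{*}$ is a limiting point for initializations sufficiently close to it, which is weaker than the unconditional ``will converge'' in the statement, and the nonlinear-to-linear passage via the $o(\|\w_t-\w^{*}\|)$ remainder is not made rigorous across restart cycles in either version. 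Your explicit acknowledgment of where the technical gap lies is more honest than the paper's one-line citation, though neither argument fully closes it.
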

\begin{proof}[\textbf{Proof:}]
Since  strongly-convex-strongly-concave function $f(\x,\y)$ has unique Nash equilibrium which is also the unique minimax point, this minimax point must be the limiting point of \methodName~ according to Theorem \ref{thm:limiting-points}.
\end{proof}
}
\subsubsection{Bilinear-quadratic games}
\label{sec:B.4.1}
{
Moreover, we can further show that the \methodName~ converges on bilinear-quadratic games. Consider a quadratic problem as follows,
\begin{equation}
\label{eq:bilinear-quadratic}
    \min_{\mathbf{x}\in \mathbb{R}^n}\max_{\mathbf{y}\in \mathbb{R}^n}f(\mathbf{x},\mathbf{y}) =  \mathbf{x}^{T}\mathbf{A}\mathbf{y} + \mathbf{x}^{T}\mathbf{B}\mathbf{x} -\mathbf{y}^{T}\mathbf{C}\mathbf{y} + \mathbf{b}^{T}\x + \mathbf{c}^T\y,
\end{equation}
where $\mathbf{A}$ is full rank, $\mathbf{B}$ and $\mathbf{C}$ are both positive definite.
\begin{theorem}\label{thm:simAAquad}[Global convergence for simultaneous \methodName~ on bilinear-quadratic problem]
Let $\br^{(Sim)}_{t}$
be the residual of Algorithm \ref{alg:AAsim} being applied to problem \eqref{eq:bilinear-quadratic}. For some constant $\rho<1$,
\begin{equation}
\begin{split}
    \|\br^{(Sim)}_t\|_2 & \leq
 \underbrace{\left(1-\frac{(\lambda_{\min }(\mathbf{J}^{T}+\mathbf{J}))^2}{4\lambda_{\max }\left(\mathbf{J}^{T} \mathbf{J}\right)}\right)^{t / 2}}_{{\rho}^{t/2}}\|\br_0\|_2,
\end{split}
\end{equation}
where $\mathbf{J}=\begin{bmatrix}
    \eta\mathbf{B} & \eta\mathbf{A}\\
    -\eta\mathbf{A}^{T} &\eta\mathbf{C}
    \end{bmatrix}$ and $\lambda_{\min}$ and $\lambda_{\max}$ denote the smallest and largest eigenvalue, respectively.
\end{theorem}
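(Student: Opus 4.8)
The plan is to reduce the statement to the classical residual-decay bound for GMRES applied to a matrix whose symmetric part is positive definite. First I would write one step of simultaneous GDA applied to \eqref{eq:bilinear-quadratic} in affine matrix form. Computing $\nabla_{\x}f$ and $\nabla_{\y}f$ and collecting terms yields an iteration of the type $\w_{t+1}=(\mathbf{I}-\mathbf{J})\w_t-\eta\,\bb^{(Sim)}$, where the fixed-point operator satisfies $\mathbf{I}-\G^{(Sim)}=\mathbf{J}$ with the stated block structure $\mathbf{J}=\begin{bmatrix}\eta\mathbf{B}&\eta\A\\ -\eta\A^{T}&\eta\mathbf{C}\end{bmatrix}$ (up to the scaling convention on the diagonal blocks coming from differentiating the quadratic terms $\x^{T}\mathbf{B}\x$ and $\y^{T}\mathbf{C}\y$). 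By Theorem \ref{thm:equvialence}, running Algorithm \ref{alg:AAsim} on this fixed-point iteration is equivalent to running GMRES on the linear system $\mathbf{J}\w=\bb^{(Sim)}$, so that $\br^{(Sim)}_t$ coincides with the GMRES residual and I may analyze the latter.

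The crux of the argument is to control the symmetric part of $\mathbf{J}$. Since $\mathbf{B}$ and $\mathbf{C}$ are symmetric positive definite, a direct computation gives $\mathbf{J}+\mathbf{J}^{T}=\begin{bmatrix}2\eta\mathbf{B}&\0\\ \0&2\eta\mathbf{C}\end{bmatrix}$: the antisymmetric cross-blocks $\eta\A$ and $-\eta\A^{T}$ cancel exactly, leaving a block-diagonal matrix that is positive definite for every $\eta>0$. In particular $\lambda_{\min}(\mathbf{J}^{T}+\mathbf{J})=2\eta\min\{\lambda_{\min}(\mathbf{B}),\lambda_{\min}(\mathbf{C})\}>0$, which simultaneously shows that $\mathbf{J}$ is nonsingular (any $\mathbf{v}$ with $\mathbf{J}\mathbf{v}=\0$ would force $\mathbf{v}^{T}(\mathbf{J}+\mathbf{J}^{T})\mathbf{v}=0$), so the hypothesis $\mathbf{I}-\G^{(Sim)}$ nonsingular needed for Theorem \ref{thm:equvialence} is met.

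Next I would invoke the one-step minimal-residual estimate that drives the Elman-type bound. By the residual optimality of GMRES (Theorem \ref{thm:residual}), at each step $\|\br_{k+1}\|_2\le\min_{\alpha}\|\br_k-\alpha\mathbf{J}\br_k\|_2$; minimizing the right-hand side over $\alpha$ gives $\|\br_{k+1}\|_2^2\le\|\br_k\|_2^2\bigl(1-\tfrac{(\br_k^{T}\mathbf{J}\br_k)^2}{\|\br_k\|_2^2\,\|\mathbf{J}\br_k\|_2^2}\bigr)$. Using $\br_k^{T}\mathbf{J}\br_k=\tfrac12\br_k^{T}(\mathbf{J}+\mathbf{J}^{T})\br_k\ge\tfrac12\lambda_{\min}(\mathbf{J}^{T}+\mathbf{J})\|\br_k\|_2^2$ together with $\|\mathbf{J}\br_k\|_2^2\le\lambda_{\max}(\mathbf{J}^{T}\mathbf{J})\|\br_k\|_2^2$ yields the per-step contraction $\|\br_{k+1}\|_2^2\le\bigl(1-\tfrac{(\lambda_{\min}(\mathbf{J}^{T}+\mathbf{J}))^2}{4\lambda_{\max}(\mathbf{J}^{T}\mathbf{J})}\bigr)\|\br_k\|_2^2$. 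Iterating this $t$ times and taking square roots produces the claimed bound, and since the contraction factor lies strictly in $(0,1)$ I may take $\rho$ to be exactly this factor.

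I expect the main obstacle to be bookkeeping rather than conceptual. Two points need care: ensuring the per-step estimate is compatible with the \emph{restarted} form of Algorithm \ref{alg:AAsim} (the bound is monotone and holds within each GMRES cycle, so restarting is harmless and the same factor survives), and reconciling the diagonal-block scaling convention so that the final constant matches the $\mathbf{J}$ of the statement rather than the $2\eta\mathbf{B},2\eta\mathbf{C}$ arising directly from the gradients. Both are routine once the symmetric-part computation above is in hand, which is the genuine content of the proof.
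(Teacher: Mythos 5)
Your proposal is correct and follows essentially the same route as the paper: reduce Algorithm \ref{alg:AAsim} via Theorem \ref{thm:equvialence} to GMRES on $\mathbf{J}\w=\bb$, observe that the skew cross-blocks cancel so the symmetric part of $\mathbf{J}$ is the block-diagonal positive definite matrix built from $\mathbf{B}$ and $\mathbf{C}$, and conclude with the Elman-type residual bound. The only difference is that the paper cites this bound as a black box (Theorem \ref{thm:gmres2}, from \cite{elman1982iterative}) whereas you re-derive it via the one-step minimal-residual argument; that derivation is the standard proof of the cited result, so the content is the same.
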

}
{
The convergence property of GMRES has been studied in the next theorem. We use this theorem to show the convergence rate of \methodName~ for bilinear-quadratic games. 
\begin{theorem}[\cite{elman1982iterative}]
Consider solving a linear system $\mathbf{E}\x = \bb$ using GMRES. Let $\br_{t} = \bb-\mathbf{E} \x_{t}$ be the residual at $t$th iteration. If the Hermitian part of $\mathbf{E}$ is positive definite, then for some positive constant $\rho<1$, it holds that 
\begin{equation}
\begin{split}
    \|\br_t\|_2 &\leq
 \underbrace{\left(1-\frac{(\lambda_{\min }(\mathbf{E}^{H} +\mathbf{E}))^2}{4\lambda_{\max }\left(\mathbf{E}^{H} \mathbf{E}\right)}\right)^{t / 2}}_{{\rho}^{t/2}}\|\br_0\|_2.
\end{split}
\end{equation}
\label{thm:gmres2}
\end{theorem}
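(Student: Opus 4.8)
The plan is to prove this classical GMRES estimate by pairing the residual-polynomial characterization of GMRES (Theorem~\ref{thm:residual}) with an explicit one-step minimal-residual bound, and then telescoping that bound over the iterations. Throughout I write $\mathbf{S}=\tfrac12(\mathbf{E}^{H}+\mathbf{E})$ for the Hermitian part of $\mathbf{E}$, so the hypothesis reads $\mathbf{S}\succ\mathbf{0}$ and $\lambda_{\min}(\mathbf{E}^{H}+\mathbf{E})=2\lambda_{\min}(\mathbf{S})>0$.

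First I would establish a one-step contraction that is independent of the vector it is applied to: for an arbitrary nonzero $\bv$, minimizing the quadratic in the real scalar $\omega$ gives
\[
\min_{\omega\in\mathbb{R}}\|\bv-\omega\mathbf{E}\bv\|_2^{2}=\|\bv\|_2^{2}-\frac{(\mathrm{Re}\,\bv^{H}\mathbf{E}\bv)^{2}}{\|\mathbf{E}\bv\|_2^{2}}.
\]
I then bound the two pieces by Rayleigh-quotient inequalities, $\mathrm{Re}\,\bv^{H}\mathbf{E}\bv=\bv^{H}\mathbf{S}\bv\ge\lambda_{\min}(\mathbf{S})\|\bv\|_2^{2}$ and $\|\mathbf{E}\bv\|_2^{2}=\bv^{H}\mathbf{E}^{H}\mathbf{E}\bv\le\lambda_{\max}(\mathbf{E}^{H}\mathbf{E})\|\bv\|_2^{2}$. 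Substituting and using $\lambda_{\min}(\mathbf{S})^{2}=\tfrac14\lambda_{\min}(\mathbf{E}^{H}+\mathbf{E})^{2}$ yields $\min_{\omega}\|\bv-\omega\mathbf{E}\bv\|_2^{2}\le\rho\,\|\bv\|_2^{2}$ with exactly the stated constant $\rho$. Positive definiteness of $\mathbf{S}$ makes the subtracted term strictly positive, so $\rho<1$; and $\rho\ge0$ is automatic since the left-hand side is a nonnegative quantity.

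The crux is upgrading this single-step estimate to every GMRES iteration. I would invoke the optimality of GMRES in polynomial form: by Theorem~\ref{thm:residual} the step-$k$ residual is $\br_{k}=p_{k}(\mathbf{E})\br_{0}$ for some $p_{k}\in\mathcal{P}_{k}$. For a degree-one factor $1-\omega z$, the product $q(z)=(1-\omega z)p_{k}(z)$ has degree at most $k+1$ and satisfies $q(0)=p_{k}(0)=1$, hence $q\in\mathcal{P}_{k+1}$. Since GMRES minimizes the residual norm over all of $\mathcal{P}_{k+1}$, it follows that $\|\br_{k+1}\|_2\le\min_{\omega}\|(\mathbf{I}-\omega\mathbf{E})\br_{k}\|_2$, and applying the vector-independent one-step bound to $\bv=\br_{k}$ gives $\|\br_{k+1}\|_2\le\sqrt{\rho}\,\|\br_{k}\|_2$. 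Telescoping from $k=0$ to $t-1$ then produces $\|\br_{t}\|_2\le\rho^{t/2}\|\br_{0}\|_2$.

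I expect the main obstacle to be exactly this iteration step. The one-step computation only directly controls the first GMRES step, and it would be incorrect to simply assume the same reduction recurs; the decisive observation is that the one-step constant $\rho$ depends only on $\mathbf{E}$ (through $\mathbf{S}$ and $\mathbf{E}^{H}\mathbf{E}$), not on the residual being reduced, so the polynomial-multiplication construction lets GMRES optimality at step $k+1$ dominate a single minimal step taken from the current residual $\br_{k}$. I would stress that this transfers a restarted-style per-step estimate into a guarantee for \emph{full} (non-restarted) GMRES, and that the monotonicity of GMRES residual norms is consistent with, but not by itself sufficient for, the geometric rate obtained here.
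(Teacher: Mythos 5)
Your proof is correct, and it reconstructs exactly the classical argument behind this result: the paper itself does not prove the theorem but imports it by citation from \cite{elman1982iterative}, where the proof is precisely your two-step scheme --- a vector-independent one-step minimal-residual contraction via the Rayleigh quotients of the Hermitian part and of $\mathbf{E}^{H}\mathbf{E}$, promoted to every GMRES iterate by multiplying the optimal degree-$k$ residual polynomial by a linear factor $(1-\omega z)$ and invoking GMRES optimality over $\mathcal{P}_{k+1}$. You also correctly identify the one subtle point, namely that the single-step estimate alone only controls the first iteration and that the polynomial-factor construction (with $\rho$ depending only on $\mathbf{E}$) is what makes the telescoping legitimate; the only cosmetic omission is noting that $\mathbf{S}\succ\mathbf{0}$ forces $\mathbf{E}$ to be nonsingular, so $\mathbf{E}\bv\neq\mathbf{0}$ and the division in the one-step formula is safe.
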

}
{
\begin{proof}[\textbf{Proof of Theorem \ref{thm:simAAquad}}]
Applying simultaneous \methodName~ to solve the above problem is equivalent to applying Anderson Mixing on the following fixed point iteration:
\begin{equation}
\label{eq:reg-quad-simfix}
\begin{split}
    \begin{bmatrix}
    \mathbf{x}_{t+1}\\
    \mathbf{y}_{t+1}
    \end{bmatrix}&=
    \underbrace{\begin{bmatrix}
    \mathbf{I} -\eta\mathbf{B} & -\eta\mathbf{A}\\
    \eta\mathbf{A}^{T} &I-\eta\mathbf{C}
    \end{bmatrix}}_{\mathbf{G}^{(Quad-sim)}}
    \underbrace{
    \begin{bmatrix}
    \mathbf{x}_{t}\\
    \mathbf{y}_{t}
    \end{bmatrix}}_{\mathbf{w}_{t}^{(Quad-sim)}}
    + 
    \underbrace{\begin{bmatrix}
    -\eta \bb\\
    -\eta \mathbf{c}
    \end{bmatrix}
  }_{\mathbf{b}^{^{(Quad-sim)}}}.
    \end{split}
\end{equation}
We know that we need to study the convergence properties of GMRES for solving the following linear system
\begin{equation}
\label{eq:reg-quad-GMRES}
\begin{bmatrix}
    \eta\mathbf{B} & \eta\mathbf{A}\\
    -\eta\mathbf{A}^{T} &\eta\mathbf{C}
    \end{bmatrix}\w
=
\mathbf{b}.
\end{equation}
For notational simplicity, the superscripts has been dropped. Denote the coefficient matrix 
$\begin{bmatrix}
    \eta\mathbf{B} & \eta\mathbf{A}\\
    -\eta\mathbf{A}^{T} & \eta\mathbf{C}
    \end{bmatrix} $ by $\mathbf{J}$.
The symmetric part of $\mathbf{J}$ is
\begin{equation*}
  \frac{\mathbf{J} + \mathbf{J}^{T}}{2} = \begin{bmatrix}
    \frac{\eta}{2}(\mathbf{B}+\mathbf{B}^T) & \mathbf{0}\\
    \mathbf{0} & \frac{\eta}{2}(\mathbf{C}+\mathbf{C}^T)
    \end{bmatrix} 
\end{equation*}
which is positive definite. Then immediately by Theorem \ref{thm:gmres2},
the following convergence rate holds
For some constant $0<\rho<1$,
\begin{equation}
\begin{split}
    \|\br_t\|_2 &= \min_{p\in \mathbb{P}^{1}_{t}}\| p(\mathbf{J})\br_0\|_2 
 \leq
 \underbrace{\left(1-\frac{(\lambda_{\min }({\mathbf{J}+\mathbf{J}^{T})}^{2}}{(4\lambda_{\max }\left(\mathbf{J}^{T} \mathbf{J}\right))}\right)^{t / 2}}_{{\rho}^{t/2}}\|\br_0\|_2 \\
  & = \rho^{t / 2}\|\br_0\|_2 
\end{split}
\end{equation}
\end{proof}
}
{
Note that the convergence of \methodName~ for bilinear-quadratic games can also be analyzed by numerical range as shown in \citep{bollapragada2018nonlinear}. Although we previously show that analysis based on the numerical range can not help us derive a convergent bound for bilinear games, we show analysis in \cite{bollapragada2018nonlinear} can be extended to bilinear-quadratic games. When $\mathbf{B}$ and $\mathbf{C}$ are positive definite, 1 is outside of the numerical range of matrix $\mathbf{G}^{(Quad-sim)}$ as shown in \ref{fig:numericalBC}. When $\mathbf{B}$ or $\mathbf{C}$ is not positive definite, 1 can be included in the numerical range of matrix $\mathbf{G}^{(Quad-sim)}$ as shown in \ref{fig:numericalBC2}. That is saying analysis based on the numerical range \citep{crouzeix2017numerical, bollapragada2018nonlinear} to the bilinear-quadratic problem can lead to a convergent result when $\mathbf{B}$ and $\mathbf{C}$ are positive definite. And analysis based on the numerical range can not help us derive convergent results when $\mathbf{B}$ or $\mathbf{C}$ is not positive definite.
}
\begin{figure*}[ht]
\begin{subfigure}[b]{0.47\textwidth}
\includegraphics[width=0.99\linewidth]{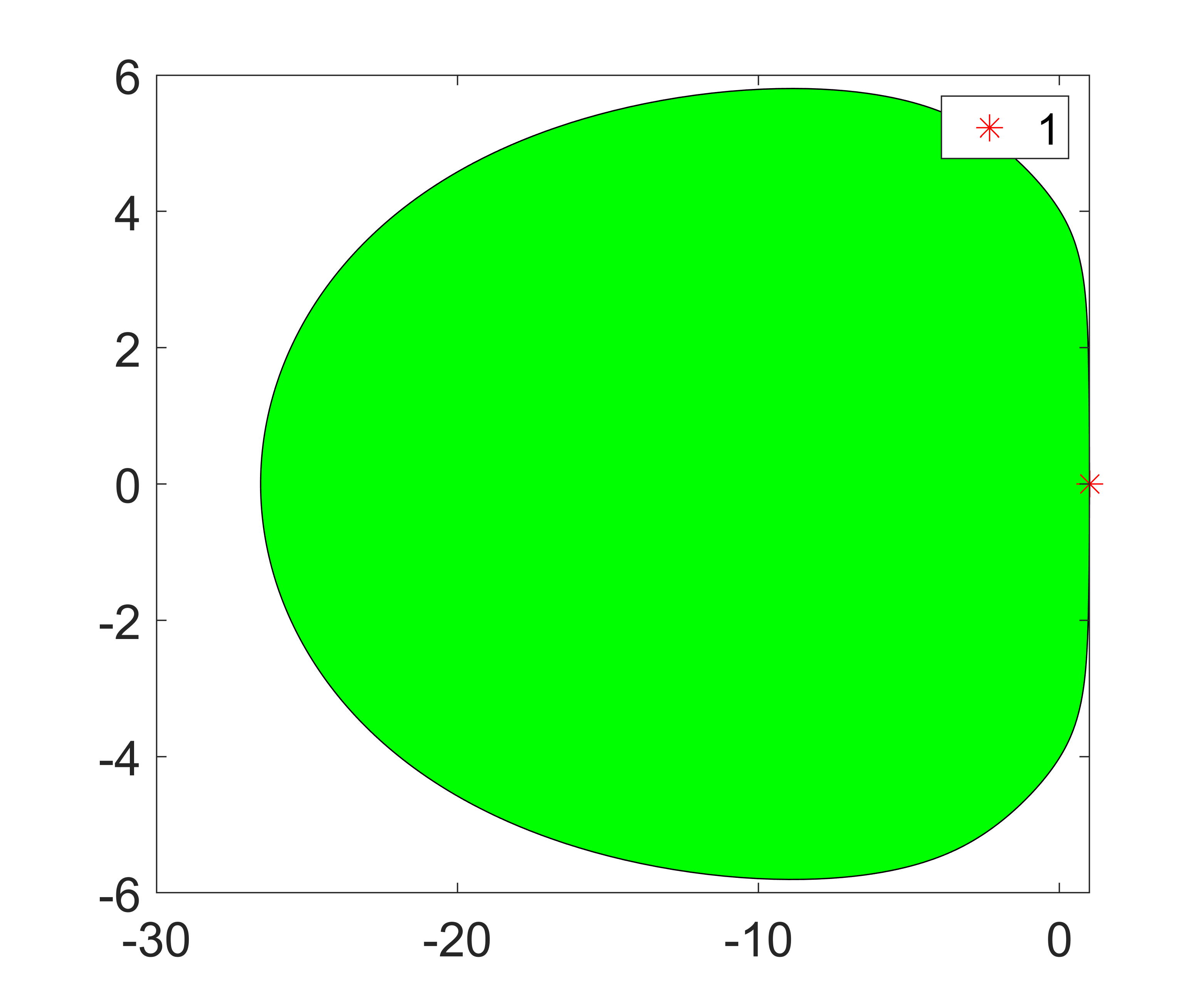}
\subcaption{
Positive definite $\mathbf{B}$ and $\mathbf{C}$} 
\label{fig:numericalBC}
\end{subfigure}
\begin{subfigure}[b]{0.47\textwidth}
\includegraphics[width=0.99\linewidth]{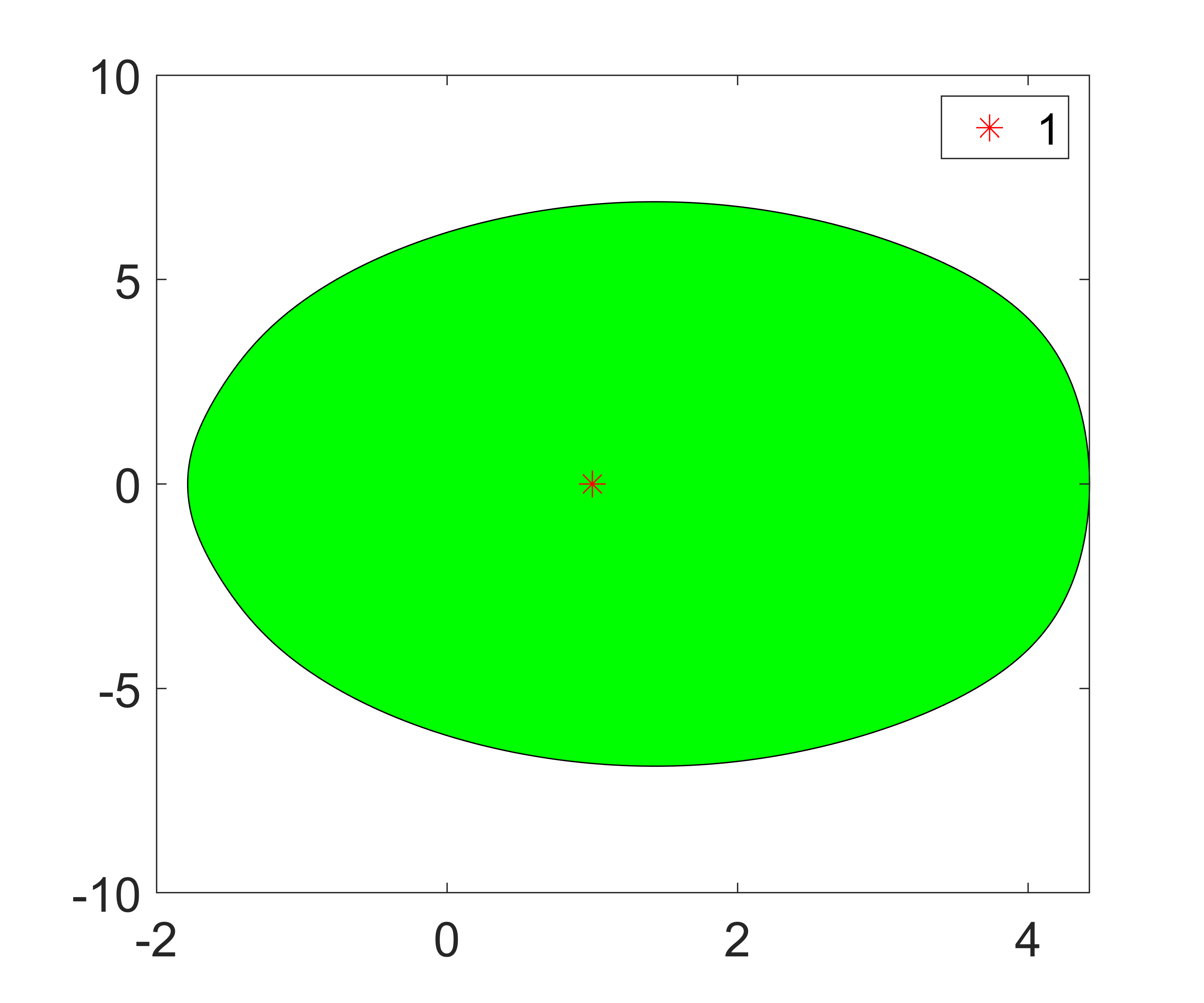}
\subcaption{Random generated $\mathbf{B}$ and $\mathbf{C}$} 
\label{fig:numericalBC2}
\end{subfigure}
\caption{Numerical range of fixed-point operator (Simultaneous \methodName~) $\mathbf{G}=\begin{bmatrix}
    \mathbf{I} -\eta\mathbf{B} & -\eta\mathbf{A}\\
    \eta\mathbf{A}^{T} &I-\eta\mathbf{C}
    \end{bmatrix}$ for bilinear-quadratic games. }
\label{fig:numericalcompare}
\end{figure*}
{
\subsection{Stochastic convex-nonconvace case}
In this section, we study the convergence of \methodName~ for convex-noncovace problem in the stochastic setting with the same assumptions in \cite{wei2021stochastic, xu2021unified}. The recent work \cite{wei2021stochastic} proves the convergence of the stochastic gradient descent with Anderson Mixing for min optimization. The convergence of \methodName~ for minimax optimization builds on top of it with several modifications. 
The minimax problem is equivalent to minimizing a function $\Phi(\cdot)=\max _{\mathbf{y} \in \mathcal{Y}} f(\cdot, \mathbf{y})$ \citep{chijin}. And we are interested in complexity of a pair of $\epsilon$-stationary point $(x,y)$ instead of analysis of a point $x$. 
\begin{definition}\citep{chijin}
A pair of points $(\mathbf{x}, \mathbf{y})$ is an $\epsilon$-stationary point $(\epsilon \geq 0)$ of a differentiable function $\Phi$ if
$$
\begin{aligned}
\left\|\nabla_{\mathbf{x}} f(\mathbf{x}, \mathbf{y})\right\| & \leq \epsilon \\
\left\|\mathcal{P}_{\mathcal{Y}}\left(\mathbf{y}+(1 / \ell) \nabla_{\mathbf{y}} f(\mathbf{x}, \mathbf{y})\right)-\mathbf{y}\right\| & \leq \epsilon / \ell
\end{aligned}
$$
\end{definition}
\begin{assumption}
\label{asp:1}
 $f: \mathbb{R}^{d} \mapsto \mathbb{R}$ is continuously differentiable. $f(x) \geq f^{\text {low }}>-\infty$ for any $x \in \mathbb{R}^{d}$. $\nabla f$ is globally L-Lipschitz continuous; namely $\|\nabla f(x)-\nabla f(y)\|_{2} \leq L\|x-y\|_{2}$ for any $x, y \in \mathbb{R}^{d}$.\\
\end{assumption}
\begin{assumption}
\label{asp:2}
For any iteration $k$, the stochastic gradient $\nabla f_{\xi_{k}}\left(x_{k}\right)$ satisfies $\mathbb{E}_{\xi_{k}}\left[\nabla f_{\xi_{k}}\left(x_{k}\right)\right]=$ $\nabla f\left(x_{k}\right), \mathbb{E}_{\xi_{k}}\left[\left\|\nabla f_{\xi_{k}}\left(x_{k}\right)-\nabla f\left(x_{k}\right)\right\|_{2}^{2}\right] \leq \sigma^{2}$, where $\sigma>0$, and $\xi_{k}, k=0,1, \ldots$, are independent samples that are independent of $\left\{x_{i}\right\}^{k}$\\
\end{assumption}
\begin{theorem}
For a general convex-nonconcave function $f$, suppose that Assumptions \ref{asp:1} and \ref{asp:2} hold. Batch size $n_{t}=n$ for $t=0, \ldots, N-1$. $C>0$ is a constant. $\beta_{t}=\frac{\mu}{4 L\left(1+C^{-1}\right)} . \delta_{t} \geq C \beta_{t}^{-2}, 0 \leq \alpha_{t} \leq \min \left\{1, \beta_{t}^{\frac{1}{2}}\right\}$ and $\alpha_t$ is chosen to make sure the positive definiteness of $H_t$. Let $R$ be a random variable following $P_{R}(t) \stackrel{\text { def }}{=} \operatorname{Prob}\{R=t\}=1 / N$, and $\bar{N}$ be the total number of stochastic GDA-AM calls needed to calculate stochastic gradients $\Tilde{\nabla} f_{S_{t}}\left(\mathbf{w}_{t}\right)$ in our algorithm. To ensure $\mathbb{E}\left[\left\|\Tilde{\nabla} f\left(\mathbf{w}_{R}\right)\right\|_{2}\right] \leq \epsilon$, total number of stochastic GDA-AM calls needed to calculate stochastic gradients $\Tilde{\nabla} f_{S_{t}}\left(\mathbf{w}_{t}\right)$ is $O(\epsilon^{-4})$.
\end{theorem}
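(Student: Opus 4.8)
The plan is to reduce the minimax problem to the minimization of the envelope $\Phi(\x)=\max_{\y} f(\x,\y)$ and then adapt the stochastic-Anderson-Mixing descent analysis of \cite{wei2021stochastic} to the joint iterate $\w=(\x,\y)$. First I would establish a one-step descent inequality for the stochastic \methodName~ update. The parameters $\beta_t$, $\delta_t$, and $\alpha_t$ are precisely the damping/regularization constants that force the Anderson mixing matrix $H_t$ to be positive definite; with $H_t \succ \mathbf{0}$ the extrapolated step becomes a descent direction in expectation for an appropriate Lyapunov function built from $f$ (or $\Phi$) and the residual norms. The choice $\beta_t=\frac{\mu}{4L(1+C^{-1})}$ together with $\delta_t\ge C\beta_t^{-2}$ and $0\le\alpha_t\le\min\{1,\beta_t^{1/2}\}$ is exactly what is needed to make the expected decrease proportional to $\|\tilde\nabla f(\w_t)\|_2^2$ minus a controllable variance term.

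Next I would sum the one-step inequality over $t=0,\dots,N-1$, telescoping the Lyapunov function and invoking the lower bound $f\ge f^{\mathrm{low}}$ from Assumption \ref{asp:1}. The $L$-Lipschitz gradient assumption supplies the quadratic upper model used in the descent step, while the unbiasedness and bounded-variance conditions of Assumption \ref{asp:2} let me replace the stochastic gradient by the true gradient at the cost of an additive $\sigma^2/n$ term, where $n$ is the fixed batch size. Because $R$ is drawn uniformly on $\{0,\dots,N-1\}$, averaging yields a bound of the form
\begin{equation*}
\E\big[\|\tilde\nabla f(\w_R)\|_2^2\big]\le \frac{A}{N}+\frac{B\,\sigma^2}{n},
\end{equation*}
for constants $A,B$ depending only on $L,\mu,C$ and the initial gap $f(\w_0)-f^{\mathrm{low}}$.

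Finally I would tune the two free parameters. Setting $N=O(\epsilon^{-2})$ and batch size $n=O(\epsilon^{-2})$ drives the right-hand side below $\epsilon^2$, so Jensen's inequality gives $\E[\|\tilde\nabla f(\w_R)\|_2]\le\epsilon$. Since each of the $N$ iterations consumes $n$ stochastic gradient evaluations, the total number of \methodName~ oracle calls is $N\cdot n=O(\epsilon^{-4})$, as claimed.

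I expect the main obstacle to be the first step: in the minimax setting the update direction is governed by the vector field $V(\w)=(\nabla_\x f,-\nabla_\y f)$, which is not the gradient of any single scalar, so the clean descent lemma used for pure minimization does not transfer verbatim. The work lies in constructing a Lyapunov/merit function (for instance based on $\Phi$ and the inexactness of the inner $\max$) for which the Anderson-mixed, $H_t$-preconditioned step still produces a guaranteed expected decrease, and in verifying that the stated conditions on $\alpha_t$ indeed keep $H_t$ positive definite along the trajectory despite the nonsymmetry of the Jacobian of $V$. Once that descent estimate is in place, the remaining telescoping-and-tuning argument follows the standard $O(\epsilon^{-4})$ nonconvex stochastic-gradient template.
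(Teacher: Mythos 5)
Your overall template --- a one-step expected-descent inequality, telescoping against $f\ge f^{\mathrm{low}}$, a uniformly random output index $R$, and the $N=O(\epsilon^{-2})$, $n=O(\epsilon^{-2})$ tuning that yields $N\cdot n=O(\epsilon^{-4})$ oracle calls --- is exactly the skeleton the paper uses, and like the paper you ultimately defer the Anderson-mixing-specific descent lemma (positive definiteness of $H_t$ under the stated $\beta_t,\delta_t,\alpha_t$ conditions) to \cite{wei2021stochastic}. Where you genuinely diverge is in the step you yourself flag as the main obstacle. You propose building the merit function from the envelope $\Phi(\x)=\max_{\y}f(\x,\y)$. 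The paper does not do this: it uses $f$ itself as the potential, applies the $L$-smoothness upper bound with the \emph{true} gradient $\nabla f(\w_t)$, and then replaces $\nabla f(\w_t)^{\mathrm T}(\w_{t+1}-\w_t)$ by $\tilde\nabla f(\w_t)^{\mathrm T}(\w_{t+1}-\w_t)$ using the identity that their difference equals $-2\nabla_{\x}f(\x_t,\y_t)^{\mathrm T}(\x_{t+1}-\x_t)$, which is shown to be nonnegative from convexity of $f(\cdot,\y)$ together with a safeguard enforcing $f(\x_{t+1},\y_t)\le f(\x_t,\y_t)$. This is precisely where the ``convex'' half of ``convex-nonconcave'' is consumed, and it lets the minimization-style analysis of \cite{wei2021stochastic} go through verbatim with $\|V\|=\|\nabla f\|=\|\tilde\nabla f\|$.

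One concrete caution about your route: in the paper's setting $f$ is convex in $\x$ but possibly \emph{nonconcave} in $\y$, which is the transpose of the nonconvex-concave regime where the $\Phi$-based (Moreau-envelope / Danskin) analyses of \cite{chijin} and related work apply. Here the inner maximization defining $\Phi$ is over a nonconcave function, so $\Phi$ is not computable, its (sub)gradient is not available via Danskin's theorem, and an ``inexact inner max'' error term cannot be controlled without additional structure. So the envelope construction you sketch would likely stall at exactly the point you identify as the hard step, whereas the paper's ADG inequality sidesteps the envelope entirely. If you adopt the paper's trick --- keep $f$ as the Lyapunov function and prove $\nabla f(\w_t)^{\mathrm T}(\w_{t+1}-\w_t)\le\tilde\nabla f(\w_t)^{\mathrm T}(\w_{t+1}-\w_t)$ from convexity in $\x$ plus the safeguard --- the rest of your telescoping-and-tuning argument matches the paper's intended proof.
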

}
{
Recall that we can recast GDA scheme as the following fixed point iteration.
\begin{equation*}
    \mathbf{w}_{t+1}=G_{\eta}^{\operatorname{(sim)}}\left(\mathbf{w}_{t}\right) \triangleq \mathbf{w}_{t}+\eta V\left(\mathbf{w}_{t}\right) \text { with } \w = \begin{bmatrix} \x \\ \y\end{bmatrix}, V(\mathbf{w})=
    \begin{bmatrix} -\nabla_{\mathbf{x}} f(\mathbf{x}, \mathbf{y}) \\ \nabla_{\mathbf{y}} f(\mathbf{x}, \mathbf{y})\end{bmatrix}
\end{equation*}
Ignoring the stepsize $\eta$ and let $\mathbf{W}_{t}$ and $\mathbf{R}_{t}$ record the first and second order diffrence of recent m iterates:
$$
\mathbf{W}_{t}=\left[\Delta \mathbf{w}_{t-m}, \Delta \mathbf{w}_{t-m+1}, \cdots, \Delta \mathbf{w}_{t-1}\right], \mathbf{R}_{t}=\left[\Delta V_{t-m}, \Delta V_{t-m+1}, \cdots, \Delta V_{t-1}\right]
$$
Similarly as \cite{wei2021stochastic},the Anderson mixing can be decoupled into
$$
\begin{aligned}
\bar{\mathbf{w}}_{t+1} &=\mathbf{w}_{t} - \mathbf{W}_{t} \Gamma_{t}, & & \text { (Projection step) } \\
\bar{\mathbf{w}}_{t+1} &=\mathbf{w}_{t}+\beta_{t} \bar{V}_{t}, & & \text { (Mixing step) }
\end{aligned}
$$
where $\beta_{t}$ is the mixing parameter, and $\bar{V}_{t} = V_t - \mathbf{W}_{t}\Gamma_{t}$ and $\Gamma_{t}$ is solved by 
$$
\Gamma_{t}=\underset{\Gamma \in \mathbb{R}^{m}}{\arg \min }\left\|V_{t}-\mathbf{R_{t}} \Gamma\right\|_{2} + \delta_{t}\|\Gamma\|_2
$$
We want to argue that similar arguments in \cite{wei2021stochastic} can be applied to the problem here. To see why Anderson mixing works for minimax optimization, we assume function $f$ is smooth. Then the hessian matrix for $G_{\eta}^{\operatorname{(sim)}}$ is 
$$
H=\left(\begin{array}{cc}
-\nabla_{\mathbf{x x}}^{2} f & -\nabla_{\mathbf{x y}}^{2} f \\
\nabla_{\mathbf{y x}}^{2} f & \nabla_{\mathbf{y y}}^{2} f
\end{array}\right)
$$
}
{
Notice that in a small neighborhood of $\mathbf{w}_{t+1}$, we have
\begin{equation*}
 \mathbf{R_{t}}  = -H \mathbf{W}_{t}  =\left(\begin{array}{cc}
\nabla_{\mathbf{x x}}^{2} f & \nabla_{\mathbf{x y}}^{2} f \\
-\nabla_{\mathbf{y x}}^{2} f & -\nabla_{\mathbf{y y}}^{2} f
\end{array}\right) \mathbf{W}_{t}
\end{equation*}
Thus $ \|V_{t}-\mathbf{R_{t}} \Gamma\|_{2} \approx \|V_{t}+ H \mathbf{W}_{t} \Gamma\|_{2}$, which is equivalent to solving for a vector $p_t$ such that $Hp_k = V_t$. This is exactly the second order method for the fixed point iteration problem. Also at each step the AM is minimizing the residual, the reason that AM is equivalent to GMRES for linear problem is that this quadratic approximation is exact. Finally, we rewrite AM as the quasi-newton framework as \cite{wei2021stochastic} did.
$\mathbf{w}_{t+1} = \mathbf{w}_{t} + H_{t} V_{t}$ where
$$
\min _{H_{t}}\left\|H_{t}-\beta_{t} I\right\|_{F} \text { subject to } H_{t} \mathbf{R}_{t}=-X_{t}
$$
Finally, with damping parameter, Anderson mixing has the following form
\begin{equation}
\mathbf{W}_{t+1}=\mathbf{W}_{t}+ \beta_{t} V_{t}-\alpha_{t}\left(\mathbf{W}_{t}+\beta_{t} \mathbf{R_{t}}\right) \Gamma_{t}
\end{equation}
}

{
we can also apply the very similar arguments to prove key results in lemma 1, lemma 2 in \cite{wei2021stochastic}. There is also a key difference with \cite{wei2021stochastic}. Here we are considering minimax optimization problem. Thus our gradient is actually $V(\mathbf{w})=\begin{bmatrix} -\nabla_{\mathbf{x}} f(\mathbf{x}, \mathbf{y}) \\ \nabla_{\mathbf{y}} f(\mathbf{x}, \mathbf{y}) \end{bmatrix} $ rather than $
   \nabla f(\mathbf{w})  = \begin{bmatrix} \nabla_{\mathbf{x}} f(\mathbf{x}, \mathbf{y}) \\ \nabla_{\mathbf{y}} f(\mathbf{x}, \mathbf{y})\end{bmatrix}$ This will introduce some difficulty to the dynamics of the fixed pointe iteration. However, noticing that $\|V\| = \|\nabla f(\mathbf{w})\|$ and 
 \begin{equation}
\begin{aligned}
f\left(\mathbf{w}_{t+1}\right) & \leq f\left(\mathbf{w}_{t}\right)+\nabla f\left(\mathbf{w}_{t}\right)^{\mathrm{T}}\left(\mathbf{w}_{t+1}-\mathbf{w}_{t}\right)+\frac{L}{2}\left\|\mathbf{w}_{t+1}-\mathbf{w}_{t}\right\|_{2}^{2} \\
&\leq f\left(\mathbf{w}_{t}\right)+\Tilde{\nabla} f\left(\mathbf{w}_{t}\right)^{\mathrm{T}}\left(\mathbf{w}_{t+1}-\mathbf{w}_{t}\right)+\frac{L}{2}\left\|\mathbf{w}_{t+1}-\mathbf{w}_{t}\right\|_{2}^{2}\\
&=f\left(\mathbf{w}_{t}\right)+\Tilde{\nabla} f\left(\mathbf{w}_{t}\right)^{\mathrm{T}} H_{t} V_{t}+\frac{L}{2}\left\|H_{t} V_{t}\right\|_{2}^{2}
\end{aligned}
\end{equation}
where 
\begin{equation}
    \Tilde{\nabla} f\left(\mathbf{w}_{t}\right) = 
    \begin{bmatrix} -\nabla_{\mathbf{x}} f(\mathbf{x}, \mathbf{y}) \\ \nabla_{\mathbf{y}} f(\mathbf{x}, \mathbf{y}) \end{bmatrix}
\end{equation}
we call this the ascent-descent gradient (ADG) which is the gradient for minimax optimization problem $$\min_{\mathbf{x}\in\mathbb{R}^{d}}\max_{\mathbf{y}\in\mathbb{R}^{d}}f(\mathbf{x},\mathbf{y}).$$
To see why $\nabla f\left(\mathbf{w}_{t}\right)^{\mathrm{T}}\left(\mathbf{w}_{t+1}-\mathbf{w}_{t}\right) \le \Tilde{\nabla} f\left(\mathbf{w}_{t}\right)^{\mathrm{T}}\left(\mathbf{w}_{t+1}-\mathbf{w}_{t}\right)$, we consider their difference
\begin{equation*}
    (\Tilde{\nabla} - \nabla) f\left(\mathbf{w}_{t}\right)^{\mathrm{T}}\left(\mathbf{w}_{t+1}-\mathbf{w}_{t}\right) = -2\nabla_{\mathbf{x}} f(\mathbf{x}_{t}, \mathbf{y}_{t})^{T}(\mathbf{x}_{t+1} - \mathbf{x}_{t}).
\end{equation*}
For fixed $\mathbf{y}_{t}$, $f(\mathbf{x}_{t}, \mathbf{y}_{t+1})$ has the Talyor expansion:
\begin{equation*}
    f(\mathbf{x}_{t+1}, \mathbf{y}_{t}) = f(\mathbf{x}_{t}, \mathbf{y}_{t}) + \nabla_{\mathbf{x}} f(\mathbf{x}_{t}, \mathbf{y}_{t})^{T}(\mathbf{x}_{t+1} - \mathbf{x}_{t}) + (\mathbf{x}_{t+1} - \mathbf{x}_{t})^{T}\nabla_{\mathbf{x}\mathbf{x}} f(\mathbf{x}_{t} + \theta(\mathbf{x}_{t+1} - \mathbf{x}_{t}), \mathbf{y}_{t})(\mathbf{x}_{t+1} - \mathbf{x}_{t})
\end{equation*}
Assuming f is convex w.r.t $\mathbf{x}$ and apply safeguard to ensure $f(\mathbf{x}_{t+1}, \mathbf{y}_{t}) \le f(\mathbf{x}_{t}, \mathbf{y}_{t})$ can guarantee $ (\Tilde{\nabla} - \nabla) f\left(\mathbf{w}_{t}\right)^{\mathrm{T}}\left(\mathbf{w}_{t+1}-\mathbf{w}_{t}\right) \ge 0$. Now applying lemmas in \cite{wei2021stochastic}, we can derive the convergence of our method for general convex-nonconcave function similarly.
}

\section{Additional Experiments}{
\subsection{Comparison with EG with Positive Momentum}
\label{supsec:egpm}
In this section, we include additional comparison between \methodName~ and EG with positive momentum. \methodName~ has two big theoretical advantages over EG with positive momentum. First, convergence of \methodName~ does not require strong assumptions on choices of hyperparamters. Second, \ref{thm:simAA1} and \ref{thm:altaa} provide nonasymptotic guarantees while convergence of EG with positive mometum is asymptotic.
Experimental results are shown in \ref{fig:PMcompare}. It indicates \methodName~ outperforms EG with positive momentum. Finding a good choice of the inner and outer step size of EG and momentum term is hard. For EG with positive momentum, we set the step size of extrapolation step as 1, the step size of update as 0.5, and the positive momentum term as 0.3 after grid search as shown in \ref{fig:pmlr} and \ref{fig:pmmom}. On the other hand, \methodName~ converges fast for different step size without hyperparameter tuning. 
}
\begin{figure*}[ht]
\begin{subfigure}[b]{0.32\textwidth}
\includegraphics[width=0.99\linewidth]{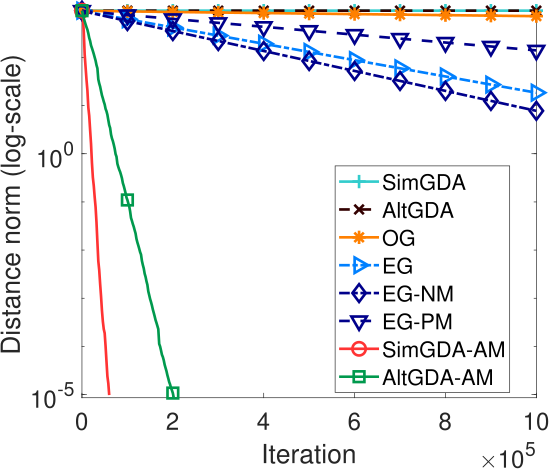}
\subcaption{$n=100, p=10, \eta=1$} 
\label{fig:pmiter}
\end{subfigure}
\begin{subfigure}[b]{0.32\textwidth}
\includegraphics[width=0.99\linewidth]{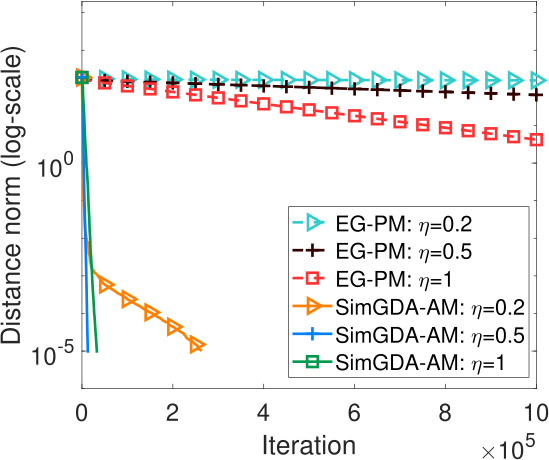}
\subcaption{Effects of step size $\eta$. $\beta$ is fixed as 0.3.} 
\label{fig:pmlr}
\end{subfigure}
\begin{subfigure}[b]{0.32\textwidth}
\includegraphics[width=0.99\linewidth]{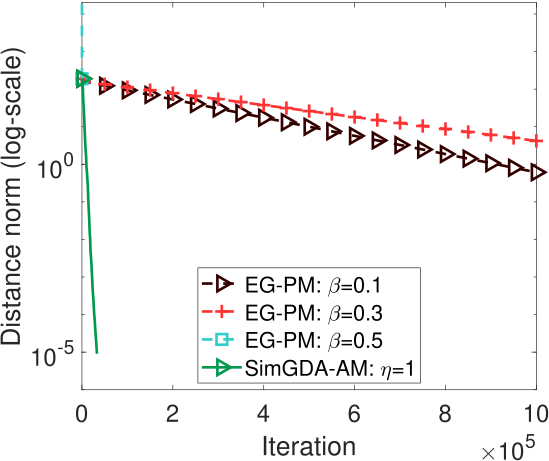}
\subcaption{Effects of momentum term $\beta$. $\eta$ is fixed as 1.} 
\label{fig:pmmom}
\end{subfigure}
\caption{Additional Comparison between \methodName~ and EG with positive momentum}
\label{fig:PMcompare}
\end{figure*}

\subsection{1d Minimax functions}
We begin with investigating the empirical performance of \methodName~ for 6 non-trivial 1d bivariate functions. We set initial points as $(3,3)$ and $m$ as 20 or 5 for all functions. We use optimal learning rates for all methods on each problem. Results are shown in Figure \ref{fig:1d1}, \ref{fig:1d2}, \ref{fig:1d3}, \ref{fig:1d4}, \ref{fig:1d5} and \ref{fig:1d6}. We observe \methodName~ consistently outperforms all baselines and improves convergence. It is worthwhile to mention that the difference between \methodName~ and traditional averaging is twofold. First, traditional averaging does not involve an adaptive averaging scheme and thus blindly converge to $(0, 0)$ for all 1d bivariate functions. In contrast, \methodName~ obtains optimal weights by solving a small linear system on past iterates. Using different weights for each iteration, \methodName~ is able to minimize the residual of past iterates and thus find the solution of a fixed-point iteration. More importantly, averaging does not change the GDA dynamic because averaging generates a new sequence of parameters based on GDA iterates. This means averaging is independent with base training algorithm (GDA here). However, \methodName~ changes the dynamic directly by overwriting the latest iterate. It means Anderson Mixing interacts with GDA, which is another major difference from averaging. 
\begin{figure}[h]
\includegraphics[width=.9\textwidth]{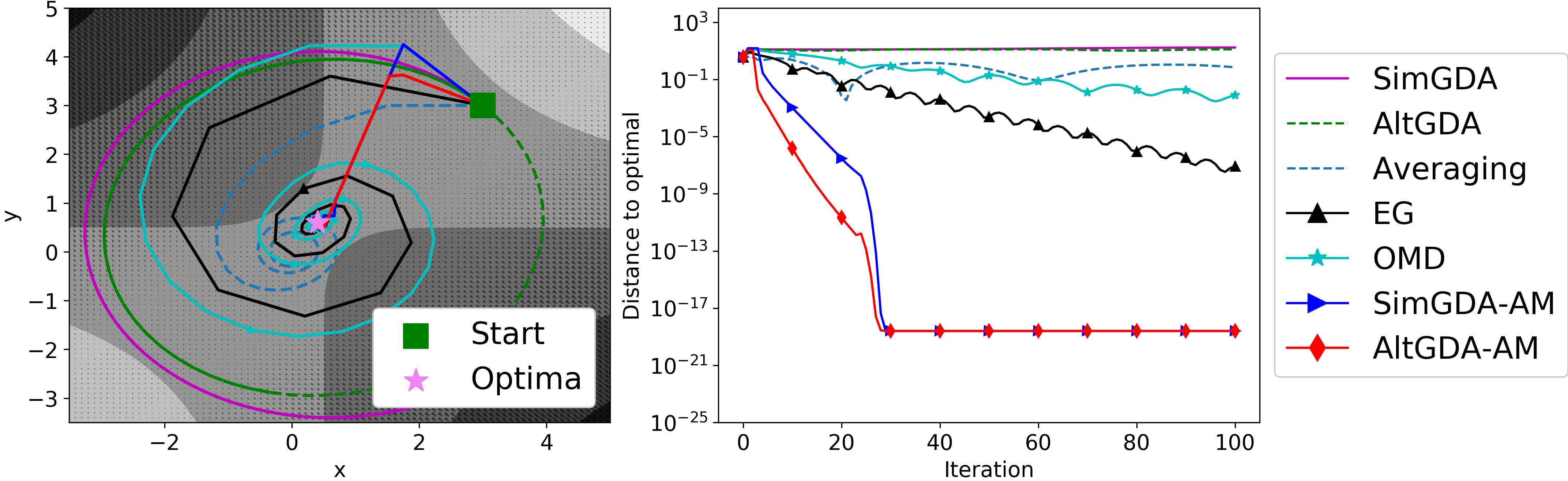}
\caption{$f(x,y) = (x-\frac{1}{2})(y-\frac{1}{2}) + \frac{1}{3}e^{-(x-0.25)^2-(y-0.75)^2)}$. The optima for this function is not $(0,0)$. Because averaging blindly converges to $(0,0)$, it can never find the correct solution. }
\label{fig:1d1}
\end{figure}
\begin{figure}[h]
\includegraphics[width=.9\textwidth]{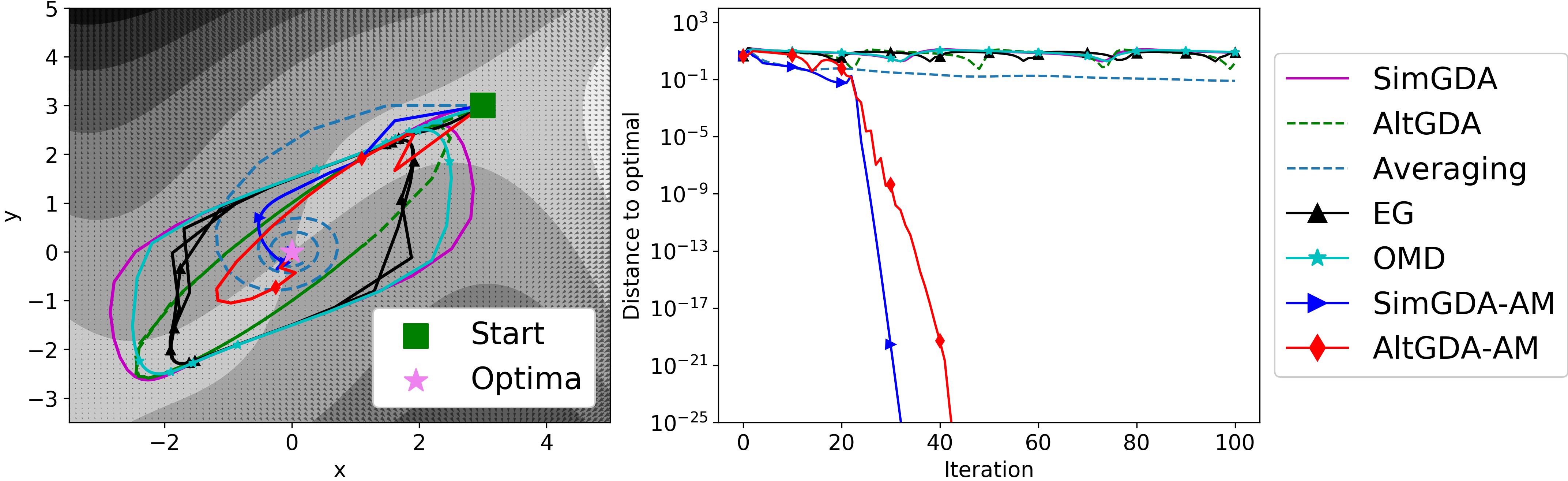}
\caption{$f(x,y)=(4x^2-(y-3x+0.05x^3)^2-0.1y^4)e^{-0.01(x^2+y^2)}$. All baselines except averaging are cyclying around the optima. Averaging is converging slowly.  }
\label{fig:1d2}
\end{figure}
\begin{figure}[h]
\includegraphics[width=.9\textwidth]{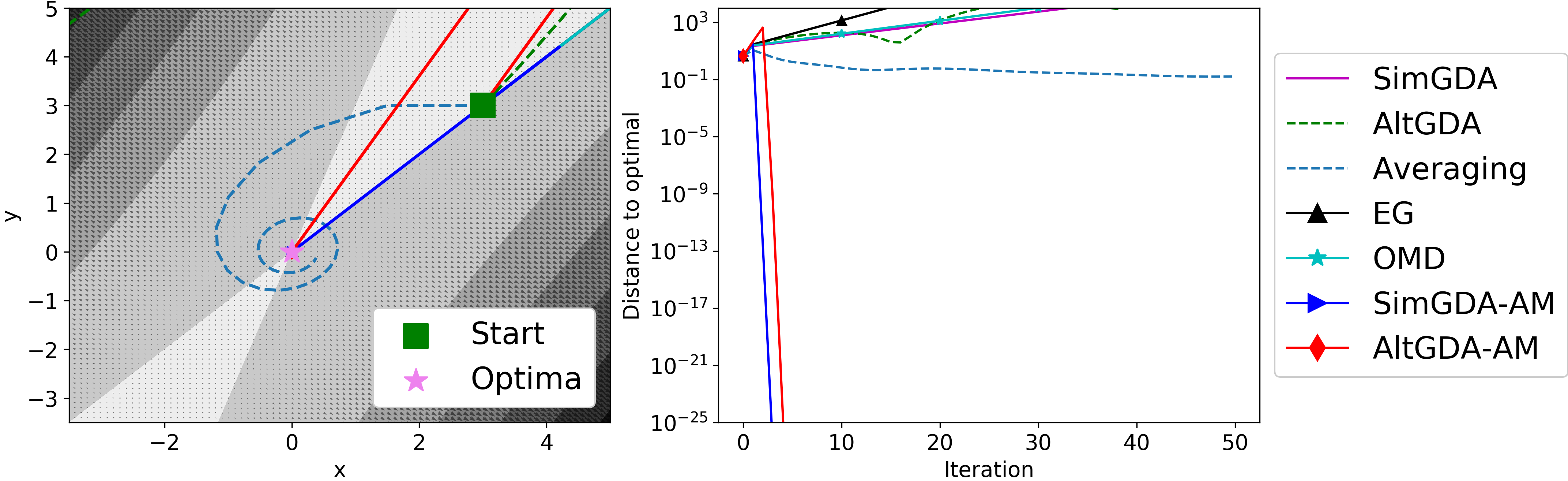}
\caption{$f(x,y)=-3x^2-y^2+4xy$. Baselines tend to diverge. Averaging is converging slowly again because averaging can only blindly converge to $(0,0)$ and the optima for this function is ($0,0$).}
\label{fig:1d3}
\end{figure}
\begin{figure}[h]
\includegraphics[width=.9\textwidth]{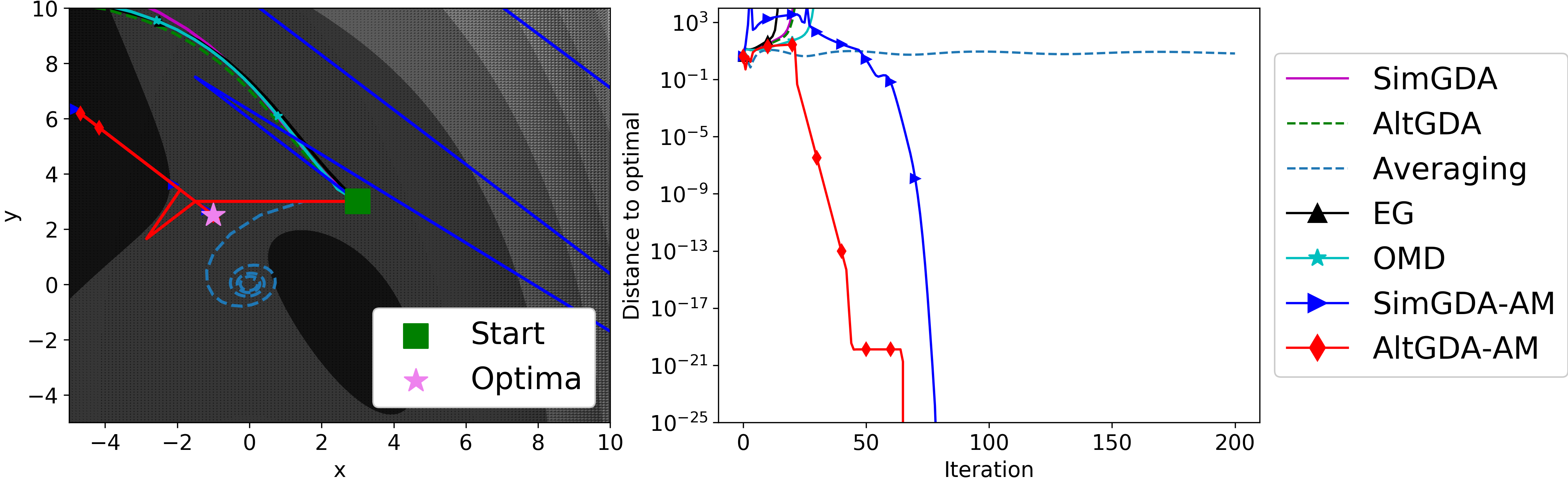}
\caption{$f(x,y)=\frac{1}{3}x^3+y^2+2xy-6x-3y+4$.}
\label{fig:1d4}
\end{figure}
\begin{figure}[h]
\includegraphics[width=.9\textwidth]{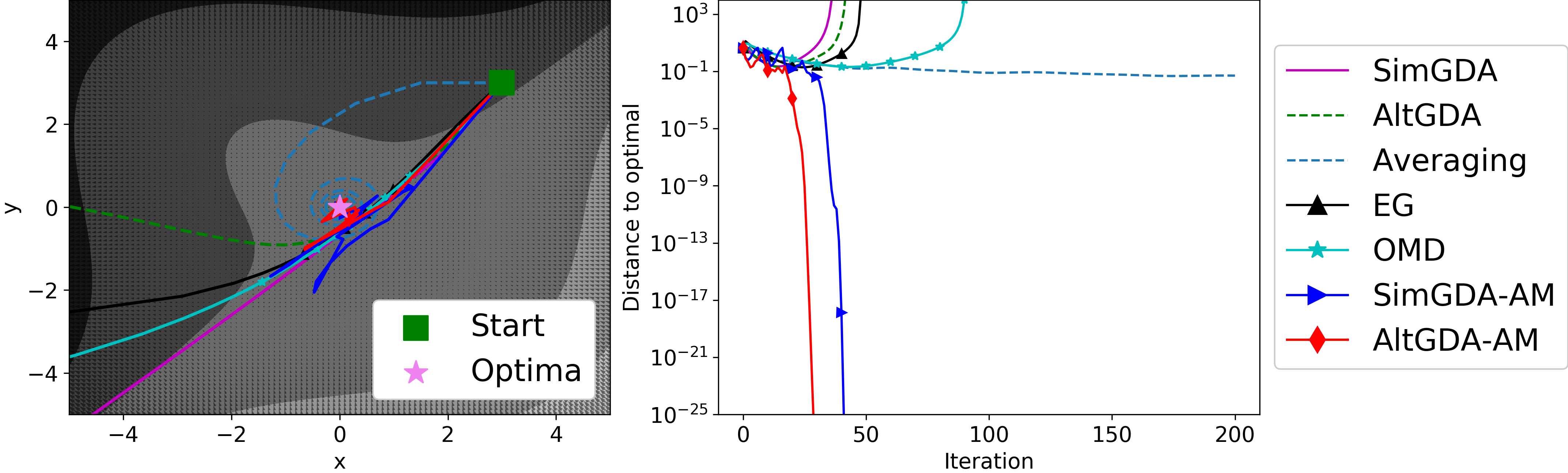}
\caption{$f(x,y)= x^3 - y^3 - 2xy + 6$.}
\label{fig:1d5}
\end{figure}
\begin{figure}[h]
\includegraphics[width=.9\textwidth]{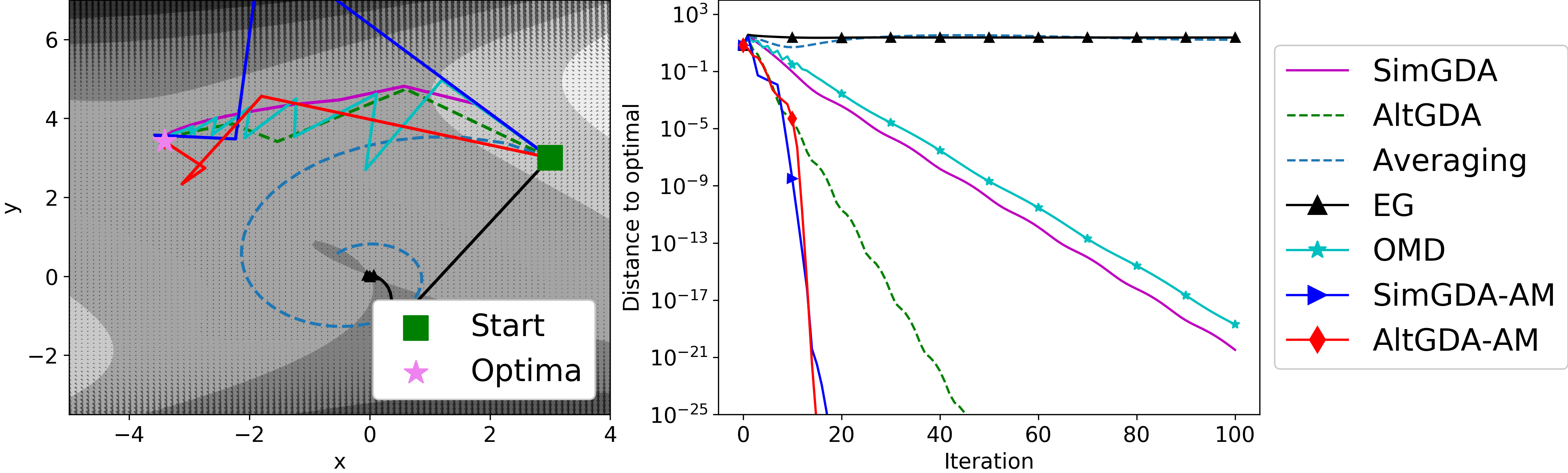}
\caption{$f(x,y)=2x^2+y^2+4xy+\frac{4}{3}y^3-\frac{1}{4}y^4$.}
\label{fig:1d6}
\end{figure}

\subsection{Density estimation}
To test our proposed method, we evaluate our method on two low-dimension density estimation problems, mixture of 25 Gaussians and Swiss roll. For both generator and discriminator,  we use fully connected neural networks with 3 hidden layers and 128 hidden units in each layer. Except for the output layer of discriminator that uses a sigmoid activation, we use tanh-activation for all other layers. We run Adam and \methodName~ for 50000 steps. The learning rate is set as $2\times10^{-4}$ and $\beta_1=0, \beta_2=0.9$ after an extensive grid search, which is close to the maximal possible stepsize under which the methods rarely diverge. Figure \ref{fig:25gauss} and \ref{fig:swissroll} show the output after $\{1K, 10K, 30K, 50K\}$ iterations. It can be seen that our method converges faster to the target distribution offers a improvement over Adam. In addition, we can observe that the generated samples using our method gather around the circle and are less connected with other circles. 


\begin{figure*}[ht]
\begin{subfigure}[b]{0.49\textwidth}
\includegraphics[width=0.99\linewidth]{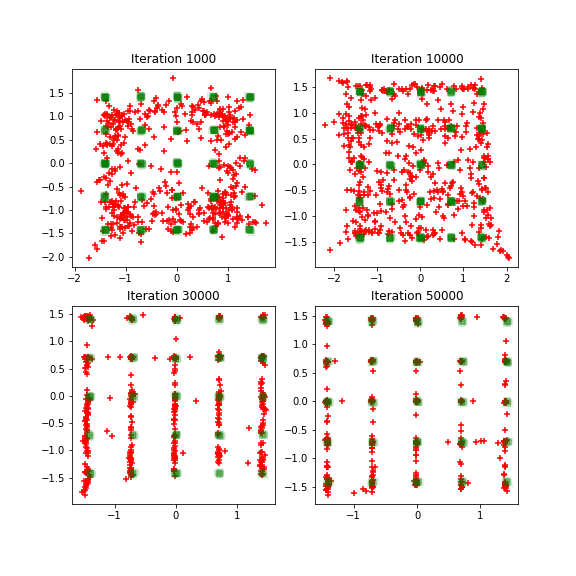}
\subcaption{Adam} 
\label{fig:adam25}
\end{subfigure}
\begin{subfigure}[b]{0.49\textwidth}
\includegraphics[width=0.99\linewidth]{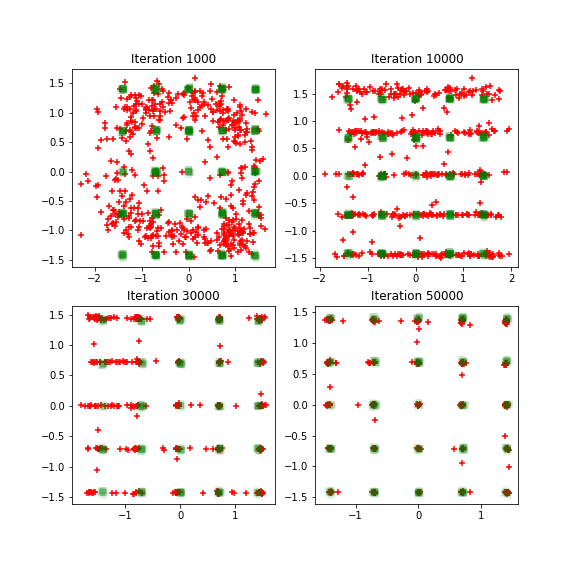}
\subcaption{\methodName~} 
\label{fig:aa25}
\end{subfigure}
\caption{\textbf{25 Gaussians}: Evolution plot of Adam and \methodName~. Green dots are observed points and red dots are generated points.}
\label{fig:25gauss}
\end{figure*}

\begin{figure*}[ht]
\begin{subfigure}[b]{0.49\textwidth}
\includegraphics[width=0.99\linewidth]{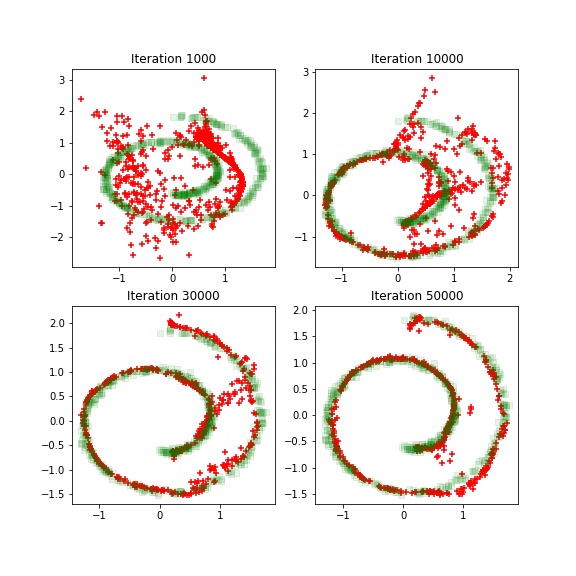}
\subcaption{Adam} 
\label{fig:adamswiss}
\end{subfigure}
\begin{subfigure}[b]{0.49\textwidth}
\includegraphics[width=0.99\linewidth]{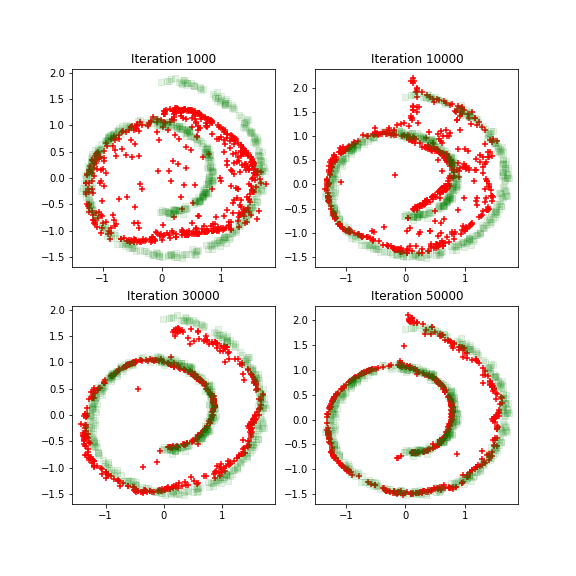}
\subcaption{\methodName~} 
\label{fig:aaswiss}
\end{subfigure}
\caption{\textbf{Swiss roll}: Evolution plot of Adam and \methodName~. Green dots are observed points and red dots are generated points.}
\label{fig:swissroll}
\end{figure*}
{\subsection{Robust Neural Network Training}
\label{supsec:NN}
In this section, we test the effectiveness of \methodName~ by training a robust neural network on MNIST data set against adversarial attacks \citep{madry2019deep, FGSM, PGD} . The optimization formulation is
\begin{equation}
\min _{\mathbf{w}} \sum_{i=1}^{N} \max _{\delta_{i}, \text { s.t. }\left|\delta_{i}\right|_{\infty} \leq \varepsilon} \ell\left(f\left(x_{i}+\delta_{i} ; \mathbf{w}\right), y_{i}\right)
\end{equation}
where $w$ is the parameter of the neural network, the pair $(x_i
, y_i)$ denotes the $i$-th data point, and $\delta_i$ is the perturbation added to data point $i$.
The accuracy of our formulation against popular attacks, FGSM \citep{FGSM} and PGD \citep{PGD}, are summarized in Table \ref{tab:nn}.. Since solving such problem is computationally challenging, \cite{iterative} proposed an approximation of the above optimization problem with a new objective
function as the following nonconvex-concave problem:
\begin{equation}
\min _{\mathbf{w}} \sum_{i=1}^{N} \max _{\mathbf{t} \in \mathcal{T}} \sum_{j=0}^{9} t_{j} \ell\left(f\left(x_{i j}^{K} ; \mathbf{w}\right), y_{i}\right), \mathcal{T}=\left\{\left(t_{1}, \cdots, t_{m}\right) \mid \sum_{i=1}^{m} t_{i}=1, t_{i} \geq 0\right\}
\end{equation}
where $K$ is a parameter in the approximation, and $x^
K_{ij}$ is an approximated attack on sample $x_i$ by
changing the output of the network to label $j$. We use the public available implementation \citep{iterative} \footnote{{https://github.com/optimization-for-data-driven-science/Robust-NN-Training}}. We apply our algorithm on top of \citep{iterative} and compare our results ($p=50$) with \citep{madry2019deep,MJtradeoff, zjw,iterative}. Results are summarized in table \ref{tab:nn}. We can observe that \methodName~
leads to a comparable or slightly better performance to the other methods.  In addition, \methodName~ does not exhibit a significant drop in accuracy when $\epsilon$ is larger and this suggests the learned model is more robust.
}
\begin{table}
{
\begin{tabular}{cccccccc}
\hline & Natural & \multicolumn{4}{c}{ FGSM $L_{\infty}$} & \multicolumn{2}{c}{$\mathrm{PGD}^{40} L_{\infty}$} \\
\cline { 3 - 8 } & & $\varepsilon=0.2$ & $\varepsilon=0.3$ & $\varepsilon=0.4$ & $\varepsilon=0.2$ & $\varepsilon=0.3$ & $\varepsilon=0.4$ \\
\hline \cite{madry2019deep} & $98.58 \%$ & $96.09 \%$ & $94.82 \%$ & $89.84 \%$ & $94.64 \%$ & $91.41 \%$ & $78.67 \%$ \\
{Trade: $\varepsilon=0.35$} & $97.37 \%$ & $95.47 \%$ & $94.86 \%$ & $79.04 \%$ & $94.41 \%$ & $92.69 \%$ & $85.74 \%$ \\
{Trade: $\varepsilon=0.40$} & $97.21 \%$ & $96.19 \%$ & $96.17 \%$ & $96.14 \%$ & $95.01 \%$ & $94.36 \%$ & $94.11 \%$ \\
{\cite{iterative}} & $98.20 \%$ & $97.04 \%$ & $96.66 \%$ & ${9 6 . 2 3 \%}$ & $96.00 \%$ & $95.17 \%$ & $94.22 \%$ \\
{\cite{zjw}} & $\mathbf{98.89\%}$ & $\mathbf{9 7 . 8 7 \%}$ & ${9 7 . 2 3 \%}$ & $95.81 \%$ & $\mathbf{9 6 . 7 1 \%}$ & ${9 5 . 6 2 \%}$ & ${9 4 . 5 1 \%}$ \\
{\methodName~} & $98.61 \%$ & $97.75\%$ & $\mathbf{97.74\%}$ & $\mathbf{97.75\%}$ & ${9 6 .47 \%}$ & $\mathbf{95.91 \%}$ & $\mathbf{95.41 \%}$ \\
\hline
\end{tabular}
}
\caption{Test accuracies under FGSM and PGD attack. Trade refers to \cite{MJtradeoff}.} 
\label{tab:nn}
\end{table}

\subsection{Image Generation}
In this section, we provide additional experimental results that are not given in Section \ref{sec:exp}. Figure \ref{fig:cnnIS} and \ref{fig:CNNFID} show the Inception Score for CIFAR10 using WGAN-GP and SNGAN.  It can be observed that our method consistently performs better than Adam and EG during training. Further, on CIFAR-10 using WGAN-GP and SNGAN, \methodName~is slightly slower than Adam (about 110-115$\%$ computational time), but significantly faster than EG (about 65-75$\%$ computational time). 

\begin{figure*}[ht]
\begin{subfigure}[b]{0.49\textwidth}
\includegraphics[width=0.99\linewidth]{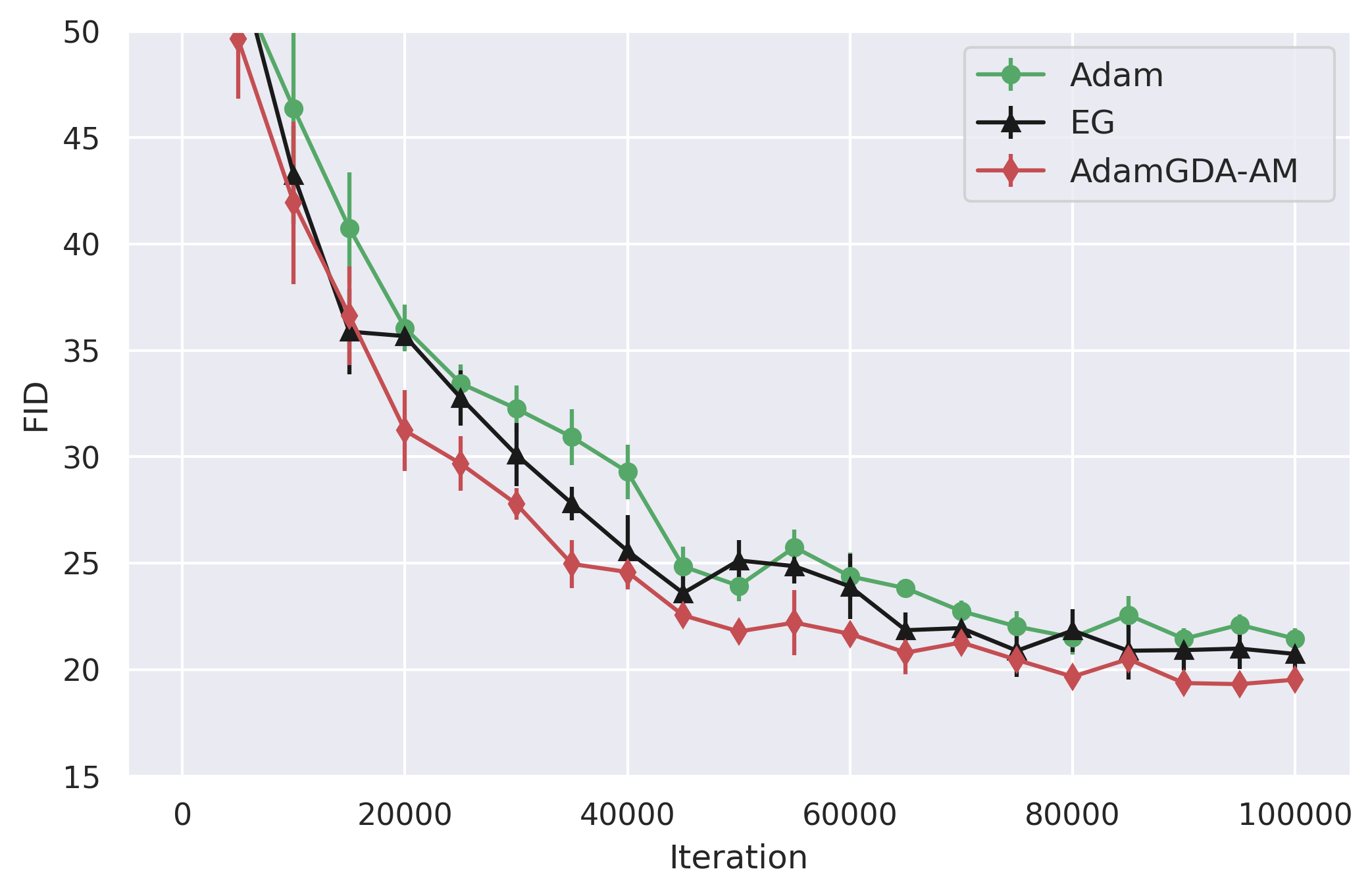}
\subcaption{WGAN-GP (ResNet) } 
\label{fig:resIS}
\end{subfigure}
\begin{subfigure}[b]{0.49\textwidth}
\includegraphics[width=0.99\linewidth]{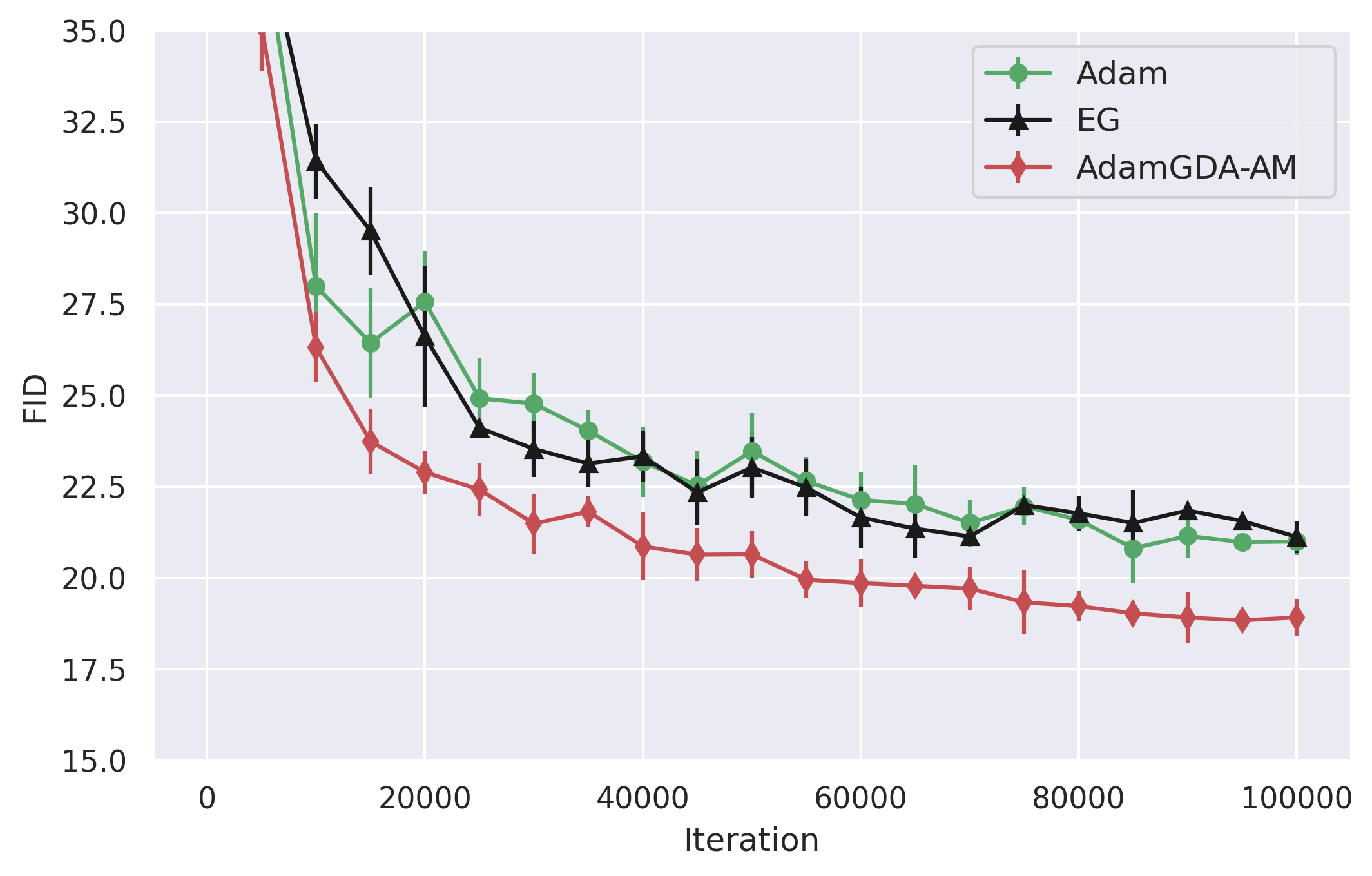}
\subcaption{SNGAN (ResNet)} 
\label{fig:resFID}
\end{subfigure}
\caption{FID (lower or $\downarrow$ is better) for CIFAR 10}
\label{fig:SNGANcifar}
\end{figure*}

\begin{figure*}[ht]
\begin{subfigure}[b]{0.33\textwidth}
\includegraphics[width=0.99\linewidth]{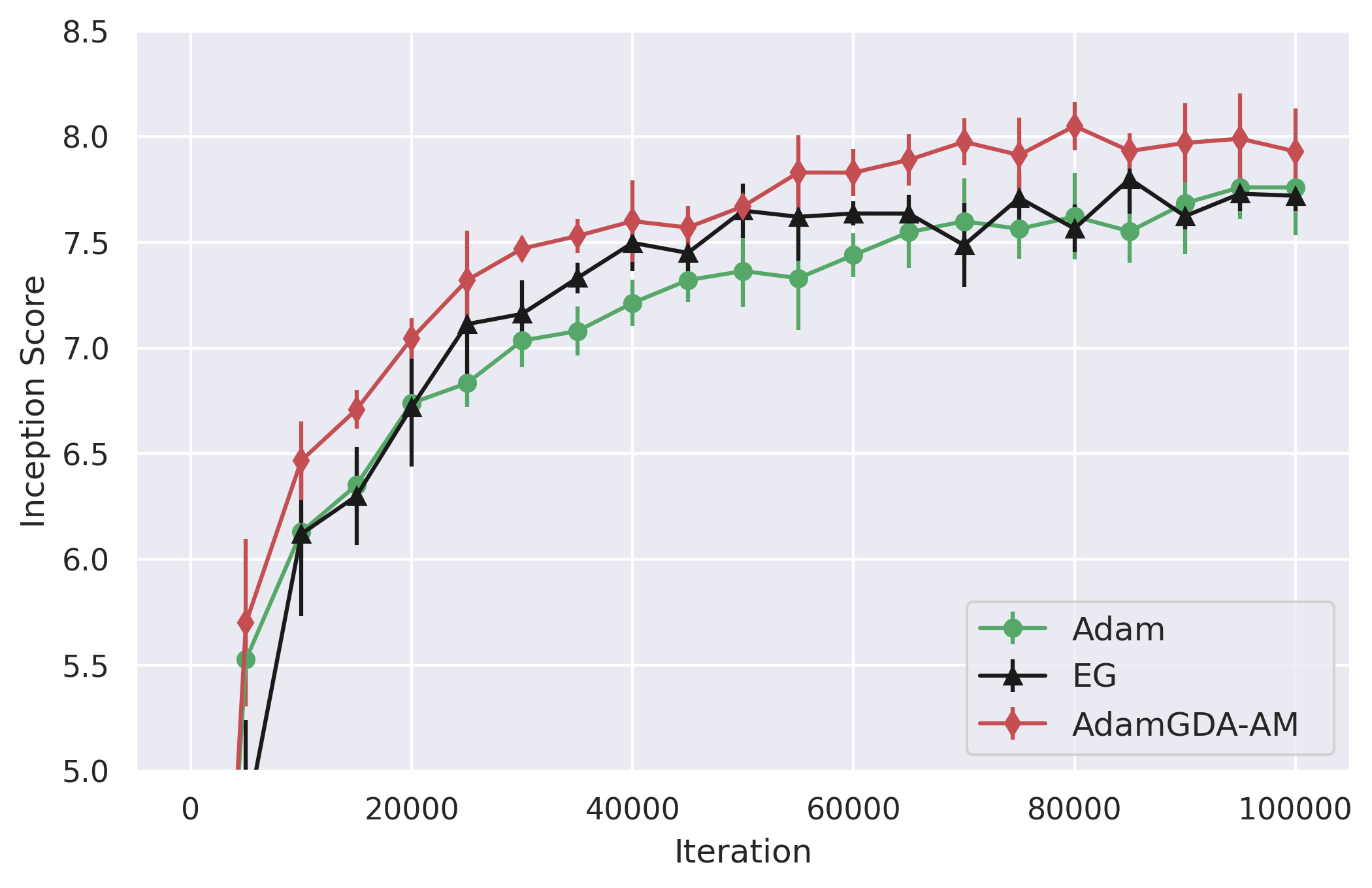}
\subcaption{IS for CIFAR10} 
\label{fig:cnnIS}
\end{subfigure}
\begin{subfigure}[b]{0.33\textwidth}
\includegraphics[width=0.99\linewidth]{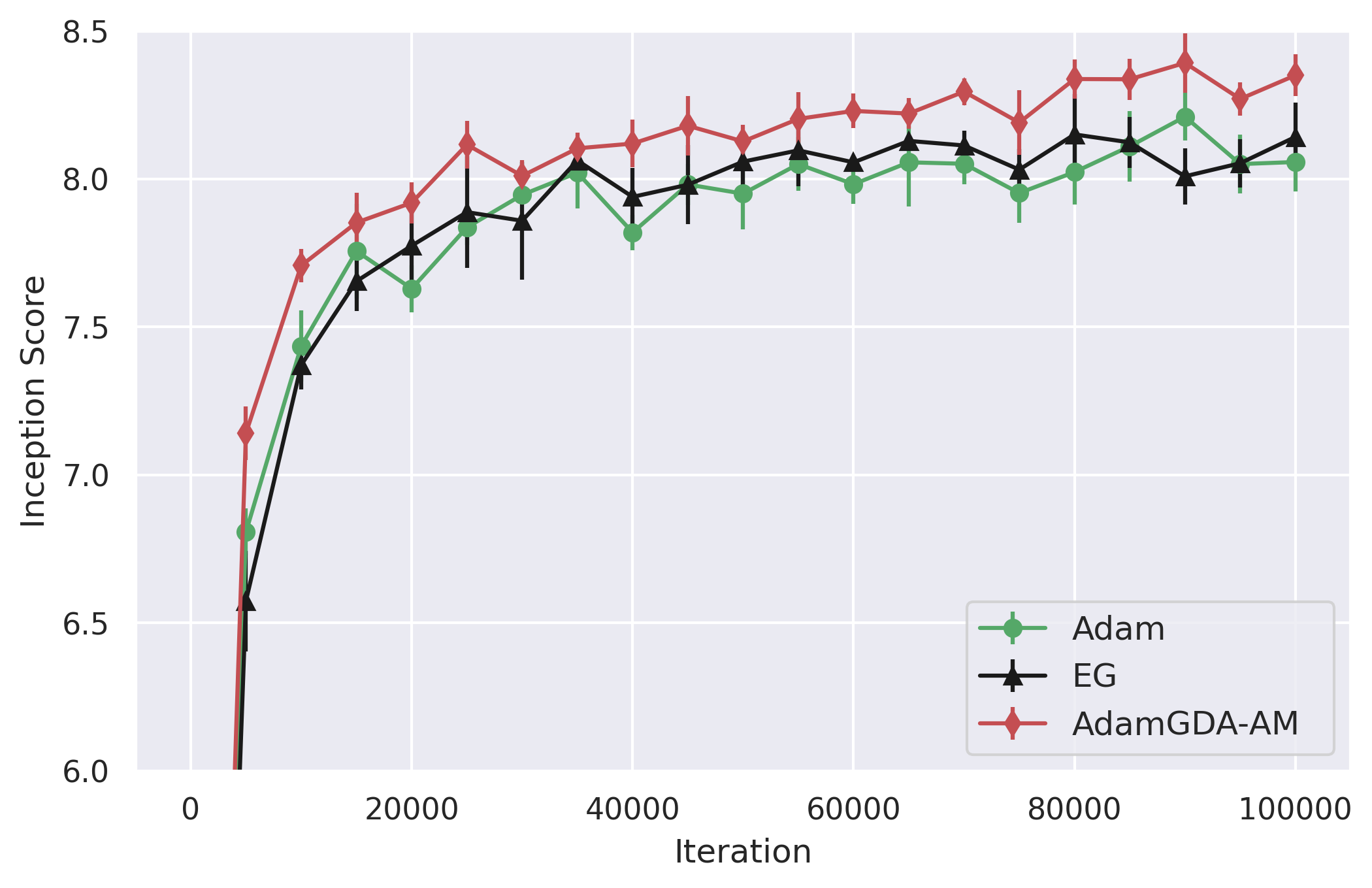}
\subcaption{IS for CIFAR10} 
\label{fig:CNNFID}
\end{subfigure}
\begin{subfigure}[b]{0.33\textwidth}
\includegraphics[width=0.99\linewidth]{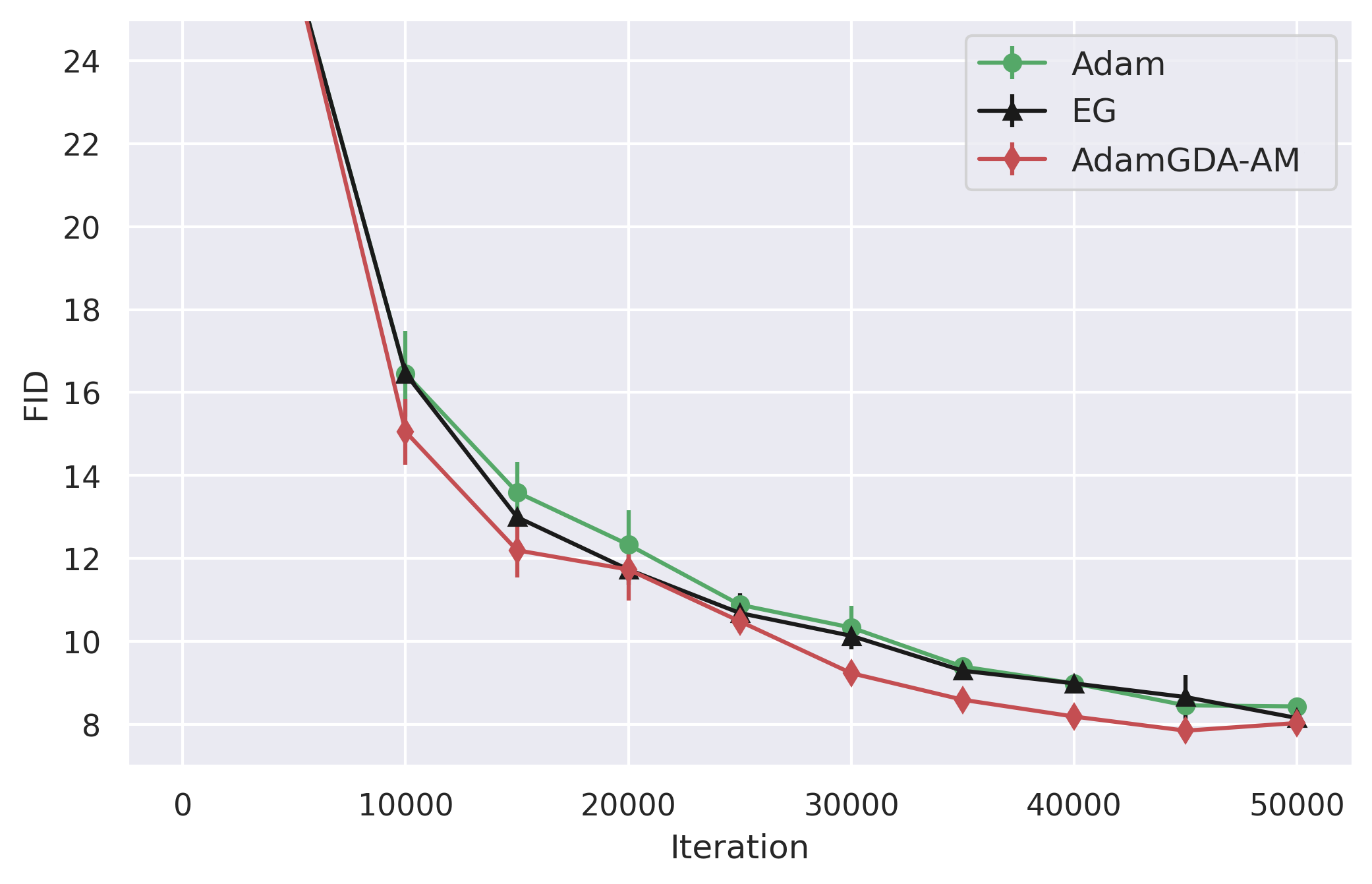}
\subcaption{FID CelebA} 
\label{fig:CelebFID}
\end{subfigure}
\caption{\textbf{Left:} IS for CIFAR10 using WGANGP. \textbf{Middle:} IS for CIFAR10 using SNGAN. \textbf{Right:} FID for CelebA using WGANGP.}
\label{fig:WGANGPcifar}
\end{figure*}

\begin{figure*}[ht]
\begin{subfigure}[b]{0.49\textwidth}
\includegraphics[width=0.99\linewidth]{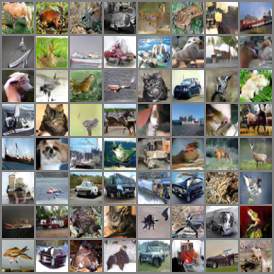}
\subcaption{Generated images for CIFAR10} 
\label{fig:cifar}
\end{subfigure}
\begin{subfigure}[b]{0.49\textwidth}
\includegraphics[width=0.99\linewidth]{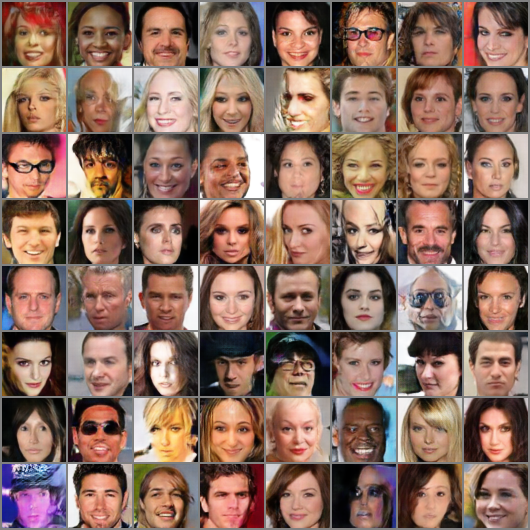}
\subcaption{Generated images for CelebA} 
\label{fig:celeb}
\end{subfigure}
\caption{Generated Images for CIFAR10 and CelebA using WGAN-GP(ResNet)}
\label{fig:generated}
\end{figure*}

\subsection{Details on the experiments}
For our experiments, we used the PyTorch \footnote{https://pytorch.org/} deep learning framework. Experiments were run one NVIDIA V100 GPU. The residual network architecture for generator and discriminator are summarized in Table \ref{tab:res} and \ref{tab:res2}. We use a WGAN-GP loss, with gradient penalty $\lambda=10$. When using the gradient penalty (WGAN-GP), we remove the batch normalization layers in the discriminator. When using SNGAN, we replace the batch normalization layers with spectral normalization. Hyperparamters of Adam are selected after grid search. We use a learning rate of $2 \times 10^{-4}$ and batch size of $64$. For table size of \methodName~, we set it as 120 for CIFAR10 and 150 for CelebA.  We set $\beta_1 = 0.0$ and $\beta_2 = 0.9$ as we find it gives us better models than default settings. 


\begin{table}[htb!]
\caption{ResNet architecture used for our CIFAR-10 experiments. }
\label{tab:res}
\def\arraystretch{1.2}\tabcolsep=1pt
\begin{center}
\begin{tabular}{c} 
\toprule
Generator \\ \midrule
$\text{ Input: } z \in \mathbb{R}^{128} \sim \mathcal{N}(0, I)$  \\ 
Linear $128 \rightarrow 256 \times 4 \times 4 $  \\ ResBlock $128 \rightarrow 128 $ \\
ResBlock $256 \rightarrow 256 $ \\
ResBlock $256 \rightarrow 256 $ \\
Batch Normalization \\
ReLu \\

transposed conv. (256, kernel:$3 \times 3$, stride:1, pad: 1 \\
$\tanh (\cdot)$\\
\bottomrule 
\\
\\
\toprule
Discriminator \\ \midrule
$\text{ Input: } x \in \mathbb{R}^{3 \times 32 \times 32}$  \\ 
Linear $128 \rightarrow 128 \times 4 \times 4 $  \\ ResBlock $128 \rightarrow 128 $ \\
ResBlock $128 \rightarrow 128 $ \\
ResBlock $128 \rightarrow 128 $ \\
Linear $128 \rightarrow 1$ \\
\bottomrule 
\end{tabular}
\end{center}
\end{table}

\begin{table}[htb!]
\caption{ResNet architecture used for our CelebA ($64 \times 64$) experiments. }
\label{tab:res2}
\def\arraystretch{1.2}\tabcolsep=1pt
\begin{center}
\begin{tabular}{c} 
\toprule
Generator \\ \midrule
$\text{ Input: } z \in \mathbb{R}^{128} \sim \mathcal{N}(0, I)$  \\ 
Linear $128 \rightarrow 512 \times 8 \times 8 $  \\ ResBlock $512 \rightarrow 256 $ \\
ResBlock $256 \rightarrow 128 $ \\
ResBlock $128 \rightarrow 64 $ \\
Batch Normalization \\
ReLu \\

transposed conv. (64, kernel:$3 \times 3$, stride:1, pad: 1 \\
$\tanh (\cdot)$\\
\bottomrule 
\\
\\
\toprule
Discriminator \\ \midrule
$\text{ Input: } x \in \mathbb{R}^{3 \times 64 \times 64}$  \\ 
Linear $128 \rightarrow 128 \times 4 \times 4 $  \\ ResBlock $128 \rightarrow 128 $ \\
ResBlock $128 \rightarrow 256 $ \\
ResBlock $256 \rightarrow 512 $ \\
Linear $512 \rightarrow 1$ \\
\bottomrule 
\end{tabular}
\end{center}
\end{table}

\end{document}